\documentclass{article}

\usepackage[numbers,sort&compress]{natbib}
\setcitestyle{square,comma}

\usepackage[main, final]{neurips_2026}

\usepackage[utf8]{inputenc} 
\usepackage[T1]{fontenc}    
\usepackage{hyperref}       
\usepackage{url}            
\usepackage{booktabs}       
\usepackage{amsfonts}       
\usepackage{nicefrac}       
\usepackage{microtype}      
\usepackage{xcolor}         
\usepackage{wrapfig}

\usepackage{graphicx}
\usepackage{subcaption}
\usepackage{multirow}
\usepackage{amsmath}
\usepackage{amssymb}
\usepackage{mathtools}
\usepackage{amsthm}
\usepackage{enumitem}
\usepackage{thm-restate}
\usepackage{cleveref}
\usepackage{algorithm}
\usepackage{algorithmic}
\usepackage{soul}
\usepackage{titletoc}

\hypersetup{
  colorlinks=true,
  citecolor=blue,
  linkcolor=blue,
  urlcolor=blue
}

\theoremstyle{plain}
\newtheorem{theorem}{Theorem}[section]
\newtheorem{proposition}[theorem]{Proposition}
\newtheorem{lemma}[theorem]{Lemma}

\theoremstyle{definition}
\newtheorem{definition}[theorem]{Definition}
\newtheorem{assumption}[theorem]{Assumption}
\theoremstyle{remark}

\title{Stability and Diversity of Networked Self-Consuming Generative Ecosystems}

\author{%
Xiukun Wei$^1$, \quad Yang Zhang$^1$, \quad Xueru Zhang$^1$\\
$^1$The Ohio State University\\
\texttt{\{wei.1418, zhang.15751, zhang.12807\}@osu.edu}\\
}
\begin{document}

\maketitle

\begin{abstract}
    The widespread deployment of generative AI has made it increasingly difficult to distinguish synthetic content from real data. Consequently, synthetic data is inevitably incorporated into the training pipelines of future model generations, forming a \textit{self-consuming training loop}. Prior work has studied the effects of such recursive self-consuming training, but analyses have largely been limited to \textit{isolated} models, where a model consumes only its own synthetic data, or to simplified interactions between two models. 
    This paper takes a first step toward understanding \textit{networked} self-consuming generative models, in which multiple models consume synthetic data generated by one another through complex interaction pathways. We introduce a theoretical framework representing models as nodes in a directed, weighted graph, with edge weights governing the flow of synthetic data among models. Using this framework, we analyze the long-term behavior of networked models under retraining dynamics, establishing conditions for convergence and characterizing the resulting fixed points. We further investigate how the system's long-term stability and diversity are shaped by each model’s access to real data, cross-model data consumption, and the structure of the interaction graph.
\end{abstract}

\section{Introduction}
The proliferation of generative models has fundamentally transformed the nature of data used for model training. As synthetic data increasingly permeates public repositories, online platforms, and downstream training pipelines \citep{SyntheticData}, maintaining datasets that are entirely real or purely human-generated is becoming increasingly challenging. This has motivated a rapidly growing line of research on \textit{self-consuming models}, in which models are iteratively updated using mixtures of real and synthetic data generated by the models themselves. Prior work has shown that models trained under such self-consuming loops often exhibit distributional drift~\citep{AICollapse,TheoreticalPerspective,LLMDistributionShift,StatisticalAnalysisLM}, collapse~\citep{Satbility,CollapseInevitable,GoMad}, loss of tail coverage~\citep{Tails,CurseofRecursion}, and  potentially bias amplification~\citep{amplification,FairnessFeedback,wang2026observations,xie2024automating}.

\begin{figure}[t]
\centering
\includegraphics[width=\textwidth]{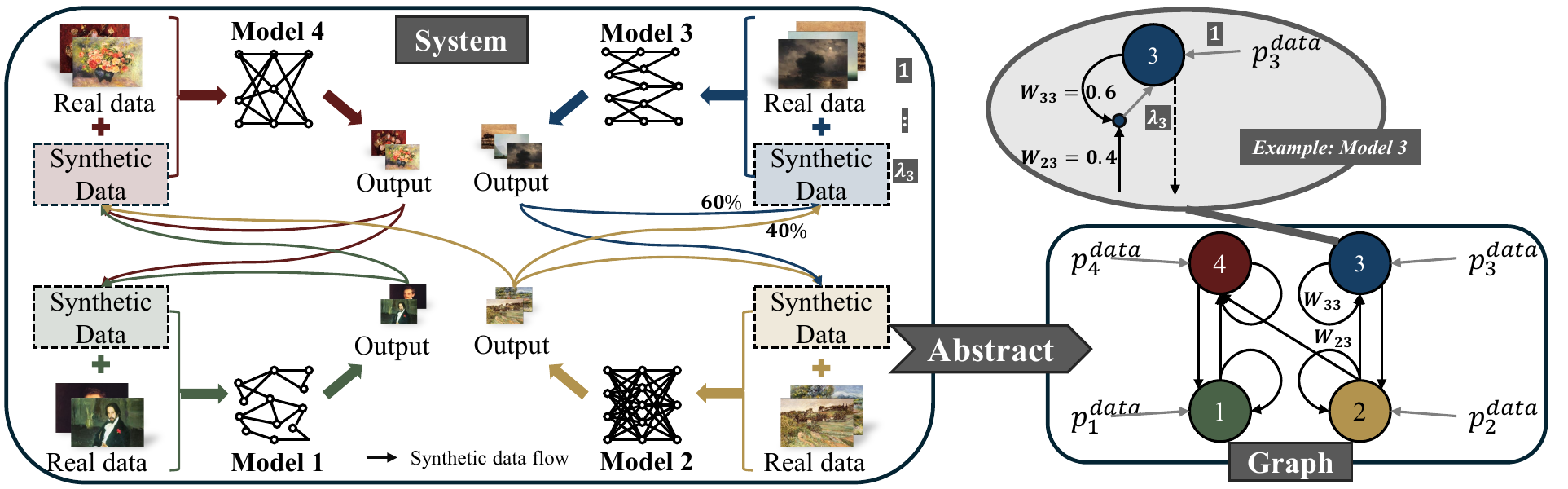}
\caption{From networked self-consuming models to graph abstraction. (1) Left: A generative ecosystem in which each model is trained on its own real-data distribution and a weighted mixture of synthetic data produced by other models. (2) Right: Abstraction of the system as a directed, weighted interaction graph that captures effective synthetic-data propagation among models. (3) Top right:  An example local retraining rule, where \textit{Model~$3$} is trained at iteration $t$ on
$p^{\mathrm{data}}_3 + \lambda_3 \sum_{j=1}^K w_{3j}p_{\boldsymbol{\theta}_j^{t-1}}$.}
\vskip -0.08in
\label{Fig:framework}
\end{figure}

However, existing work on self-consuming models has largely focused on \textbf{isolated} models, where a single model is trained independently and consumes only synthetic data generated by itself. In practice, real-world generative ecosystems are inherently multi-model and interactive: online platforms are populated with outputs from many models, and downstream training pipelines inevitably incorporate synthetic samples produced by peer models~\citep{AreWe}. For example, in the domain of large language models (LLMs), \textit{Orca} is fine-tuned using large-scale synthetic instruction-response pairs generated by ChatGPT and GPT-4~\citep{Orca}. Similarly, \textit{Phi-3} relies heavily on synthetic data generated by larger LLMs to achieve strong performance despite its smaller parameter scale~\citep{Phi3}. This multi-model interaction is also central to modern text-to-image systems: \textit{DALL·E  3} employs a dedicated captioning model to rewrite noisy prompts into high-quality synthetic captions, improving text-image alignment during training~\citep{Dalle3}; \textit{PixArt-$\alpha$} adopts a similar strategy by utilizing LLaVA to refine its training data~\citep{Pixart}.

Although a few recent works have recognized cross-model interactions in self-consuming training~\citep{CoEvolving,Whathappens,WhenandHow}, showing that such interactions can accelerate homogenization or model collapse, these studies are still highly limited. Most are restricted to \textit{two-model} coupled systems~\citep{CoEvolving,WhenandHow} or rely on highly \textit{simplified interactions} and \textit{linearized dynamics assumptions}~\citep{Whathappens}, leaving the behavior of large, heterogeneous networks of interacting self-consuming models largely unexplored.

To fully understand real-world generative ecosystems, it is necessary to move beyond these simplified settings and explicitly account for the complex, networked interactions among many models. Unfortunately, existing analyses fail to explicitly capture the \textit{networked, directed, and weighted} flows of synthetic data among heterogeneous models, nor do they characterize the asymptotic impact of graph structure on stability and diversity. As a result, a general theoretical framework is still lacking, one that (i) accommodates arbitrary interaction graphs, 
(ii) enables systematic analysis of how interaction topology and synthetic-data mixing jointly affect system stability, and (iii) enables quantification of how network connectivity and mixing strength influence system diversity.

In this paper, we initiate a theoretical study of \textbf{networked self-consuming models} by modeling the system as a directed interaction graph (Fig.~\ref{Fig:framework}). Each node represents a generative model, and weighted edges capture the probability that synthetic data produced by one model is consumed by another. Within this framework, we analyze the behavior of networked self-consuming models by characterizing (i) the evolution of networked models under self-consuming iterative training; (ii) the long-term stability of retraining dynamics; (iii) the effects of each model’s access to real data, cross-model data consumption, and graph topology on the system’s long-term stability and diversity.  

Specifically, we show that under standard regularity conditions for iterative retraining, the retraining dynamics can be locally analyzed by the Jacobian around any fixed point of the system. Assuming such a fixed point exists, we derive sufficient conditions for the stability and convergence of the fixed points that separate \textit{local amplification factors}—determined by each model’s synthetic-data mixing weight and local curvature—from \textit{global propagation effects} governed by the interaction graph. This 
condition strictly generalizes the stability threshold of prior single-model analyses~\citep{Satbility, curation}, accommodating arbitrary directed and weighted graphs.

We also study how the system’s long-term diversity is affected by cross-model data consumption, especially when models have access to varying real data distributions. We first derive an upper bound on system diversity at the fixed point, showing that synthetic-data consumption among models induces a \textit{topology-dependent smoothing effect} that contracts inter-model heterogeneity.  More related work is discussed in Appx.~\ref{sec:relatedwork}, and our contributions are summarized below.

\begin{itemize}[leftmargin=*,topsep=0cm,itemsep=-0.0cm]
\item 
    \textbf{A unified theoretical framework for networked self-consuming models}. We introduce a framework that captures directed, weighted interactions among heterogeneous models with arbitrary cross-model synthetic-data flows (Sec.~\ref{sec:problem}).
    
    \item \textbf{Graph-structured convergence \& stability analysis.} We derive explicit conditions for convergence and stability of the networked self-consuming models, showing that how the synthetic-data mixing weight and the interaction graph topology jointly affect long-term stability (Sec.~\ref{sec:HomoData}).

    \item \textbf{Topology-dependent diversity analysis.} We characterize the long-term diversity at the fixed point and show that synthetic-data consumption induces a topology-dependent smoothing effect that contracts inter-model heterogeneity (Sec.~\ref{sec:HeteroData}).

    \item \textbf{Empirical validation.} We validate our theoretical predictions on a synthetic 8-Gaussian dataset and a real image dataset CIFAR-10 \footnote{Please see our code at \url{https://github.com/osu-srml/Gen-System}}. Across multiple generative model classes  and a range of interaction topologies, the experimental observations align with our analysis (Sec.~\ref{sec:exp}).
\end{itemize}

\section{Problem Formulation}
\label{sec:problem}
Consider a set $\mathcal{V} = \{1, \dots, K\}$ of generative models. Each model $i\in\mathcal{V}$ has access to a real data distribution $p_{i}^{\mathrm{data}}(x)$ over a shared data space $\mathcal{X}$ and is parameterized by $\boldsymbol{\theta}_i \in \Theta_i$, inducing a generated data distribution
$p_{\boldsymbol{\theta}_i}$. We allow heterogeneous model architectures by permitting $\Theta_i\neq\Theta_j$ for $i\neq j$ and heterogeneous real data where $p_{i}^{\mathrm{data}}$ varies across models.

\textbf{Data flow among networked models.} If models are trained in isolation using only their own real data sources, each model $i$ converges to its own maximum likelihood estimator (MLE), denoted by $ {\boldsymbol{\theta}}_i^{\star}:= \arg\max_{\boldsymbol{\theta}} \mathbb{E}_{x \sim p^\mathrm{data}_{i}} [\log p_{\boldsymbol{\theta}}(x)]$. We refer to the collection of these local optima, $\Theta^{\star} = ({\boldsymbol{\theta}}_1^{\star}, \dots, {\boldsymbol{\theta}}_K^{\star})$, as the \emph{uncoupled solution}.

However, as synthetic data proliferates and becomes difficult to distinguish from real data, it is often inevitably used to train future generations of models, including both the originating model itself and other models. We formalize such interactions using a directed, weighted graph $(\mathcal{V},\mathcal{E}, \mathbf{W})$, where each node in $\mathcal{V}$ represents a generative model. A directed edge $j \to i\in \mathcal{E}$ indicates that synthetic data generated by model 
$j$ may be used to retrain model $i$. We refer to models with edges pointing to 
$i$ as its \textit{parents}. The adjacency matrix $\mathbf{W} = [w_{ij}] \in \mathbb{R}^{K \times K}$ assigns a weight $w_{ij} \ge 0$ to each edge $j\to i$, representing the probability that any given synthetic sample consumed by model $i$ during retraining is generated by model $j$. Consequently, $\mathbf{W}$ is \emph{row-stochastic}, i.e., $\sum_{j=1}^K w_{ij} = 1$ for each $i \in \mathcal{V}$. \emph{Self-loops} ($w_{ii}\ge 0$) are allowed, meaning that a model may reuse its own generated data.

\textbf{Iterative training loop.} As the environment evolves, models are updated repeatedly to maintain high performance. We formalize this iterative training process as follows. At $t=0$, each model is initialized from a pre-trained state obtained on its own real data $p_i^{\mathrm{data}}$. In subsequent rounds ($t\geq 1$), each model $i$ updates its parameters ${\boldsymbol{\theta}}_i^{t}$ using a mixture of its real data and synthetic data generated by itself and its parents in the previous round, i.e.,
\begin{equation}
\label{eq:update_rule}
\boldsymbol{\theta}_i^{t}
\in ~\arg\max_{\boldsymbol{\theta} \in \Theta_i}~\mathbb{E}_{x \sim p_{i}^{\mathrm{data}}} [\log p_{\boldsymbol{\theta}}(x)] + \lambda_i \sum_{j=1}^K w_{ij} \mathbb{E}_{x \sim p_{\boldsymbol{\theta}_j^{t-1}}}[\log p_{\boldsymbol{\theta}}(x)] 
\end{equation}
where $\lambda_i \ge 0$ controls the relative contribution of synthetic data.\footnote{When the maximizer is not unique, we select the solution from the set of maximizers $\widehat{\Theta}_i^{t}$ that is closest to the previous iterate, i.e., $\boldsymbol{\theta}_i^{t}:=\arg\min_{\boldsymbol{\theta}' \in {\widehat{\Theta}}_i^{t}} \| \boldsymbol{\theta}' - \boldsymbol{\theta}_i^{t-1}\|$, reflecting common practice in which models are \textit{fine-tuned} from their prior state using gradient-based training paradigms rather than retrained from scratch.} Stacking the parameters across all models, we denote the global parameter vector at round $t$ as  $ \Theta^{t}:= (\boldsymbol{\theta}_i^{t}, \dots, \boldsymbol{\theta}_K^{t})\in \prod_{i=1}^K \Theta_i$. We then define the global iteration map  $\mathcal{G}:\prod_{i=1}^K \Theta_i \to \prod_{i=1}^K \Theta_i$ by
\begin{equation}\label{eq:dynamics}
    \Theta^{t+1} = \mathcal{G}(\Theta^t) := \left(\mathcal{G}_1(\Theta^t), \dots, \mathcal{G}_K(\Theta^t)\right)
\end{equation}
where $\mathcal{G}_i$ is the update operator for model $i$ as in Eq.~\eqref{eq:update_rule}.

\textbf{Objectives.} Under the dynamics~\eqref{eq:dynamics}, the network of generative models evolves over time, yet its long-term behavior remains poorly understood. While prior work has investigated the effects of self-consuming retraining paradigms, existing studies are largely limited to a single model in isolation \cite{Satbility,GoMad,AICollapse} or to two-model settings with highly simplified interaction structures \cite{CoEvolving}. In contrast, real-world deployments involve multiple generative models interacting through complex and heterogeneous pathways, in which synthetic data produced by one model can propagate throughout the ecosystem and influence the training and behavior of many others.

This motivates a system-level investigation of how synthetic data and self-consuming iterative training loops shape the long-term dynamics of a network of generative models. In this paper, we seek to address the following fundamental questions regarding the asymptotic behavior of the network:
\begin{itemize}[leftmargin=*,topsep=-0.cm,itemsep=-0.0cm]
    \item How do networked generative models evolve under self-consuming iterative training loops, and under what conditions do the dynamics converge and admit fixed points?

    \item How do structural properties of the interaction graph $\mathbf{W}$ (e.g., connectivity, length and configuration of feedback cycles) and the strength of synthetic-data contributions $\{\lambda_i\}_{i=1}^K$ affect the long-term stability and diversity of individual models and of the network as a whole?  

    \item How does the diversity of model outputs evolve under cross-model interactions, and to what extent is it preserved at equilibrium?
\end{itemize}

Sec.~\ref{sec:HomoData} addresses the first two questions through a local stability analysis around an arbitrary fixed point. Sec.~\ref{sec:HeteroData} addresses the third question by analyzing how synthetic data interactions and graph topology shape inter-model diversity at the system's fixed point.
\section{Convergence \& Stability Analysis}
\label{sec:HomoData}
\subsection{Fixed Point and Technical Assumptions}
We begin by formalizing the notion of \textit{fixed point}, a state in which the models reach equilibrium and remain unchanged under iterative training loop.
\begin{definition}[Fixed point]The global parameter vector $\bar{\Theta}$ is a fixed point of the dynamics~\eqref{eq:dynamics} if $\bar{\Theta} = \mathcal{G}(\bar{\Theta})$.\end{definition}

As shown in Appx.~\ref{subsec:profs_chara_fp}, any fixed point $\bar\Theta$ satisfies the first-order optimality condition:
\begin{equation}\label{eq:f}
    \nabla_{\boldsymbol{\theta}} \mathbb{E}_{x \sim p_i^{\mathrm{data}}} [\log p_{\bar{\boldsymbol{\theta}}_i}(x)]
    + \lambda_i \sum_{j=1}^K w_{ij} \nabla_{\boldsymbol{\theta}} \mathbb{E}_{x \sim p_{\bar{\boldsymbol{\theta}}_j}} [\log p_{\bar{\boldsymbol{\theta}}_i}(x)]
    = 0, \forall i \in \mathcal{V}
\end{equation}

In the single-model setting ($|\mathcal{V}|=1$), the MLE $\boldsymbol{\theta}_i^\star$ satisfies Eq.~\eqref{eq:f} under Assumptions~\ref{asp:strong_concave}-\ref{asp:hessian_stability} and is therefore a fixed point \citep{Satbility}. In the networked multi-model setting, however, the uncoupled solution $\Theta^{\star} = ({\boldsymbol{\theta}}_1^{\star}, \dots, {\boldsymbol{\theta}}_K^{\star})$ is generally \emph{not} a fixed point of $\mathcal{G}$; Appx.~\ref{subsec:example} gives a two-model example exhibiting a fixed point distinct from $\Theta^\star$. Nevertheless, we can identify conditions under which $\Theta^\star$ remains a fixed point in multi-model systems, as established in Prop.~\ref{prop} (see Appx.~\ref{subsec:profs_uncoupled}).

\begin{proposition}[Uncoupled solution as a fixed point]\label{prop}
 When \emph{(i)} the real data distributions are homogeneous ($p_i^{\mathrm{data}} \equiv p^{\mathrm{data}}$ for all $i$) and \emph{(ii)} the per-model MLEs induce a common generative distribution  ($p_{\boldsymbol{\theta}_i^\star} \equiv p^\star$ for all $i \in \mathcal{V}$, for some $p^\star$), uncoupled solution $\Theta^\star$ is a fixed point. 
\end{proposition}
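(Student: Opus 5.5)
The plan is to verify directly that $\mathcal{G}(\Theta^\star)=\Theta^\star$. It suffices to show that $\boldsymbol{\theta}_i^\star$ maximizes the objective in Eq.~\eqref{eq:update_rule} when the previous iterate is $\Theta^{t-1}=\Theta^\star$, and that it is the selected maximizer.

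\textbf{Step 1: collapse the mixture.} Substitute $\Theta^{t-1}=\Theta^\star$. By (ii), every parent generates the same distribution, $p_{\boldsymbol{\theta}_j^\star}=p^\star$. Since $\mathbf{W}$ is row-stochastic, $\sum_j w_{ij}=1$, and the synthetic term reduces to $\lambda_i\,\mathbb{E}_{x\sim p^\star}[\log p_{\boldsymbol{\theta}}(x)]$. By (i), the objective of model $i$ is
\[
J_i(\boldsymbol{\theta})=\mathbb{E}_{x\sim p^{\mathrm{data}}}[\log p_{\boldsymbol{\theta}}(x)]+\lambda_i\,\mathbb{E}_{x\sim p^\star}[\log p_{\boldsymbol{\theta}}(x)].
\]
The dependence on the graph has now disappeared entirely. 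This is the single-model retraining objective of \citep{Satbility}, with the model's own output replaced by $p^\star=p_{\boldsymbol{\theta}_i^\star}$.

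\textbf{Step 2: show $\boldsymbol{\theta}_i^\star$ maximizes each term.} The first term is maximized by $\boldsymbol{\theta}_i^\star$ by definition of the MLE. The second term is $-\mathrm{KL}(p^\star\|p_{\boldsymbol{\theta}})$ minus the entropy of $p^\star$. Since $p^\star=p_{\boldsymbol{\theta}_i^\star}$ lies in model $i$'s family, Gibbs' inequality shows it is maximized at $\boldsymbol{\theta}_i^\star$. Because $\lambda_i\ge 0$, the sum $J_i$ is maximized at $\boldsymbol{\theta}_i^\star$. Hence $\boldsymbol{\theta}_i^\star\in\widehat\Theta_i^{t}$, and the first-order condition Eq.~\eqref{eq:f} holds: both gradients vanish there, and the second because $\nabla_{\boldsymbol{\theta}}\mathbb{E}_{p_{\boldsymbol{\theta}}}[\log p_{\boldsymbol{\theta}}]=\mathbb{E}_{p_{\boldsymbol{\theta}}}[\nabla\log p_{\boldsymbol{\theta}}]=0$.

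\textbf{Step 3: tie-breaking.} The footnote rule selects the maximizer closest to the previous iterate $\boldsymbol{\theta}_i^\star$. Since $\boldsymbol{\theta}_i^\star$ is itself a maximizer, it lies at distance zero and is selected, so $\mathcal{G}_i(\Theta^\star)=\boldsymbol{\theta}_i^\star$ for every $i$. Alternatively, under the strong concavity assumption (Assumption~\ref{asp:strong_concave}), the maximizer is unique near $\boldsymbol{\theta}_i^\star$, and the conclusion follows directly.

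\textbf{Main obstacle.} The only subtle point is Step 2: the argument needs $\boldsymbol{\theta}_i^\star$ to be a \emph{global} maximizer of the synthetic term in $\Theta_i$, not merely a stationary point. Gibbs' inequality gives this directly, because $p^\star$ is realized in the family. If one instead works only with local optimality, as in the paper's assumptions, I would argue via the first-order condition Eq.~\eqref{eq:f} together with the local strong concavity.
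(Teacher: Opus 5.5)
Your proposal is correct and follows the paper's proof essentially step for step. You collapse the synthetic term to $\lambda_i\,\mathbb{E}_{x\sim p^\star}[\log p_{\boldsymbol{\theta}}(x)]$ via (ii) and row-stochasticity, show $\boldsymbol{\theta}_i^\star$ maximizes the data term by the MLE definition and the synthetic term by nonnegativity of the KL divergence (your Gibbs' inequality), and conclude $\mathcal{G}_i(\Theta^\star)=\boldsymbol{\theta}_i^\star$ through the distance-zero tie-breaking rule; your extra remarks on the first-order condition and local strong concavity are not needed but are harmless.
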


In general multi-model settings, establishing the existence of fixed points satisfying Eq.~\eqref{eq:f} and characterizing them is substantially more challenging than in the single-model case, due to the high-dimensional coupled structure of $\mathcal{G}$ and the flexibility of the interaction graph $\mathbf{W}$. Their existence typically requires global conditions ensuring that the retraining map is a continuous self-map on a compact parameter space. Instead of pursuing such a global fixed-point analysis, we study the local dynamics around an arbitrary fixed point $\bar{\Theta}$ whenever one exists. In particular, we aim to characterize the stability of a given equilibrium and how this stability is influenced by synthetic-data mixing and the interaction graph.

We now introduce the assumptions imposed at every fixed point $\bar\Theta$ under consideration. Let $U_i$ denote a neighborhood of $\bar{\boldsymbol{\theta}}_i$, and $\mathcal{U} := \prod_i U_i$.
These assumptions are standard and follow those of \citep{Satbility}.
\begin{assumption}[Local strong concavity]
\label{asp:strong_concave}
For each model $i$, the map $\boldsymbol{\theta} \mapsto \mathbb{E}_{x \sim p_i^{\mathrm{data}}}[\log p_{\boldsymbol{\theta}}(x)]$ is twice continuously differentiable on $U_i$, and there exists $\alpha_i > 0$ such that $-\nabla_{\boldsymbol{\theta}}^2 
\mathbb{E}_{x \sim p_i^{\mathrm{data}}}[\log p_{\boldsymbol{\theta}}(x)] 
\succeq \alpha_i I_{d_i}$ for all $\boldsymbol{\theta} \in U_i$, where $d_i = \dim(\Theta_i)$.
\end{assumption}
\begin{assumption}[Hessian stability under distribution shift]
\label{asp:hessian_stability}
Denote $d_{\mathcal{W}}(\cdot,\cdot)$ as  1-Wasserstein distance \citep{Wasserstein}. For each model $i$, there exists $L_i \ge 0$ such that for all $\boldsymbol{\theta} \in U_i$ and any probability measures $q, q'$ over $\mathcal{X}$,
$\left\|
\mathbb{E}_{x\sim q}\nabla_{\boldsymbol{\theta}}^2\log p_{\boldsymbol{\theta}}(x)
-
\mathbb{E}_{x\sim q'}\nabla_{\boldsymbol{\theta}}^2\log p_{\boldsymbol{\theta}}(x)
\right\| \le L_i d_{\mathcal{W}}(q,q')$.
\end{assumption}

The above assumptions characterize the local regularity of each model around the fixed point $\bar{\Theta}$. To analyze the networked dynamics, we further introduce two quantities.

\begin{definition}[Distributional gap]
\label{def:eps}
For each $i \in \mathcal{V}$, the \emph{distributional gap} between model $i$'s real data distribution and its parent output distribution at $\bar\Theta$ is
$
\varepsilon_i := \max_{j: w_{ij} > 0} 
d_W\bigl(p_{\bar{\boldsymbol{\theta}}_j}, p_i^{\mathrm{data}}\bigr).
$
\end{definition}
\begin{definition}[Local amplification factor]
\label{def:beta}
The \emph{local amplification factor} at $\bar\Theta$ for model 
$i \in \mathcal{V}$ is
\[
\beta_i := \frac{\lambda_i\, M_i}{\alpha_i + \lambda_i(\alpha_i - L_i \varepsilon_i)}.
\]
where 
$
M_i
:=
\max_{j:w_{ij}>0}
\left\|
\mathbb{E}_{x\sim p_{\bar{\boldsymbol{\theta}}_j}}
\big[
\nabla_{\boldsymbol{\theta}} \log p_{\bar{\boldsymbol{\theta}}_i}(x)\,
\nabla_{\boldsymbol{\theta}_j}\log p_{\bar{\boldsymbol{\theta}}_j}(x)^\top
\big]
\right\|_2
$
captures the strongest cross-model sensitivity between the model $i$ and its parent models at the fixed point.
\end{definition}
Throughout the paper, $\|\cdot\|$ denotes the Euclidean norm for vectors, and $\|\cdot\|_2$ denotes the induced operator (spectral) norm for matrices. Local amplification factor $\beta_i$ quantifies how strongly a perturbation arriving at model $i$ through synthetic data is amplified by its local retraining dynamics. Stacking these factors with the interaction graph, we define the nonnegative \emph{comparison matrix} 
$$\mathbf{D} := \mathrm{diag}(\beta_1, \dots, \beta_K)\mathbf{W}.$$

\subsection{Convergence of networked self-consuming models.}
Next, we analyze the evolution of networked models and identify conditions under which the dynamics converge. Prior studies have shown that a \textit{single} self-consuming model can fail to converge when the synthetic-data contribution $\lambda_i$ is sufficiently large \citep{Satbility}. When multiple models coexist and interact over a network, analyzing the stability of the entire system becomes substantially more challenging. In this case, stability is no longer solely a local property determined by $\lambda_i$, rather, it emerges as a \textit{system-level} property that also depends on the structure of the interaction network encoded by $\mathbf W$.

\begin{restatable}[Local asymptotic stability]{theorem}{Stability}
\label{thm:stability}
Suppose $\alpha_i > L_i \varepsilon_i$, $\forall i\in\mathcal{V}$. If $\rho(\mathbf{D}) < 1$, then $\bar\Theta$ is locally asymptotically stable. Moreover, there exists a neighborhood $\mathcal{U}$ of $\bar\Theta$ such that for any initialization $\Theta^0 \in \mathcal{U}$, the iterates $\Theta^{t+1} = \mathcal{G}(\Theta^t)$  converge to $\bar\Theta$ at asymptotic linear rate
\begin{equation}
\limsup_{t \to \infty} \|\Theta^t - \bar\Theta\|_2^{1/t} 
\le \rho(\mathbf{J}) \le \rho(\mathbf{D}).
\end{equation}
 where $\mathbf{J}:=\nabla_{\Theta}\mathcal{G}(\bar{\Theta})$ denotes the Jacobian of $\mathcal{G}$ at $\bar{\Theta}$, and $\rho(\cdot)$ is the spectral radius of a matrix.   
\end{restatable}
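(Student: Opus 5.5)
The plan is to linearize $\mathcal{G}$ at $\bar\Theta$ via the implicit function theorem, obtain a block structure for $\mathbf{J}$, bound each block's norm by $\beta_i w_{ij}$, and then compare spectral radii through the nonnegative matrix $\mathbf{D}$.

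\textbf{Step 1: differentiability of $\mathcal{G}$.} Define
\[
F_i(\boldsymbol{\theta}_i,\Theta) := \nabla_{\boldsymbol{\theta}}\mathbb{E}_{p_i^{\mathrm{data}}}[\log p_{\boldsymbol{\theta}_i}] + \lambda_i\sum_j w_{ij}\nabla_{\boldsymbol{\theta}}\mathbb{E}_{p_{\boldsymbol{\theta}_j}}[\log p_{\boldsymbol{\theta}_i}].
\]
By Eq.~\eqref{eq:f}, $F_i(\bar{\boldsymbol{\theta}}_i,\bar\Theta)=0$. The partial Jacobian in $\boldsymbol{\theta}_i$ is
\[
H_i := \nabla^2\mathbb{E}_{p_i^{\mathrm{data}}}\log p_{\bar{\boldsymbol{\theta}}_i} + \lambda_i\sum_j w_{ij}\,\mathbb{E}_{p_{\bar{\boldsymbol{\theta}}_j}}\nabla^2\log p_{\bar{\boldsymbol{\theta}}_i}.
\]
Write each term $\mathbb{E}_{p_{\bar{\boldsymbol{\theta}}_j}}\nabla^2$ as $\mathbb{E}_{p_i^{\mathrm{data}}}\nabla^2$ plus an error. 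Assumption~\ref{asp:hessian_stability} bounds the error by $L_i\varepsilon_i$, and Assumption~\ref{asp:strong_concave} then gives $-H_i\succeq(\alpha_i+\lambda_i(\alpha_i-L_i\varepsilon_i))I$. This is positive definite because $\alpha_i>L_i\varepsilon_i$ and $\sum_j w_{ij}=1$. Hence the implicit function theorem yields a $C^1$ map $\Theta\mapsto\mathcal{G}_i(\Theta)$ near $\bar\Theta$. Local strong concavity makes the maximizer unique, so the implicit function coincides with the argmax and the tie-breaking rule is irrelevant; I expect this point to need a short remark.

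\textbf{Step 2: block bounds.} Differentiating $F_i(\mathcal{G}_i(\Theta),\Theta)=0$ gives the blocks
\[
\mathbf{J}_{ij} = -H_i^{-1}\,\lambda_i w_{ij}\,\nabla_{\boldsymbol{\theta}_j}\nabla_{\boldsymbol{\theta}}\mathbb{E}_{p_{\boldsymbol{\theta}_j}}[\log p_{\boldsymbol{\theta}_i}].
\]
Differentiating under the integral with the score-function identity, the mixed derivative equals $\mathbb{E}_{p_{\bar{\boldsymbol{\theta}}_j}}[\nabla\log p_{\bar{\boldsymbol{\theta}}_i}\nabla\log p_{\bar{\boldsymbol{\theta}}_j}^\top]$, whose spectral norm is at most $M_i$ by Definition~\ref{def:beta}. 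Combined with $\|H_i^{-1}\|_2\le 1/(\alpha_i+\lambda_i(\alpha_i-L_i\varepsilon_i))$, this gives $\|\mathbf{J}_{ij}\|_2\le\beta_i w_{ij}=\mathbf{D}_{ij}$. For $w_{ij}=0$ the block vanishes.

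\textbf{Step 3: spectral comparison.} This is the key step. I would show $\rho(\mathbf{J})\le\rho(\mathbf{D})$ using block norms. Let $v$ be an eigenvector with $\mathbf{J}v=\mu v$, and let $u$ be the vector with $u_i=\|v_i\|$. Then $|\mu|u\le\mathbf{D}u$ entrywise, with $u\ge 0$ and $u\ne0$. The Collatz--Wielandt/subinvariance property of nonnegative matrices then gives $|\mu|\le\rho(\mathbf{D})$. Alternatively, one can use that $\|\mathbf{J}^t\|$ is bounded by $\|\mathbf{D}^t\|$ through the entrywise block-norm inequality, and apply Gelfand's formula. I expect this to be the main technical point, since it needs the Perron--Frobenius argument for possibly reducible $\mathbf{D}$. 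The Gelfand route avoids irreducibility, because the block-norm matrix of $\mathbf{J}^t$ is entrywise at most $\mathbf{D}^t$.

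\textbf{Step 4: conclusion.} Since $\rho(\mathbf{J})<1$, fix $\epsilon>0$ with $\rho(\mathbf{J})+\epsilon<1$. Choose a norm $\|\cdot\|_*$ with $\|\mathbf{J}\|_*\le\rho(\mathbf{J})+\epsilon$. Because $\mathcal{G}$ is $C^1$, $\mathcal{G}$ is a contraction in $\|\cdot\|_*$ on a small ball around $\bar\Theta$ (the mean value inequality handles the $o(\|\Theta-\bar\Theta\|)$ remainder). This ball is invariant, iterates converge to $\bar\Theta$, and
\[
\limsup_{t\to\infty}\|\Theta^t-\bar\Theta\|^{1/t}\le\rho(\mathbf{J})+\epsilon.
\]
Letting $\epsilon\to0$ and using equivalence of norms (and the eventual linear rate) gives the stated rate. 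Local asymptotic stability follows as well.
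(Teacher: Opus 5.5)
Your proposal follows the paper's proof essentially step for step: implicit-function linearization with $-H_i\succeq(\alpha_i+\lambda_i(\alpha_i-L_i\varepsilon_i))I$, then the block bound $\|\mathbf{J}_{ij}\|_2\le\beta_i w_{ij}$, then $\rho(\mathbf{J})\le\rho(\mathbf{D})$ by bounding the block norms of $\mathbf{J}^t$ entrywise by $\mathbf{D}^t$ and applying Gelfand's formula, and finally a contraction argument in a norm adapted to $\mathbf{J}$. Your alternative eigenvector route also works and needs no irreducibility or Perron--Frobenius theory, since $\mathbf{D}u\ge|\mu|u$ with $u\ge 0$, $u\neq 0$ gives $\mathbf{D}^t u\ge|\mu|^t u$, which forces $\rho(\mathbf{D})\ge|\mu|$ for any nonnegative $\mathbf{D}$.
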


The proof is in Appx.~\ref{subsec:profs_stability}. Thm.~\ref{thm:stability} establishes conditions for the convergence of networked self-consuming models.  Since $\mathbf{D}$ is nonnegative and encodes the interaction graph $\mathbf{W}$ directly, $\rho(\mathbf{D})$ serves as a topology-aware upper bound on the spectral radius of $\mathbf{J}$.  The stability holds whenever $\rho(\mathbf{D}) < 1$, i.e., when local 
amplification and network propagation together do not compound to an expanding fixed point. Since 
$\rho(\mathbf{D}) \le \|\mathbf{D}\|_2 \le \beta_{\max} \|\mathbf{W}\|_2$ with $\beta_{\max} := \max_{i \in \mathcal{V}} \beta_i$, we have the following corollary with its proof in Appx.~\ref{subsec:profs_norm-certificate}.

\begin{restatable}[A norm-based sufficient condition]{corollary}{Normcertificate}
\label{cor:norm-certificate}
If $\beta_{\max} \|\mathbf{W}\|_2 < 1$, then $\rho(\mathbf{D}) < 1$, and 
$\bar\Theta$ is locally asymptotically stable. Equivalently, when
$M_i\|\mathbf{W}\|_2 > \alpha_i - L_i\varepsilon_i$ for all $i\in\mathcal V$, a sufficient condition for local asymptotic stability is
\begin{equation}
\lambda_i < \frac{\alpha_i}
{M_i \|\mathbf{W}\|_2 - (\alpha_i - L_i \varepsilon_i)}, 
\quad \forall i \in \mathcal{V}.
\label{eq:norm-condition}
\end{equation}
\end{restatable}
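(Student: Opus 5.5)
The plan is to chain elementary matrix-norm inequalities and then invoke Theorem~\ref{thm:stability}. Throughout I take the standing hypothesis of Theorem~\ref{thm:stability}, $\alpha_i > L_i\varepsilon_i$ for all $i\in\mathcal V$. The corollary statement leaves this implicit, but it is needed to apply the theorem.

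\textbf{Step 1 ($\beta_i \ge 0$).} Under that hypothesis, the denominator $\alpha_i + \lambda_i(\alpha_i - L_i\varepsilon_i)$ is at least $\alpha_i>0$. Since $\lambda_i \ge 0$ and $M_i\ge 0$, each $\beta_i$ is well defined and nonnegative. Hence $\|\mathrm{diag}(\beta_1,\dots,\beta_K)\|_2 = \max_i |\beta_i| = \beta_{\max}$.

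\textbf{Step 2 (spectral bound).} For any square matrix the spectral radius is at most any induced norm, so $\rho(\mathbf D)\le \|\mathbf D\|_2$. Submultiplicativity of the operator norm then gives
\[
\rho(\mathbf D)\le\|\mathbf D\|_2 \le \|\mathrm{diag}(\beta)\|_2\,\|\mathbf W\|_2 = \beta_{\max}\|\mathbf W\|_2 .
\]
So $\beta_{\max}\|\mathbf W\|_2<1$ forces $\rho(\mathbf D)<1$, and Theorem~\ref{thm:stability} yields local asymptotic stability of $\bar\Theta$.

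\textbf{Step 3 (rewriting in terms of $\lambda_i$).} First, $\beta_{\max}\|\mathbf W\|_2<1$ holds iff $\beta_i\|\mathbf W\|_2<1$ for every $i$. Multiplying by the positive denominator, this is equivalent to
\[
\lambda_i M_i\|\mathbf W\|_2 < \alpha_i + \lambda_i(\alpha_i - L_i\varepsilon_i),
\]
that is,
\[
\lambda_i\bigl(M_i\|\mathbf W\|_2-(\alpha_i-L_i\varepsilon_i)\bigr) < \alpha_i .
\]
Under the stated sign condition $M_i\|\mathbf W\|_2 > \alpha_i - L_i\varepsilon_i$, the coefficient of $\lambda_i$ is strictly positive. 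Dividing by it gives exactly \eqref{eq:norm-condition}, and every step is reversible, which gives the claimed equivalence.

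If instead the coefficient were nonpositive, the inequality would hold for every $\lambda_i\ge0$. That is why the sign condition is imposed to make the threshold meaningful.

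\textbf{Expected obstacle.} There is no real obstacle here; the argument is routine. The only points needing care are the positivity of the denominator of $\beta_i$, which is needed both for $\beta_i\ge0$ and to clear denominators without flipping the inequality, and the sign bookkeeping when dividing in Step 3. Both reduce to the hypothesis $\alpha_i>L_i\varepsilon_i$ together with the stated sign condition.
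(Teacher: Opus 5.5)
Your proposal is correct and follows the paper's proof essentially step for step. Like the paper, you chain $\rho(\mathbf{D})\le\|\mathbf{D}\|_2\le\beta_{\max}\|\mathbf{W}\|_2$ and invoke Theorem~\ref{thm:stability}, then recover the $\lambda_i$ threshold by clearing the positive denominator of $\beta_i$ (positive because $\alpha_i>L_i\varepsilon_i$, which you rightly make explicit) and dividing by the positive coefficient $M_i\|\mathbf{W}\|_2-(\alpha_i-L_i\varepsilon_i)$.
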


As in isolated model settings, stability is guaranteed when the synthetic-data contribution $\lambda_i$ is sufficiently small. In the presence of cross-model interactions, the key difference is that the threshold now depends on two network-level quantities: the cross-model sensitivity $M_i$ which measures how parameter perturbations in parent models propagate through their synthetic data to model $i$'s gradient signal; and the spectral norm of the interaction graph $\|\mathbf{W}\|_2$, which measures the graph's worst-case amplification of perturbations across nodes. As $\|\mathbf{W}\|_2$ and $M_i$ increases, local instabilities propagate and compound across models, and the maximum allowable synthetic-data weight $\lambda_i$ decreases.

While this norm-based condition is convenient and analytically tractable, it relies on a global worst-case characterization of the interaction graph and can therefore be conservative. In particular, the spectral norm aggregates all interactions across the network into a single scalar quantity, which can obscure the specific topological features that contribute to instability. As a result, different interaction graphs may share the same value of $\|\mathbf{W}\|_2$ yet exhibit qualitatively different dynamical behaviors.

\textbf{Impacts of graph topology.} To move beyond global norm-based characterizations and identify which 
structural features of the interaction graph drive instability, we analyze how $\rho(\mathbf{D})$ is shaped by the graph's cycle structure.

\begin{definition}[Directed cycle and cycle gain]\label{def:cycle_gain}
A \emph{simple directed cycle} is an ordered sequence of distinct model nodes 
$C=(i_\ell  \to \cdots \to i_2 \to i_1 \to i_\ell)$
with $i_r \in \mathcal{V}$ for all $r$.
The \emph{cycle gain} of $C$ is defined as the geometric mean of the corresponding entries in $\mathbf{D}$:
$$
\gamma(C)
:= \left(\prod_{r=1}^{\ell} D_{i_r i_{r+1}}\right)^{1/\ell}
= \left(\prod_{r=1}^{\ell} \beta_{i_r} w_{i_r i_{r+1}}\right)^{1/\ell}
$$
where $i_{\ell+1}=i_1$.

\end{definition}

Intuitively, cycle gain $\gamma(C)$ captures per-iteration amplification accumulated along cycle $C$, where moving along each edge $i_{r+1}\to i_{r}$ contributes a factor $\beta_{i_r} w_{i_r i_{r+1}}$, and geometric mean reflects the average amplification per retraining step around the cycle.

\begin{restatable}[Cycle gain as a lower bound of $\rho(\mathbf{D})$]{proposition}{cyclegain}
\label{prop:cycle_gain}
Let $\mathbf{D}\ge 0$ be a nonnegative matrix and let $\{C_m\}_{m=1}^M$ denote the set of all simple directed cycles in the corresponding interaction graph. Then the spectral radius of $\mathbf{D}$ satisfies
$$
\rho(\mathbf{D}) \ge \max_{m\in[M]} \gamma(C_m)
$$
\end{restatable}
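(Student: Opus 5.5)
We prove the bound one cycle at a time. Fix a simple directed cycle $C=(i_\ell\to\cdots\to i_1\to i_\ell)$ of length $\ell$. If some $D_{i_r i_{r+1}}=0$, then $\gamma(C)=0\le\rho(\mathbf{D})$ and there is nothing to show, so assume every entry along the cycle is strictly positive. The strategy is to show $\rho(\mathbf{D})^\ell \ge \prod_{r=1}^\ell D_{i_r i_{r+1}}$, and then take $\ell$-th roots. The main tools are the identity $\rho(\mathbf{D}^\ell)=\rho(\mathbf{D})^\ell$ and monotonicity of the spectral radius on nonnegative matrices: $0\le \mathbf{A}\le \mathbf{B}$ entrywise implies $\rho(\mathbf{A})\le\rho(\mathbf{B})$. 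This monotonicity is a standard Perron--Frobenius fact; it also follows from Gelfand's formula, since $0\le\mathbf{A}^k\le\mathbf{B}^k$ entrywise gives $\|\mathbf{A}^k\|_F\le\|\mathbf{B}^k\|_F$.

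\textbf{Construction.} Let $\mathbf{C}$ be the nonnegative matrix that keeps only the cycle edges of $\mathbf{D}$. That is, $C_{i_r i_{r+1}}=D_{i_r i_{r+1}}$ for $r=1,\dots,\ell$ (indices mod $\ell$), and all other entries are zero. Because the nodes $i_1,\dots,i_\ell$ are distinct, $\mathbf{C}$ is a weighted cyclic permutation on the support $\{i_1,\dots,i_\ell\}$. Since $0\le\mathbf{C}\le\mathbf{D}$, monotonicity gives $\rho(\mathbf{C})\le\rho(\mathbf{D})$.

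\textbf{Computing $\rho(\mathbf{C})$.} Write $\pi:=\prod_{r=1}^{\ell}D_{i_r i_{r+1}}$. We will show directly that $\mathbf{C}^\ell$ restricted to the cycle's coordinates equals $\pi$ times the identity, and is zero elsewhere. The reason is that $(\mathbf{C}^\ell)_{i_r i_r}$ is a sum over closed walks of length $\ell$ in the support graph of $\mathbf{C}$. Each node has exactly one outgoing edge there, so the only such walk traverses the cycle once, and its weight is the full product $\pi$. Off-diagonal entries of $\mathbf{C}^\ell$ vanish, because a walk of length $\ell$ along the cycle always returns to its start. Hence $\rho(\mathbf{C})^\ell=\rho(\mathbf{C}^\ell)=\pi$, and so $\rho(\mathbf{C})=\pi^{1/\ell}=\gamma(C)$.

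\textbf{Alternative without monotonicity.} One can bound $\rho(\mathbf{D})$ directly. Entrywise nonnegativity gives $(\mathbf{D}^{k\ell})_{i_1 i_1}\ge \pi^k$, since one admissible closed walk goes around $C$ $k$ times. Then
$$\rho(\mathbf{D})^\ell=\lim_k\|\mathbf{D}^{k\ell}\|^{1/k}\ge\limsup_k\bigl((\mathbf{D}^{k\ell})_{i_1i_1}\bigr)^{1/k}\ge\pi,$$
using Gelfand's formula with the entrywise max norm. Either route yields $\gamma(C)\le\rho(\mathbf{D})$ for every simple cycle, and maximizing over $m\in[M]$ finishes the proof.

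\textbf{Main obstacle.} The only subtle point is justifying monotonicity of $\rho$, or equivalently the lower bound through diagonal entries of powers. I will use the Gelfand-formula argument above because it is self-contained and avoids invoking irreducibility. Degenerate cases are handled separately: zero-weight edges give $\gamma(C)=0$, which is trivial. Self-loops, i.e.\ cycles with $\ell=1$, fit the same argument with $\gamma(C)=D_{ii}$.
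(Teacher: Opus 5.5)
Your proposal is correct. Your ``alternative without monotonicity'' is essentially the paper's proof. The paper also lower-bounds $(\mathbf{D}^{m\ell})_{i_1 i_1}$ by $\bigl(\prod_{r} D_{i_r i_{r+1}}\bigr)^m$ using the closed walk that goes around $C$ exactly $m$ times. It then bounds this entry by $\|\mathbf{D}^{m\ell}\|_\infty$ and passes to the limit along the subsequence $t=m\ell$ in Gelfand's formula. You instead apply Gelfand to $\mathbf{D}^\ell$ with the max-entry norm, which is equivalent.

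Your main route is genuinely different in structure. You compare $\mathbf{D}$ entrywise with the weighted cyclic permutation $\mathbf{C}$ supported on the cycle and invoke monotonicity of $\rho$ on nonnegative matrices. You then compute $\rho(\mathbf{C})=\gamma(C)$ exactly from the fact that $\mathbf{C}^\ell$ is $\pi$ times the identity on the cycle's coordinates and zero elsewhere. This buys a cleaner interpretation: $\gamma(C)$ is precisely the spectral radius of the subsystem consisting of that cycle alone. So the bound is tight when $\mathbf{D}$ is a single cycle, and the inequality just expresses that adding nonnegative edges cannot decrease $\rho$. The same comparison also gives $\rho(\mathbf{D})\ge\rho(\mathbf{D}')$ for any nonnegative $\mathbf{D}'\le\mathbf{D}$, not only cycle submatrices.

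The cost is an extra lemma, monotonicity, which you justify with Gelfand's formula and entrywise domination of powers. These are the same ingredients the paper applies directly, so the paper's route is shorter: it needs only a single diagonal-entry inequality. Finally, your separate treatment of zero-weight edges is harmless but unnecessary. In that case $\mathbf{C}^\ell=\mathbf{0}$, and both $\rho(\mathbf{C})$ and $\gamma(C)$ vanish.
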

Prop.~\ref{prop:cycle_gain} indicates that in a network with multiple cycles, $\rho(D)$ is primarily governed by the cycle with the largest geometric mean gain. In other words, a single “dominant cycle” can bottleneck the entire system, regardless of the total number of cycles or their lengths. Cycle length affects stability only through the geometric mean: if every model along a cycle contributes the same per-step gain (i.e., $\beta_i w_{ij}=b$), then the overall cycle gain $\gamma(C)=b$ is independent of the cycle’s length.
Combining this with Thm.~\ref{thm:stability} yields the following necessary condition on cycle gains; see Appx.~\ref{subsec:profs_cycle_gain} for proofs.

\begin{restatable}[Necessary condition on cycles]{corollary}{Necessarycycle}
\label{cor:necessary-cycle}
If $\rho(\mathbf{D}) < 1$, then every simple directed cycle $C$ satisfies $\prod_{(j \to i) \in C} \beta_i w_{ij} < 1$. In particular, every self-loop must satisfy $\beta_i w_{ii} < 1$.
\end{restatable}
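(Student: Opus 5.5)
The plan is to derive Corollary~\ref{cor:necessary-cycle} directly from Proposition~\ref{prop:cycle_gain}, using the fact that geometric means below one correspond to products below one. Fix any simple directed cycle $C=(i_\ell\to\cdots\to i_1\to i_\ell)$ in the interaction graph. By Proposition~\ref{prop:cycle_gain}, $\gamma(C)\le \max_m \gamma(C_m)\le \rho(\mathbf{D})<1$. Since all entries $D_{i_r i_{r+1}}=\beta_{i_r}w_{i_r i_{r+1}}$ are nonnegative, $\gamma(C)$ is a well-defined nonnegative real number. Raising both sides of $\gamma(C)<1$ to the power $\ell\ge 1$ preserves the strict inequality, because $x\mapsto x^\ell$ is strictly increasing on $[0,\infty)$. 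This gives
\[
\prod_{r=1}^{\ell}\beta_{i_r}w_{i_r i_{r+1}}=\gamma(C)^\ell<1 .
\]

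The next step is to match this product with the one in the statement. Each edge $j\to i$ of $C$ appears in Definition~\ref{def:cycle_gain} as a pair $(i_{r+1}\to i_r)$ with $i=i_r$ and $j=i_{r+1}$, contributing the factor $\beta_{i}w_{ij}$. The product over $r$ therefore equals $\prod_{(j\to i)\in C}\beta_i w_{ij}$, which is exactly the quantity in the claim.

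The self-loop case is the instance $\ell=1$. The cycle $C=(i\to i)$ is a simple directed cycle whenever $w_{ii}>0$, and its gain is $\gamma(C)=\beta_i w_{ii}$, so $\beta_i w_{ii}<1$. If $w_{ii}=0$, the inequality holds trivially. More generally, any "cycle" containing a zero-weight edge is not in the graph, but its product would be $0<1$ anyway, so no edge-existence issue arises.

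There is no real obstacle once Proposition~\ref{prop:cycle_gain} is granted. The only point needing care is checking that a self-loop counts as a simple directed cycle of length one in the enumeration $\{C_m\}$; otherwise one can argue it directly. The diagonal entry $D_{ii}$ is a $1\times1$ principal submatrix, and for a nonnegative matrix $\rho(\mathbf{D})\ge \rho$ of any principal submatrix, which equals $D_{ii}$ (Perron--Frobenius monotonicity). Hence $\beta_i w_{ii}\le\rho(\mathbf{D})<1$.
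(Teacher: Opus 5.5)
Your proposal is correct and takes essentially the paper's route. The paper also reduces the claim to Proposition~\ref{prop:cycle_gain}, but phrases it as a contrapositive ($\gamma(C)\ge 1 \Rightarrow \rho(\mathbf{D})\ge 1$) rather than your direct chain $\gamma(C)\le\rho(\mathbf{D})<1 \Rightarrow \gamma(C)^\ell<1$, and it handles the self-loop as the length-one cycle just as you do; your Perron--Frobenius principal-submatrix argument for $D_{ii}\le\rho(\mathbf{D})$ is a valid backup but not needed.
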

\section{Diversity Analysis}
\label{sec:HeteroData}

When $p_i^{\mathrm{data}}$ differ across models, it is natural to expect the resulting generative models to produce diverse outputs. A central question, however, is how such diversity evolves in the long run under cross-model interactions. In the following, we assume that models converge to a fixed point $\bar{\Theta}$ under the iterative retraining dynamics $\mathcal{G}$, and we study how model diversity manifests at $\bar{\Theta}$. 

At any fixed point $\bar\Theta$, each model $i$ is trained on a composite dataset drawn from the mixture distribution  $\bar p_i \propto p_i^{\mathrm{data}} + \lambda_i \sum_j w_{ij} p_{\bar{\boldsymbol{\theta}}_j}$. Because models may differ in architecture and parameter dimensionality, directly comparing parameters is not meaningful. We therefore measure diversity in a common representation space.

\begin{definition}[Output representations]
\label{def:rep_state}
Let $\phi: \mathcal{X} \to \mathbb{R}^D$ be a fixed $1$-Lipschitz feature map\footnote{$\phi$ could be the identity function or a mapping that captures semantic or perceptual properties of the data.}. For each model $i$, we define its \textit{output representation} 
at a fixed point $\bar{\Theta}$ as
$
\bar{\boldsymbol{f}}_i:= \mathbb{E}_{x \sim p_{\bar{\boldsymbol{\theta}}_i}}[\phi(x)]
$. The \textit{real-data representation centroid} for model $i$ is defined as
$
\boldsymbol{f}_{i}^{\mathrm{data}} := \mathbb{E}_{x \sim p_{i}^{\mathrm{data}}}[\phi(x)]
$.
\end{definition}

\begin{definition}[System diversity]
\label{def:system_diversity}
Define \emph{system output diversity} $\mathcal{D}_{\mathrm{out}}$ and \emph{system real-data diversity} $\mathcal{D}_{\mathrm{data}}$ as the empirical variances of $\{ \bar{\boldsymbol{f}}_i\}_{i \in \mathcal{V}}$ and of $\{\boldsymbol{f}_i^{\mathrm{data}}\}_{i \in \mathcal{V}}$ across the network, respectively:
\begin{align*}
\textstyle \mathcal{D}_{\mathrm{out}} &:= \mathrm{Var}_{i \in \mathcal{V}}\bigl[\bar{\boldsymbol{f}}_i\bigr] 
= \textstyle \frac{1}{|\mathcal{V}|} \sum_{i \in \mathcal{V}} 
\|\bar{\boldsymbol{f}}_i - \bar{\boldsymbol{f}}_{\mathrm{avg}}\|^2 \textstyle, ~\text{with }~\bar{\boldsymbol{f}}_{\mathrm{avg}} := \frac{1}{|\mathcal{V}|} \sum_{i \in \mathcal{V}} \bar{\boldsymbol{f}}_i\\
\textstyle \mathcal{D}_{\mathrm{data}} &:= \mathrm{Var}_{i \in \mathcal{V}}\bigl[\boldsymbol{f}_i^{\mathrm{data}}\bigr] 
= \textstyle \frac{1}{|\mathcal{V}|} \sum_{i \in \mathcal{V}} 
\|\boldsymbol{f}_i^{\mathrm{data}} - \boldsymbol{f}^{\mathrm{data}}_{\mathrm{avg}}\|^2, ~\text{with }~\boldsymbol{f}^{\mathrm{data}}_{\mathrm{avg}} := \frac{1}{|\mathcal{V}|} \sum_{i \in \mathcal{V}} \boldsymbol{f}_i^{\mathrm{data}}    
\end{align*}
\end{definition}

\begin{definition}[Fixed-point feature discrepancy]
\label{def:discrepancy}
For each model $i \in \mathcal{V}$, define the \emph{fixed-point feature 
discrepancy} as
$
\bar{\boldsymbol{r}}_i 
:= \bar{\boldsymbol{f}}_i - \mathbb{E}_{x \sim \bar{p}_i}[\phi(x)] 
\in \mathbb{R}^D,
$
which measures the mismatch between the output representation of model $i$ and the representation of its training mixture distribution $\bar{p}_i$.
\end{definition}
A nonzero $\bar{\boldsymbol{r}}_i $ arises when the model cannot exactly realize the mixture distribution, due to limited model expressivity, approximation error, or imperfect optimization during training.

With these definitions, we can characterize the fixed-point output representations $\{\bar{\boldsymbol{f}}_i\}_i$ as the solution of a graph-structured linear system, which sets the stage for analyzing system diversity.

\begin{restatable}[Fixed-point output representation of the system]{proposition}{globalsolution}
\label{prop:global_solution}
Let $\mathbf{\bar{F}}=[\bar{\boldsymbol{f}}_1,\ldots,\bar{\boldsymbol{f}}_K]^\top$ and $\mathbf{F}_{\mathrm{data}}=[{\boldsymbol{f}}_{1}^{\mathrm{data}},\ldots,{\boldsymbol{f}}_{K}^{\mathrm{data}}]^\top$ denote the fixed-point output representations and the real-data representations of all models, respectively.
Let $\mathbf{\Lambda}=\mathrm{diag}(\lambda_1,\ldots,\lambda_K)$ and $\mathbf{L}:=\mathbf{I}-\mathbf{W}$. We have
\begin{equation}
\label{eq:global_solution}
\bar{\mathbf{F}} 
= (\mathbf{I} + \mathbf{\Lambda}\mathbf{L})^{-1}
\bigl[\mathbf{F}_{\mathrm{data}} + (\mathbf{I} + \mathbf{\Lambda})\bar{\mathbf{R}}\bigr],
\end{equation}
where $\bar{\mathbf{R}} := [\bar{\boldsymbol{r}}_1, \ldots,\bar{\boldsymbol{r}}_K]^\top$ and $\bar{\boldsymbol{r}}_i$ satisfies$\|\bar{\boldsymbol{r}}_i\| \le \frac{1 + 2\lambda_i}{1+\lambda_i}\, \varepsilon_i$.
\end{restatable}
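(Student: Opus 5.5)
The plan is to write the fixed-point condition model by model in representation space, stack the resulting equations into the matrix identity~\eqref{eq:global_solution}, and then bound the residual $\bar{\boldsymbol r}_i$ using Wasserstein arguments.

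\textbf{Step 1: per-model identity.} At the fixed point $\bar\Theta$, model $i$ is trained on the normalized mixture
\[
\bar p_i = \frac{p_i^{\mathrm{data}} + \lambda_i\sum_j w_{ij}\, p_{\bar{\boldsymbol\theta}_j}}{1+\lambda_i},
\]
where I use $\sum_j w_{ij}=1$. Since $\phi$ enters only through expectations, linearity gives
\[
\mathbb{E}_{\bar p_i}[\phi] = \frac{\boldsymbol f_i^{\mathrm{data}} + \lambda_i\sum_j w_{ij}\bar{\boldsymbol f}_j}{1+\lambda_i}.
\]
By Definition~\ref{def:discrepancy}, $\bar{\boldsymbol f}_i = \mathbb{E}_{\bar p_i}[\phi] + \bar{\boldsymbol r}_i$. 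Multiplying by $1+\lambda_i$ yields
\[
(1+\lambda_i)\bar{\boldsymbol f}_i - \lambda_i\sum_j w_{ij}\bar{\boldsymbol f}_j = \boldsymbol f_i^{\mathrm{data}} + (1+\lambda_i)\bar{\boldsymbol r}_i .
\]

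\textbf{Step 2: stacking and invertibility.} Stacking these rows gives $(\mathbf I+\mathbf\Lambda-\mathbf\Lambda\mathbf W)\bar{\mathbf F} = \mathbf F_{\mathrm{data}} + (\mathbf I+\mathbf\Lambda)\bar{\mathbf R}$, and the left-hand matrix is exactly $\mathbf I+\mathbf\Lambda\mathbf L$. To invert it, I check strict row diagonal dominance. Row $i$ has diagonal entry $1+\lambda_i(1-w_{ii})\ge 1$. The absolute off-diagonal row sum is $\lambda_i\sum_{j\ne i}w_{ij}=\lambda_i(1-w_{ii})$. The diagonal therefore exceeds the off-diagonal sum by $1>0$, so Lévy--Desplanques gives invertibility, and~\eqref{eq:global_solution} follows. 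Equivalently, $\|(\mathbf I+\mathbf\Lambda)^{-1}\mathbf\Lambda\mathbf W\|_\infty\le\max_i\lambda_i/(1+\lambda_i)<1$, so a Neumann series also works.

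\textbf{Step 3: bounding the residual.} Take any unit vector $u\in\mathbb R^D$. The function $x\mapsto u^\top\phi(x)$ is 1-Lipschitz, so Kantorovich--Rubinstein duality gives $u^\top\bigl(\mathbb E_p\phi-\mathbb E_q\phi\bigr)\le d_{\mathcal W}(p,q)$. Taking the supremum over $u$ yields $\|\bar{\boldsymbol r}_i\|\le d_{\mathcal W}(p_{\bar{\boldsymbol\theta}_i},\bar p_i)$. Next I apply the triangle inequality through $p_i^{\mathrm{data}}$. For the mixture term, I use joint convexity of $d_{\mathcal W}$ under mixtures: couple the $p_i^{\mathrm{data}}$ component with itself and each $p_{\bar{\boldsymbol\theta}_j}$ component with $p_i^{\mathrm{data}}$. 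This gives
\[
d_{\mathcal W}(\bar p_i,p_i^{\mathrm{data}}) \le \frac{\lambda_i}{1+\lambda_i}\sum_j w_{ij}\, d_{\mathcal W}(p_{\bar{\boldsymbol\theta}_j},p_i^{\mathrm{data}}) \le \frac{\lambda_i}{1+\lambda_i}\varepsilon_i,
\]
by Definition~\ref{def:eps}.

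\textbf{Main obstacle.} The remaining term is $d_{\mathcal W}(p_{\bar{\boldsymbol\theta}_i},p_i^{\mathrm{data}})$, and I need it to be at most $\varepsilon_i$. Given that, the total is $\bigl(1+\tfrac{\lambda_i}{1+\lambda_i}\bigr)\varepsilon_i=\tfrac{1+2\lambda_i}{1+\lambda_i}\varepsilon_i$, as claimed. The bound is immediate from Definition~\ref{def:eps} when $w_{ii}>0$, because $i$ is then one of its own parents. When there is no self-loop, I would try to read the maximum in $\varepsilon_i$ as ranging over $\{j:w_{ij}>0\}\cup\{i\}$. This is consistent with the constant in the statement, which suggests exactly this triangle-inequality decomposition. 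If that reading is not intended, the bound has to be stated with $\max\{\varepsilon_i,d_{\mathcal W}(p_{\bar{\boldsymbol\theta}_i},p_i^{\mathrm{data}})\}$ in place of $\varepsilon_i$.
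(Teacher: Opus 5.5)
Your argument is the paper's proof step for step. It uses linearity of $\phi$-expectations over the normalized mixture $\bar p_i$, stacks the rows into $(\mathbf I+\mathbf\Lambda\mathbf L)\bar{\mathbf F}=\mathbf F_{\mathrm{data}}+(\mathbf I+\mathbf\Lambda)\bar{\mathbf R}$, and gets invertibility from strict diagonal dominance. The residual bound then comes from Kantorovich--Rubinstein, the triangle inequality through $p_i^{\mathrm{data}}$, and convexity of $d_{\mathcal W}$ under mixtures.

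The obstacle you flag is genuine, and the paper does not resolve it. Its proof simply asserts $d_{\mathcal W}(p_{\bar{\boldsymbol\theta}_i},p_i^{\mathrm{data}})\le\varepsilon_i$ ``by the definition of $\varepsilon_i$'', which is justified only when $w_{ii}>0$.

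Without a self-loop this is a defect of the statement, not only of the proof. In that case $\varepsilon_i$ never involves model $i$'s own output, while $\bar{\boldsymbol r}_i$ measures exactly model $i$'s own misfit. Here is a concrete instance:
\begin{itemize}
\item $K=2$, $w_{12}=w_{21}=1$, $p_1^{\mathrm{data}}=p_2^{\mathrm{data}}=\mathcal N(0,1)$, and $\phi(x)=x$.
\item Model~2 is the family $\{\mathcal N(\theta,1)\}$ with $\lambda_2=0$, so $L_2=0$ and $\alpha_2=1$.
\item Model~1 is the skewed location family with density proportional to $\mathrm{sech}(x-\theta)\,e^{-c(x-\theta)}$ for small $c\neq 0$. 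Its Hessian is $-\mathrm{sech}^2(x-\theta)$, so Assumptions~\ref{asp:strong_concave}--\ref{asp:hessian_stability} hold.
\end{itemize}
Then $\bar{\boldsymbol\theta}_2=0$, which gives $\varepsilon_1=0$ and $\bar p_1=\mathcal N(0,1)$. However, model~1's maximum-likelihood fit to $\mathcal N(0,1)$ has mean about $-0.82\,c\neq 0$ to first order in $c$. Hence $\bar{\boldsymbol r}_1\neq 0$, and the claimed bound $\|\bar{\boldsymbol r}_1\|\le 0$ fails.

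So one of your two repairs is actually required: either enlarge the index set in Definition~\ref{def:eps} to $\{j:w_{ij}>0\}\cup\{i\}$, or replace $\varepsilon_i$ by $\max\{\varepsilon_i,\,d_{\mathcal W}(p_{\bar{\boldsymbol\theta}_i},p_i^{\mathrm{data}})\}$. With either, your proof is complete. The identity~\eqref{eq:global_solution} itself is unaffected, since it uses only the definition of $\bar{\boldsymbol r}_i$ and the invertibility of $\mathbf I+\mathbf\Lambda\mathbf L$.
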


The proof is given in Appx.~\ref{subsec:proof_prop:global_solution}. With this solution, we can analyze how the system’s output diversity at the fixed point deviates from the real-data diversity, and the following theorem provides insight into how iterative self-consuming training dynamics shape system diversity.

\begin{restatable}[Upper bound on system output diversity]{theorem}{generaldiversity}
\label{thm:general_diversity}
Under Prop.~\ref{prop:global_solution}, the system output diversity at any fixed point $\bar\Theta$ satisfies
\begin{equation}
\label{eq:div_bound_blackbox}
\mathcal{D}_{\mathrm{out}} 
\le 2\bigl\|(\mathbf{I} + \mathbf{\Lambda}\mathbf{L})^{-1}\bigr\|_2^2 
\biggl[\mathcal{D}_{\mathrm{data}} 
+ \frac{1}{|\mathcal{V}|} \sum_{i \in \mathcal{V}} 
(1+\lambda_i)^2\, \|\bar{\boldsymbol{r}}_i\|^2\biggr].
\end{equation}
\end{restatable}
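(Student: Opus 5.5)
The plan is to write both diversities as Frobenius norms of centered matrices, and to use the fact that $(\mathbf{I}+\mathbf{\Lambda}\mathbf{L})^{-1}$ leaves consensus directions unchanged. Let $K=|\mathcal{V}|$, $\mathbf{1}\in\mathbb{R}^K$ the all-ones vector, and $\mathbf{P}:=\mathbf{I}-\frac{1}{K}\mathbf{1}\mathbf{1}^\top$ the centering projection. Since row $i$ of $\mathbf{P}\bar{\mathbf{F}}$ is $\bar{\boldsymbol{f}}_i-\bar{\boldsymbol{f}}_{\mathrm{avg}}$, we have $\mathcal{D}_{\mathrm{out}}=\frac1K\|\mathbf{P}\bar{\mathbf{F}}\|_F^2$. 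Similarly, $\mathcal{D}_{\mathrm{data}}=\frac1K\|\mathbf{P}\mathbf{F}_{\mathrm{data}}\|_F^2$. Set $\mathbf{A}:=\mathbf{I}+\mathbf{\Lambda}\mathbf{L}$, which is invertible by Prop.~\ref{prop:global_solution} (this is implicit in Eq.~\eqref{eq:global_solution}).

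\textbf{Step 1: consensus invariance.} Because $\mathbf{W}$ is row-stochastic, $\mathbf{L}\mathbf{1}=(\mathbf{I}-\mathbf{W})\mathbf{1}=0$. Hence $\mathbf{A}\mathbf{1}=\mathbf{1}$, and so $\mathbf{A}^{-1}\mathbf{1}=\mathbf{1}$. Decompose $\mathbf{F}_{\mathrm{data}}=\mathbf{P}\mathbf{F}_{\mathrm{data}}+\mathbf{1}(\boldsymbol{f}^{\mathrm{data}}_{\mathrm{avg}})^\top$. Then $\mathbf{A}^{-1}\mathbf{1}(\boldsymbol{f}^{\mathrm{data}}_{\mathrm{avg}})^\top=\mathbf{1}(\boldsymbol{f}^{\mathrm{data}}_{\mathrm{avg}})^\top$, and this term is annihilated by $\mathbf{P}$. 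Applying $\mathbf{P}$ to Eq.~\eqref{eq:global_solution} therefore gives
\begin{equation*}
\mathbf{P}\bar{\mathbf{F}}=\mathbf{P}\mathbf{A}^{-1}\bigl[\mathbf{P}\mathbf{F}_{\mathrm{data}}+(\mathbf{I}+\mathbf{\Lambda})\bar{\mathbf{R}}\bigr].
\end{equation*}
This is the only step that uses graph structure, and I expect it to be the main point to get right. Without it, the common mean of the data would leak into the bound and we would get $\|\mathbf{F}_{\mathrm{data}}\|_F$ rather than the centered quantity $\mathcal{D}_{\mathrm{data}}$.

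\textbf{Step 2: norm bounds.} Use $\|\mathbf{B}\mathbf{C}\|_F\le\|\mathbf{B}\|_2\|\mathbf{C}\|_F$ together with $\|\mathbf{P}\|_2\le 1$ (it is an orthogonal projection). This yields
\begin{equation*}
\|\mathbf{P}\bar{\mathbf{F}}\|_F\le\|\mathbf{A}^{-1}\|_2\bigl(\|\mathbf{P}\mathbf{F}_{\mathrm{data}}\|_F+\|(\mathbf{I}+\mathbf{\Lambda})\bar{\mathbf{R}}\|_F\bigr).
\end{equation*}
Square this and apply $(a+b)^2\le 2a^2+2b^2$. Then note that $\|(\mathbf{I}+\mathbf{\Lambda})\bar{\mathbf{R}}\|_F^2=\sum_i(1+\lambda_i)^2\|\bar{\boldsymbol{r}}_i\|^2$, since $\mathbf{\Lambda}$ is diagonal and scales row $i$ of $\bar{\mathbf{R}}$ by $1+\lambda_i$.

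\textbf{Step 3: normalize.} Dividing by $K$ gives exactly Eq.~\eqref{eq:div_bound_blackbox}. One may optionally add the bound $\|\bar{\boldsymbol{r}}_i\|\le\frac{1+2\lambda_i}{1+\lambda_i}\varepsilon_i$ from Prop.~\ref{prop:global_solution}, but it is not needed for the stated inequality.
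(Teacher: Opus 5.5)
Your proposal is correct and follows essentially the same route as the paper: write both diversities via the centering projection $\mathbf{P}$, use row-stochasticity to get $(\mathbf{I}+\mathbf{\Lambda}\mathbf{L})^{-1}\mathbf{1}=\mathbf{1}$ so that $\mathbf{P}(\mathbf{I}+\mathbf{\Lambda}\mathbf{L})^{-1}\mathbf{F}_{\mathrm{data}}=\mathbf{P}(\mathbf{I}+\mathbf{\Lambda}\mathbf{L})^{-1}\mathbf{P}\mathbf{F}_{\mathrm{data}}$, then apply $\|\mathbf{B}\mathbf{C}\|_F\le\|\mathbf{B}\|_2\|\mathbf{C}\|_F$ with $\|\mathbf{P}\|_2=1$ and the inequality $(a+b)^2\le 2a^2+2b^2$. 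You are also right that the residual bound $\|\bar{\boldsymbol{r}}_i\|\le\frac{1+2\lambda_i}{1+\lambda_i}\varepsilon_i$ is not needed; the paper substitutes it at the end to get a $(1+2\lambda_i)^2\varepsilon_i^2$ form, but the stated inequality is already established before that step.
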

Eq.~\eqref{eq:global_solution} shows that the output representations at the fixed point are obtained by applying a \textit{linear smoothing} operator $(\mathbf{I} + \Lambda \mathbf{L})^{-1}$ to real-data centroids.
The diversity bound in Thm.~\ref{thm:general_diversity} (see proof in Appx.~\ref{subsec:proof_thm:diversity_bound}) quantifies the worst-case attenuation of heterogeneity along the non-consensus subspace, i.e., the subspace orthogonal to the all-ones vector where models disagree.
The factor $\|(\mathbf{I}+\Lambda \mathbf{L})^{-1}\|_2^2$ acts as a \textit{collapse coefficient}, mapping the initial data diversity
$\mathcal D_{\mathrm{data}}$ to the final output diversity $\mathcal D_{\mathrm{out}}$. This bound highlights three key insights: 
\begin{itemize}[leftmargin=*,topsep=-0.cm,itemsep=-0.0cm]
    \item \textit{No cross-model interaction preserves diversity.} If there is no synthetic data ($\Lambda=\mathbf{0}$), or interaction graph has no cross edges ($w_{ij}=0$, $\forall i\neq j$), then operator $(\mathbf{I}+\Lambda \mathbf{L})^{-1}$ does not average across models. Moreover, when each model's output matches its training mixture in feature space ($\bar{\boldsymbol{r}}_i = \mathbf{0}$ 
for all $i$), the bound reduces to $\mathcal{D}_{\mathrm{out}} = \mathcal{D}_{\mathrm{data}}$.
    \item \textit{Stronger synthetic-data mixing has competing effects.} Larger $\lambda_i$ increases the averaging effect of $(\mathbf{I}+\Lambda \mathbf{L})^{-1}$, pulling model representations toward a common value and reducing differences. However, larger $\lambda_i$ also amplifies the term $(1+\lambda_i)^2\, \|\bar{\boldsymbol{r}}_i\|^2$, since each model relies more heavily on imperfect synthetic training data.
    \item  \textit{Graph topology shapes which disagreements are reduced.} The operator $(\mathbf{I}+\Lambda \mathbf{L})^{-1}$ averages out disagreements that change rapidly across strongly connected regions of the graph, while differences across weakly connected regions are diminished much less.
    \item \textit{Imperfect representation introduces a topology-independent diversity term.} The discrepancy term $\frac{1}{|\mathcal{V}|}\sum_i (1+\lambda_i)^2 \|\bar{\boldsymbol{r}}_i\|^2$ captures, in the feature space, the gap between each model's fixed-point output and its training mixture; it contributes to $\mathcal{D}_{\mathrm{out}}$ regardless of $\mathbf{W}$. Such diversity reflects model error rather than meaningful inter-model disagreement.
\end{itemize}

\textbf{Special Case: undirected graph.} While Thm.~\ref{thm:general_diversity} provides a general bound on diversity collapse, it is expressed in terms of the global operator norm $ \big\|(\mathbf{I} + \Lambda \mathbf{L})^{-1}\big\|_2^2$, which obscures how specific structural features of the interaction graph affect diversity. To gain more interpretable, topology-dependent insights, we consider the instructive case of symmetric (i.e., undirected) interactions.

\begin{restatable}[Diversity Bound for Symmetric Interaction]{corollary}{symmetricbound}
\label{cor:symmetric_bound}
Suppose the interaction graph is undirected ($\mathbf{W} = \mathbf{W}^\top$), the synthetic weight is homogeneous ($\lambda_i \equiv \lambda$) and the feature discrepancy vanishes ($\bar{\boldsymbol{r}}_i = \mathbf{0}$). Let $\mu_2$ denote the algebraic connectivity (the second smallest eigenvalue of $\mathbf{L}$). Then:
\begin{equation}
\label{eq:symmetric_bound}
\mathcal{D}_{\mathrm{out}} \le \frac{1}{(1+\lambda \mu_2)^2} \mathcal{D}_{\mathrm{data}}
\end{equation}
\end{restatable}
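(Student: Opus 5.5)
With $\bar{\mathbf{R}}=\mathbf{0}$ and $\mathbf{\Lambda}=\lambda\mathbf{I}$, Prop.~\ref{prop:global_solution} reduces to $\bar{\mathbf{F}} = (\mathbf{I}+\lambda\mathbf{L})^{-1}\mathbf{F}_{\mathrm{data}}$. The plan is to bound the variance directly from this relation, rather than invoking Thm.~\ref{thm:general_diversity}. That theorem would lose a factor of $2$, and its norm $\|(\mathbf{I}+\lambda\mathbf{L})^{-1}\|_2 = 1$ carries no information, since the consensus direction $\mathbf{1}$ is a fixed vector of the operator.

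First, write the diversity in matrix form. Let $\mathbf{P}:=\mathbf{I}-\frac{1}{K}\mathbf{1}\mathbf{1}^\top$ be the centering projector. Then $\mathcal{D}_{\mathrm{out}} = \frac{1}{K}\|\mathbf{P}\bar{\mathbf{F}}\|_F^2$ and $\mathcal{D}_{\mathrm{data}} = \frac{1}{K}\|\mathbf{P}\mathbf{F}_{\mathrm{data}}\|_F^2$, since each row of $\mathbf{P}\mathbf{F}$ is a model's representation minus the network average.

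Next, use the spectral structure of $\mathbf{L}$. Because $\mathbf{W}$ is symmetric and row-stochastic, it is also column-stochastic, and $\mathbf{L}=\mathbf{I}-\mathbf{W}$ is symmetric with $\mathbf{L}\mathbf{1}=\mathbf{0}$. Moreover $\mathbf{L}\succeq 0$, because $\rho(\mathbf{W})\le\|\mathbf{W}\|_\infty=1$ forces every eigenvalue of $\mathbf{W}$ to be at most $1$. Take an orthonormal eigenbasis $\mathbf{u}_1=\mathbf{1}/\sqrt{K},\mathbf{u}_2,\dots,\mathbf{u}_K$ with eigenvalues $0=\mu_1\le\mu_2\le\dots\le\mu_K$; if $\mu_1$ is a repeated eigenvalue, we can still choose $\mathbf{u}_1$ to be the normalized all-ones vector. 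Then $\mathbf{I}+\lambda\mathbf{L}$ is invertible and symmetric, and it shares these eigenvectors. So $(\mathbf{I}+\lambda\mathbf{L})^{-1}$ commutes with $\mathbf{P}=\sum_{k\ge2}\mathbf{u}_k\mathbf{u}_k^\top$, giving
\[
\mathbf{P}\bar{\mathbf{F}}=(\mathbf{I}+\lambda\mathbf{L})^{-1}\mathbf{P}\mathbf{F}_{\mathrm{data}}.
\]

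Now bound the Frobenius norm column by column. Each column of $\mathbf{P}\mathbf{F}_{\mathrm{data}}$ lies in $\mathrm{span}\{\mathbf{u}_2,\dots,\mathbf{u}_K\}$. On that subspace the operator norm of $(\mathbf{I}+\lambda\mathbf{L})^{-1}$ is $\max_{k\ge2}(1+\lambda\mu_k)^{-1}=(1+\lambda\mu_2)^{-1}$, using $\lambda\ge0$ and $\mu_k\ge\mu_2\ge0$. Squaring and summing over the $D$ columns gives $\|\mathbf{P}\bar{\mathbf{F}}\|_F^2\le(1+\lambda\mu_2)^{-2}\|\mathbf{P}\mathbf{F}_{\mathrm{data}}\|_F^2$. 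Dividing by $K$ yields \eqref{eq:symmetric_bound}.

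The main point needing care is the commutation of $\mathbf{P}$ with the inverse operator, which requires $\mathbf{1}$ to be an eigenvector of the symmetric $\mathbf{L}$. This is exactly where undirectedness enters: in the directed case, $\mathbf{1}^\top\mathbf{L}\neq\mathbf{0}$ in general, so the centered subspace is not invariant. The rest is a routine eigenvalue computation.
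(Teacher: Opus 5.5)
Your proposal is correct and follows the paper's proof essentially step for step. You write both diversities via the centering projector $\mathbf{P}$, use that $\mathbf{1}$ is an eigenvector of the symmetric $\mathbf{L}$ so that $(\mathbf{I}+\lambda\mathbf{L})^{-1}$ commutes with $\mathbf{P}$, and bound by the restricted spectral norm $1/(1+\lambda\mu_2)$ on $\mathbf{1}^\perp$; your justification of $\mathbf{L}\succeq 0$ via $\rho(\mathbf{W})\le\|\mathbf{W}\|_\infty=1$ and your allowance for a repeated zero eigenvalue are slightly more careful than the paper, which simply asserts $0=\mu_1<\mu_2$.
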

Cor.~\ref{cor:symmetric_bound} provides a clean, topology-dependent characterization of diversity collapse, explicitly linking it to the algebraic connectivity $\mu_2$. Intuitively, $\mu_2$ captures the “bottleneckedness” of the network: a small $\mu_2$ indicates weak connections or community structures, whereas a large $\mu_2$ corresponds to strong connectivity with few bottlenecks. The corollary implies that higher connectivity (larger $\mu_2$) leads to a smaller upper bound and thus stronger collapse of diversity. Conversely, even in a densely connected network, a small $\mu_2$ (e.g., when communities are only weakly connected) can significantly slow down diversity collapse.

\section{Experiments}
\label{sec:exp}
Our theoretical analysis is under simplifying assumptions and abstracts away sources of error arising from finite data and imperfect optimization. To understand how self-consuming retraining operates beyond these idealized settings, we complement our theory with experiments, evaluating whether the qualitative behaviors predicted by the theory persist under realistic training conditions.

\begin{figure}[thbp!]
    \centering
    \begin{subfigure}[c]{0.08\linewidth}
        \centering
    \includegraphics[width=\linewidth]{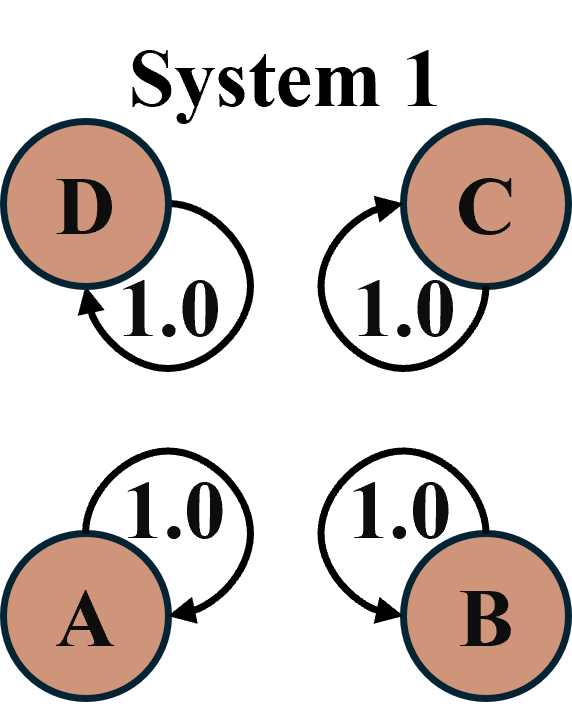}
    \end{subfigure}
    \hfill
    \begin{subfigure}[c]{0.89\linewidth}
        \centering
        \includegraphics[width=\linewidth]{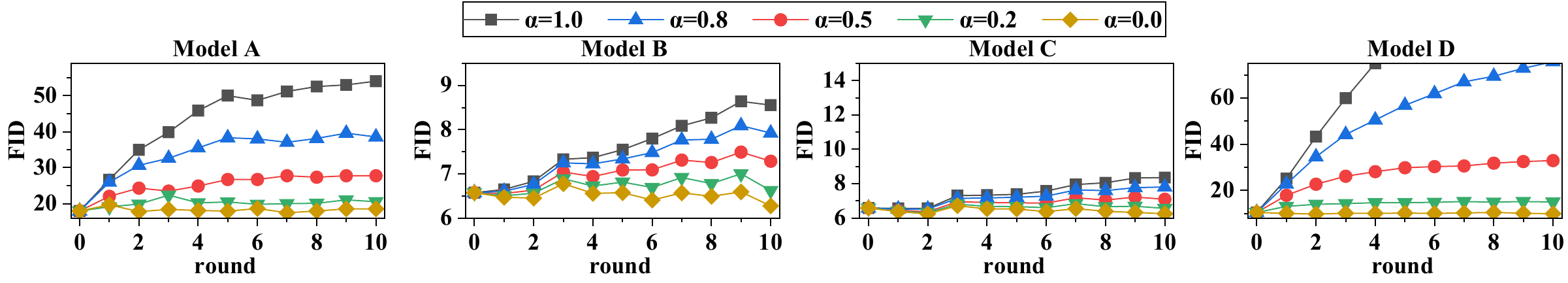}
    \end{subfigure}

    \begin{subfigure}[c]{0.08\linewidth}
        \centering
    \includegraphics[width=\linewidth]{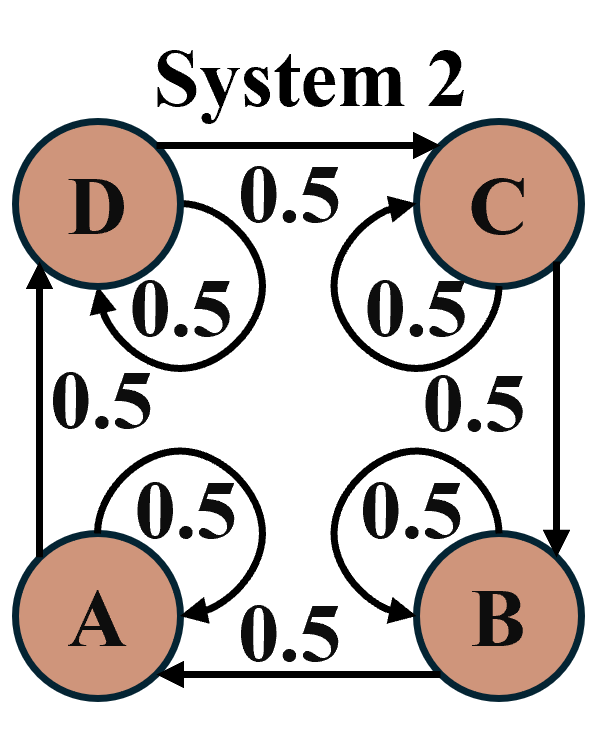}
    \end{subfigure}
    \hfill
    \begin{subfigure}[c]{0.89\linewidth}
        \centering
    \includegraphics[width=\linewidth]{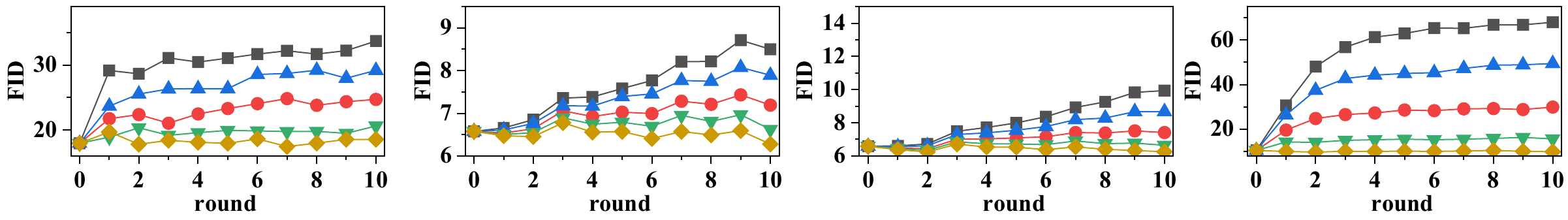}
    \end{subfigure}

    \begin{subfigure}[c]{0.08\linewidth}
        \centering
    \includegraphics[width=\linewidth]{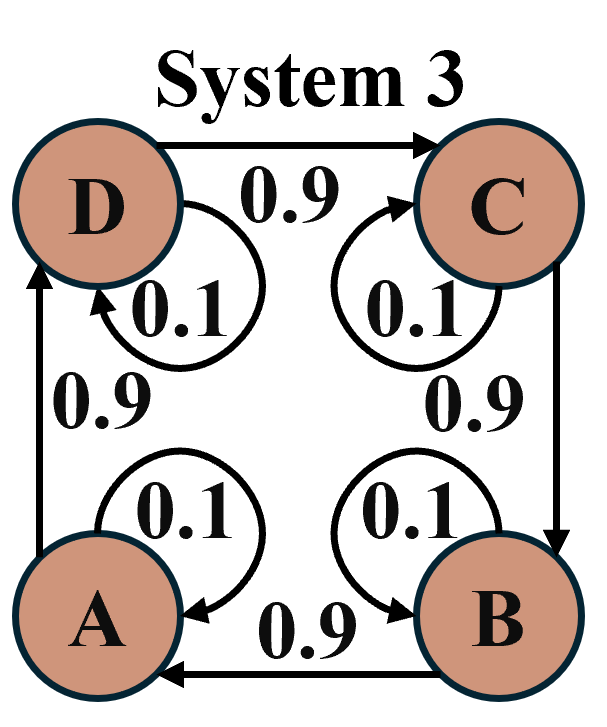}
    \end{subfigure}
    \hfill
    \begin{subfigure}[c]{0.89\linewidth}
        \centering
    \includegraphics[width=\linewidth]{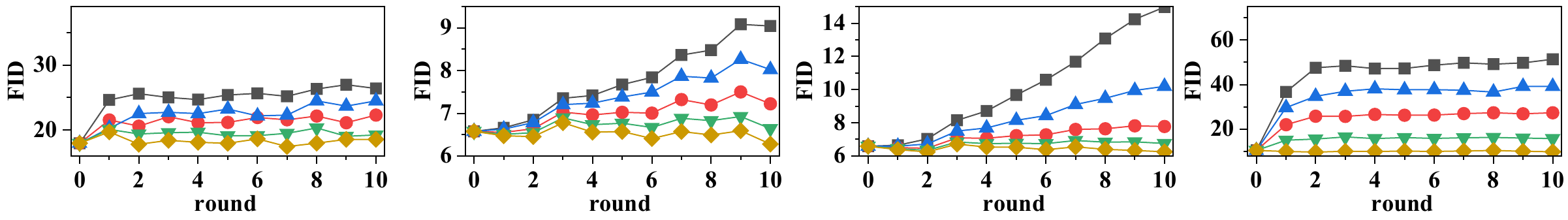}
    \end{subfigure}

    \begin{subfigure}[c]{0.08\linewidth}
        \centering
        \includegraphics[width=\linewidth]{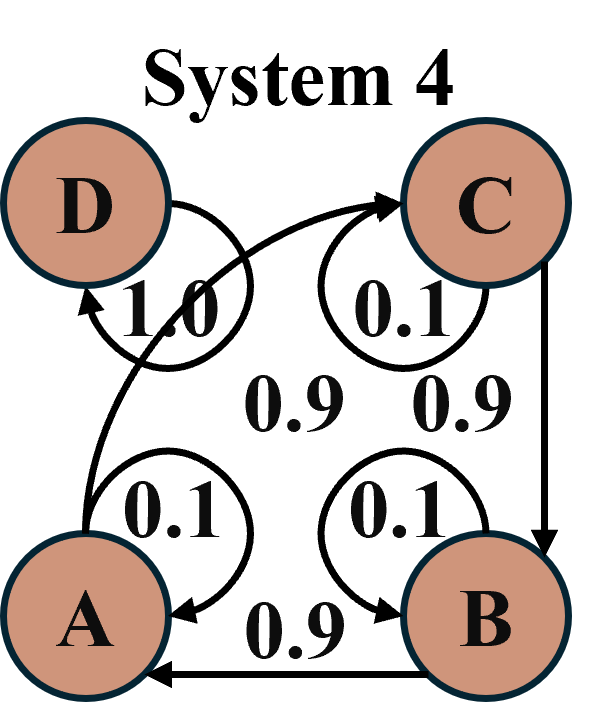}
    \end{subfigure}
    \hfill
    \begin{subfigure}[c]{0.89\linewidth}
        \centering
        \includegraphics[width=\linewidth]{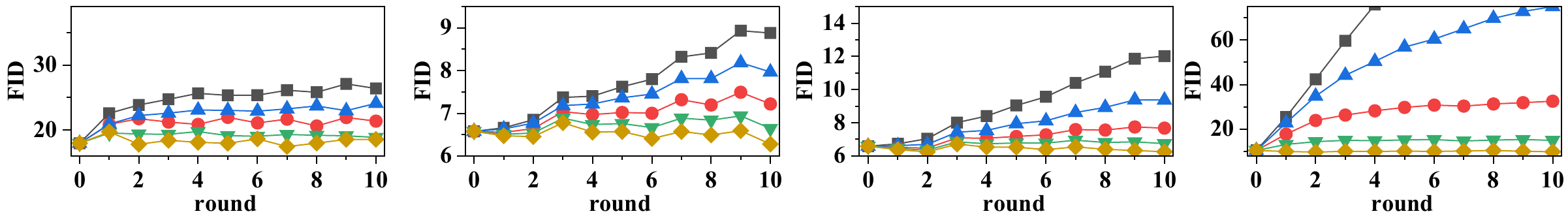}
    \end{subfigure}

    \caption{Stability experiments when all models trained on full CIFAR-10. Each row corresponds to a different interaction graph. For each row, the left figure illustrates the direction and strength of synthetic data flow, while the right figures report the FID between generated samples and real data for each model. Results are shown under different synthetic mixing strengths, with $\alpha_i=\alpha$ shared across models.}
    \label{Fig:Homo}
\end{figure}

\textbf{Datasets.} We conduct experiments on two datasets: (i) \textbf{Synthetic Gaussian}: A dataset following 8-mode Gaussian mixture model, the details are in Appx.~\ref{sec:synex}.
(ii) \textbf{CIFAR-10} \cite{cifar10}: It contains 60,000 images from 10 classes $\{\text{airplane}:=0, \text{automobile}:=1, \text{bird}:=2, \text{cat}:=3, \text{deer}:=4, \text{dog}:=5, \text{frog}:=6, \text{horse}:=7, \text{ship}:=8, \text{truck}:=9 \}$.

\begin{wraptable}[11]{r}{0.54\textwidth}
\vspace{-9pt}
\centering
\caption{Models and real-data assignments. The same data for all models is used for stability; the heterogeneous assignment is used for diversity experiments.}
\label{tab:model_data}
\resizebox{0.54\textwidth}{!}{
\begin{tabular}{c c c c}
\toprule
Model & Model Type & Stability & Diversity \\
\midrule
A & VLB Diffusion  
    & \multirow{4}{*}{$\{0,\cdots,9\}$}
  & $\{0,\cdots,9\}$  \\
B & OT-CFM
  &
  & $\{3,4\}$  \\
C & iCFM  
  &
  & $\{5,6\}$   \\
D & Hybrid Diffusion 
  &
  & $\{7,8,9\}$  \\
\bottomrule
\end{tabular}}
\vspace{-12pt}
\end{wraptable}

\textbf{Models.}
For real-data experiments, we consider two representative classes of generative models: denoising diffusion probabilistic models (DDPM) \citep{ddpm} and continuous flow matching models (CFM) \citep{cfm}. We instantiate four generative models from two paradigms, with two variants per paradigm as shown in \Cref{tab:model_data}.
All models are adopted from public GitHub repositories and are used with their default hyperparameters. Each model is initialized from its released pretrained checkpoint, keeping architectures and optimization settings unchanged. For the synthetic experiments, we additionally include a GAN-based \citep{GAN} model; details are in Appx.~\ref{sec:synex}.

\textbf{Iterative Retraining.} At each iteration, each model generates $10,000$ samples and is fine-tuned for a fixed number of steps on its mixed dataset of size $10,000$. This dataset consists of real samples and synthetic samples generated by other models.
The contribution of synthetic data is governed by the interaction matrix $\mathbf{W}$ and the synthetic mixing ratio $\alpha_i=\frac{\lambda_i}{\lambda_i+1}$. The procedure is in Alg.~\ref{alg:retraining}.

\begin{figure}[htbp]
    \centering
     \includegraphics[width=\linewidth]{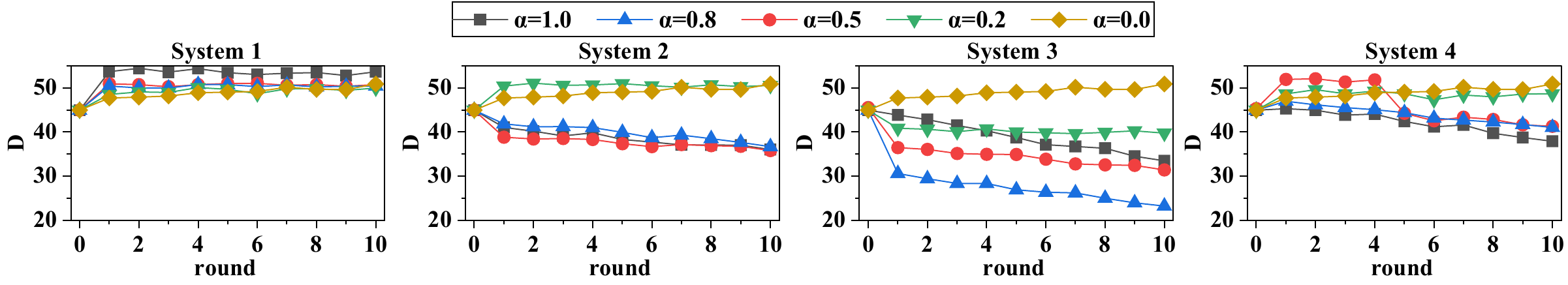}
    \caption{Diversity across models under heterogeneous data with different interaction graphs. \textit{Systems} are defined in Fig.~\ref{Fig:Homo}. Each figure reports the evolution of diversity $\mathcal{D}$ across retraining rounds under different synthetic mixing ratios $\alpha_i=\alpha$ shared across models.}
    \label{Fig:Hete}
    \vspace{-10pt}
\end{figure}

\textbf{Interaction Graphs.} We design four interaction graphs (Fig.~\ref{Fig:Homo}, left) to isolate the effects of graph structure, edge weights, and cycle configuration through controlled pairwise comparisons. \textbf{System~1} is an isolated-retraining baseline. \textbf{Systems~1} and~\textbf{2} share the same spectral radius $\|\mathbf{W}\| = 1$ but differ in graph structure. \textbf{Systems~2} and~\textbf{3} share the same graph structure but differ in edge weights. \textbf{System~4} has a shorter directed cycles. Additional graphs are explored in Appx.~\ref{subsec:synex_homo}.

\textbf{Evaluation Metrics.}
To assess distributional quality, we mainly compute the FID \citep{fidelity} between generated samples and real data using standard Inception feature embeddings. As a supplement, Appx.~\ref{sec:addrealex} reports precision and recall. To quantify diversity across models, we use 512-dimensional features from an ImageNet-pretrained \citep{ImageNet} ResNet-18 \citep{ResNet} with the final classification layer removed. 
System diversity $\mathcal{D}$ is computed following Def.~\ref{def:system_diversity}. Since a single run of iterative retraining is computationally expensive, we introduce stochasticity across rounds by varying the random seed. Specifically, in round $r$ for model $i$, we use a seed of the form $\mathrm{seed} + 10{,}000 \cdot r + 97 \cdot i$. For synthetic data experiments, all reported results are averaged over three independent runs with different seeds. We present the results of CIFAR-10 data here and the results of Gaussian data are in Appx.~\ref{sec:synex}.

\textbf{Stability experiments.} We first verify the stability analysis, training all models on the full CIFAR-10 ($p_i^{\mathrm{data}} \equiv p^{\mathrm{data}}$). The synthetic-data 
mixing ratio is shared across models ($\alpha_i \equiv \alpha$). Fig.~\ref{Fig:Homo} illustrates several interaction graph structures with varying patterns of synthetic data flow; additional graph configurations are reported in Appx.~\ref{subsec:synex_homo}. \textbf{System~1} corresponds to isolated retraining as a baseline. Across all systems, increasing $\alpha$ leads to higher FID, indicating degraded distributional quality. The four models show different sensitivities, reflecting architecture and training dynamics rather than data access. Crucially, neither the interaction norm $\|\mathbf{W}\|$ nor graph structure alone determines stability: \textbf{Systems~1} and \textbf{2} share $\|\mathbf{W}\|$ but exhibit different FID trends, while \textbf{Systems~2} and \textbf{3} share graph structure but differ in interaction strength. This is consistent with our spectral-radius criterion in Thm.~\ref{thm:stability}, which combines both ingredients. Comparing \textbf{Systems~3} and \textbf{4}, shorter interaction cycles can mitigate degradation when real data are present. This \textit{contrasts} with observations in the synthetic settings (Appx.~\ref{sec:synex}), where short cycles may instead accelerate instability, indicating that instability depends on which models participate in dominant feedback paths rather than cycle length itself. Additional synthetic-data experiments appear in Appx.~\ref{subsec:synex_homo}; precision and recall are in Appx.~\ref{sec:addrealex}.

\textbf{Diversity experiments.}
We next verify the diversity analysis as shown in Fig.~\ref{Fig:Hete}, training models on disjoint or partially overlapping subsets of CIFAR-10 (\Cref{tab:model_data}); this captures the practical 
scenario where different model providers have access to different data sources. Overall, increasing $\alpha$ and using denser interaction graphs both reduce diversity, consistent with stronger synthetic-data reliance encouraging alignment.
\textbf{System~1} (isolated retraining) is an exception: its diversity is driven by model instability rather than meaningful differentiation, and does not decrease with $\alpha$. \textbf{Systems~2} and \textbf{3} have similar topology but different interaction weights, where \textbf{System~3} exhibits a faster and more pronounced decline in $\mathcal{D}$, indicating that interaction strength matters beyond just the presence of $\alpha$. 
\textbf{Systems~2} and \textbf{3} also feature denser interactions than \textbf{System~4} and exhibit faster diversity decay across most $\alpha > 0$, aligning with our prediction (Thm.~\ref{thm:general_diversity}, Cor.~\ref{cor:symmetric_bound}) that higher graph connectivity accelerates homogenization. Appx.~\ref{subsec:synex_hete} reports  synthetic experiments with similar qualitative behavior. Note that increases in $\mathcal{D}$ may also result from instability, which is 
difficult to quantify since the fixed point is generally unknown; sample visualizations in Appx.~\ref{sec:addrealex} aid interpretation.

\section{Conclusion}
This work takes a first step toward studying networked self-consuming generative models through a graph-based dynamical framework, revealing how synthetic-data mixing and interaction topology jointly govern system stability and diversity. Our results show that cross-model feedback can induce topology-dependent amplification and diversity contraction, offering a principled understanding of the long-term behavior of multi-model generative ecosystems.

\begin{ack}
This material is based upon work supported by the U.S. National Science Foundation under Grant Nos. IIS-2202699, IIS-2416895, IIS-2301599, CMMI-2301601, DMS-2529302, and OIA-2616279. The authors declare no competing interests.

\end{ack}

\small

\bibliographystyle{plainnat}
\bibliography{ConsumingSys}

\newpage
\appendix
\startcontents[appendix]

\section*{Appendix Contents}
\hypersetup{linkcolor=black}
\printcontents[appendix]{}{1}{
    \setcounter{tocdepth}{2}
}
\hypersetup{linkcolor=blue}
\newpage
\section{Limitations and Future Directions}
\label{sec:limitations}

This work provides a first theoretical step toward understanding networked
self-consuming generative ecosystems. Our main results take a local stability
perspective around an arbitrary fixed point whenever such a fixed point exists.
While this view is standard in dynamical-systems analysis, a natural future
direction is to establish broader global existence and uniqueness guarantees for heterogeneous, nonconvex retraining systems.

Our analysis builds on the standard local regularity assumptions adopted in
prior single-model self-consuming \cite{Satbility,curation} analyses, and extends this style of local stability reasoning to networked retraining dynamics. These assumptions make the coupled system analytically tractable, but practical deep generative models involve finite-sample data, stochastic optimization, and nonconvex objectives. Extending the theory to approximate updates and weaker regularity conditions is an important direction for further bridging theory and large-scale generative-model training.

Finally, our diversity result gives an upper bound that reveals a graph-smoothing effect at equilibrium. Future work could derive sharper diversity characterizations, separate meaningful specialization from instability-induced diversity, and study adaptive or time-varying interaction graphs. The framework also suggests mitigation strategies, such as controlling high-gain feedback cycles or allocating real data to nodes with large local amplification.

\section{Broader Impact}
\label{sec:broader}

This work develops a theoretical framework for understanding stability and diversity in networked self-consuming generative ecosystems. The framework maynhelp model providers and platforms diagnose long-term risks such as distributional drift, instability, homogenization, and loss of diversity when synthetic data is reused across models. It may also inform mitigation strategies, such as limiting high-gain feedback cycles, preserving data diversity, or allocating real data to parts of the system that are most vulnerable to amplification.

At the same time, understanding how graph structure and synthetic-data mixing shape long-term model behavior could be misused to deliberately steer model ecosystems toward narrower or biased output distributions. For this reason,
practical use of such analyses should be paired with transparent data tracing, auditing of synthetic-data flows, and careful evaluation of downstream effects on different user groups and content domains.

\newpage
\section{Related Work}
\label{sec:relatedwork}

\subsection{Single-model Self-Consuming Loop}
High-quality synthetic data has been widely used for model pretraining, fine-tuning, and data augmentation. However, when models are repeatedly trained on their own or other models' generated outputs, the training distribution may drift away from the true data distribution. 

Through extensive experimentation, early studies such as \cite{GoMad, AICollapse,CollapseInevitable} demonstrated that this retraining loop leads to \textit{model collapse}, a degenerative process where models progressively lose their ability to represent the underlying data distribution.
Earlier work by \citet{amplification} showed that feedback loops in model-generated data can amplify existing dataset biases, laying the foundation for understanding recursive error reinforcement.
Specifically, \citet{AICollapse} identified that statistical approximation errors accumulate across generations, causing the model to lose information about the distribution's tails. Simultaneously, \citet{GoMad} and \citet{CollapseInevitable} observed that without sufficient fresh real data, the loop leads to a complete divergence from the ground truth.
Additional large-scale experiments by \citet{LLMDistributionShift} highlighted how properties of the training data, such as redundancy, diversity, and prompt type, modulate the degree of distribution shift in recursive LLM training.

Building on these empirical findings, subsequent work focused on theoretically characterizing the conditions under which collapse occurs. \citet{Satbility} provided conditions for stability, emphasizing the importance of retaining a non-negligible fraction of real data. Similarly, \citet{TheoreticalPerspective} modeled self-consuming loops as dynamical systems and showed that outputs lose variance and converge to trivial fixed points unless synthetic data is carefully scheduled. From a different angle, \citet{Tails} analyzed how recursive training alters scaling laws, leading to diminished tail coverage and faster error decay on rare examples.
Complementary to this, \citet{StatisticalAnalysisLM} provided a statistical treatment of model collapse, revealing monotonic increases in KL divergence and sharp variance decay during recursive fine-tuning. \citet{RateCollapse} further modeled the rate at which collapse emerges, quantifying error accumulation speed under recursive sampling and establishing bounds for early-stage divergence.

These works reveal how recursive training can cause models to drift, lose diversity, and collapse. They offer strong theoretical and empirical support. However, most of this work focuses on single-model behavior. Our framework subsumes these single-model settings as a special case. In particular, when $w_{ii} =1$ and $w_{ij}=0$ for all $i \neq j$, each model retrains exclusively on its own generated data, recovering the single-model self-consuming dynamics analyzed.

\subsection{Multi-Model Interactive Ecosystems}
\citet{AreWe} found that a large and growing fraction of online content is now generated by AI. This implies that training data used by one model often includes outputs from others, making cross-model data consumption common in practice. Motivated by this, recent work has begun to explore multi-model generative ecosystems, where models are explicitly trained on each other’s outputs.

\citet{CoEvolving} studied the stability of a co-evolving system involving one image model and one text model. Through simulations, they showed that when models learn from each other, small distribution shifts can compound and lead to joint instability. \citet{WhenandHow} further studied interacting self-consuming models under human curation, showing that cross-model interactions can substantially alter the long-term effect of preference-based curation. However, their analysis is limited to a two-model setup with fixed data domains, and may not generalize to larger or more diverse systems. \citet{Whathappens} approached the problem from a theoretical angle. They analyzed how one model behaves when trained on data that mixes outputs from multiple other models. Their theoretical results rely on simplifying assumptions, such as linearized dynamics assumptions. Their experiments also focus on just two interacting models, limiting the scope of their findings.
\citet{mutimodel} studied repeated fine-tuning of multi-modal generative systems composed of two coupled models using self-generated synthetic data. They showed that such closed feedback loops can lead to performance degradation over time, despite occasional short-term improvements in task metrics.
\citet{AdversarialCuration} studied a setup where two models curate data for each other with adversarial intent. Their work focuses on how harmful interactions between models affect training dynamics. 

While these works \citep{CoEvolving,Whathappens,mutimodel,WhenandHow,market} provide early insights into multi-model collapse, they are subject to several constraints. Most existing studies focus on small model pairs. The systems considered are relatively narrow in scope and not designed to represent general-purpose model ecosystems. In addition, many works assume predefined interaction structures rather than allowing interactions to emerge naturally.

In contrast, our work presents a more general framework that captures real-world interaction patterns. We explicitly model the flow of synthetic data across models and analyze the resulting network from a graph-theoretic perspective. This allows us to identify key factors that affect system stability, such as synthetic data ratio, topology density, and model heterogeneity. We further validate our analysis through simulations involving multiple models and diverse interaction structures.

\subsection{Mitigation Strategies for Model Collapse}
Model collapse is driven by loss of diversity and distributional drift. To counter this, recent research has explored several directions. 

One widely supported strategy is to keep real data in the training loop. \citet{GoMad} and \citet{Satbility} show that even a small reserve of real data can stabilize learning. Instead of discarding past data, cumulative synthetic samples  without replacing original data have proven more robust \cite{Whathappens,CollapseInevitable}.

Another approach introduces external selection. \citet{curation} and \citet{Co-EvolvingHuman} provide a theoretical analysis of human-curated training loops. They show that when models are retrained on their own outputs filtered by human feedback or a reward model, the process implicitly optimizes for those preferences. This curation mechanism can prevent collapse and guide models toward desirable behavior over time. Building on this, \citet{ScalingUp} emphasize that scaling up with synthetic data requires explicit verification mechanisms. They argue that without such checks, recursive training amplifies model biases and misalignment. \citet{Verification} propose using a verifier model to score generated samples, admitting only high-quality outputs into training.

Some defenses are built into the model loop itself. \citet{SIMS} proposed a method called SIMS for diffusion models. It adds a negative guidance signal during generation to reduce drift and improve stability. \citet{Self-Correcting} designed a training loop where generated samples are first corrected using expert rules before being reused. This helps keep the synthetic data high-quality and reduces the chance of reinforcing errors.  \citet{LatentSpaceFiltering} introduce latent space filtering to diffusion models. By removing low-entropy samples during retraining, their method preserves diversity and delays degradation.

Consistent with prior findings, our study also highlights the importance of real data during retraining. We observe that real data provides a stabilizing signal against distributional drift. Moreover, several existing mitigation strategies discussed above are conceptually orthogonal to our approach and could be incorporated into our framework.

\newpage
\section{Proofs}
\label{sec:proof}

Throughout this appendix, we work under Assumptions. For model $i$, recall that $\varepsilon_i := \max_{j: w_{ij} > 0}d_{\mathcal{W}}\bigl(p_{\bar{\boldsymbol{\theta}}_j},p_i^{\mathrm{data}}\bigr).$
The local objective of model $i$ is denoted
$$
\mathcal{H}_i(\boldsymbol{\theta}; \Theta) 
:= \mathbb{E}_{x \sim p_i^{\mathrm{data}}}[\log p_{\boldsymbol{\theta}}(x)] 
+ \lambda_i \sum_{j=1}^{K} w_{ij}\, 
\mathbb{E}_{x \sim p_{\boldsymbol{\theta}_j}}[\log p_{\boldsymbol{\theta}}(x)],
$$
so that $\mathcal{G}_i(\Theta) \in \arg\max_{\boldsymbol{\theta} \in \Theta_i} 
\mathcal{H}_i(\boldsymbol{\theta}; \Theta)$.

\subsection{Proofs of Characterization of Fixed Points}
\label{subsec:profs_chara_fp}
\begin{lemma}[The local maximum likelihood solution is unique]
\label{lem:well_posedness}
Under assumptions and $\alpha_i > L_i \varepsilon_i$, for each model $i$ there exists a neighborhood $\mathcal{U}$ of $\bar\Theta$ such that for any 
$\Theta \in \mathcal{U}$, the local objective $\boldsymbol{\theta}'_i \mapsto H_i(\boldsymbol{\theta}'_i; \Theta)$ admits a unique local maximizer on $U_i$. Consequently, $\mathcal{G}_i$ is single-valued on $\mathcal{U}$.
\end{lemma}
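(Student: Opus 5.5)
The plan is to show that the local objective $\boldsymbol{\theta}'\mapsto\mathcal{H}_i(\boldsymbol{\theta}';\Theta)$ is strongly concave on $U_i$ uniformly for $\Theta$ near $\bar\Theta$. We then obtain existence of an interior maximizer from continuity of the gradient in $\Theta$, together with the fact that $\bar{\boldsymbol{\theta}}_i$ is a critical point at $\Theta=\bar\Theta$ by Eq.~\eqref{eq:f}.

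\textbf{Step 1: uniform Hessian bound.} Write the Hessian as
\[
\nabla^2_{\boldsymbol{\theta}'}\mathcal{H}_i(\boldsymbol{\theta}';\Theta)=\nabla^2\mathbb{E}_{p_i^{\mathrm{data}}}\log p_{\boldsymbol{\theta}'}+\lambda_i\sum_j w_{ij}\,\mathbb{E}_{p_{\boldsymbol{\theta}_j}}\nabla^2\log p_{\boldsymbol{\theta}'} .
\]
For each $j$ with $w_{ij}>0$, add and subtract $\mathbb{E}_{p_i^{\mathrm{data}}}\nabla^2\log p_{\boldsymbol{\theta}'}$. Assumption~\ref{asp:strong_concave} gives the first piece $\preceq -\alpha_i I$. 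Assumption~\ref{asp:hessian_stability} bounds the deviation in operator norm by $L_i\, d_{\mathcal W}(p_{\boldsymbol{\theta}_j},p_i^{\mathrm{data}})$. Since $w_{ij}$ sums to one, this yields
\[
\nabla^2_{\boldsymbol{\theta}'}\mathcal{H}_i \preceq -\Bigl(\alpha_i+\lambda_i\bigl(\alpha_i-L_i\max_{j:w_{ij}>0} d_{\mathcal W}(p_{\boldsymbol{\theta}_j},p_i^{\mathrm{data}})\bigr)\Bigr) I .
\]
At $\Theta=\bar\Theta$ the max equals $\varepsilon_i$. I will assume (as is implicit in the paper's setup, and needed for the Jacobian analysis anyway) that $\boldsymbol{\theta}_j\mapsto p_{\boldsymbol{\theta}_j}$ is continuous in $d_{\mathcal W}$. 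Then, because $\alpha_i>L_i\varepsilon_i$ is strict, shrinking the neighborhood gives $d_{\mathcal W}(p_{\boldsymbol{\theta}_j},p_i^{\mathrm{data}})\le\varepsilon_i+\delta$ with $L_i\delta$ small. Hence the Hessian is $\preceq-\kappa_i I$ for some $\kappa_i>0$, for all $\boldsymbol{\theta}'\in U_i$ and all $\Theta$ in the neighborhood. Take $U_i$ convex, e.g.\ a ball.

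\textbf{Step 2: uniqueness and existence.} Strict concavity on convex $U_i$ gives at most one critical point there, hence at most one local maximizer. For existence, let $U_i\supseteq B(\bar{\boldsymbol{\theta}}_i,r)$. At $\Theta=\bar\Theta$, $\nabla\mathcal{H}_i(\bar{\boldsymbol{\theta}}_i;\bar\Theta)=0$ by Eq.~\eqref{eq:f}. The map $\Theta\mapsto\nabla_{\boldsymbol{\theta}'}\mathcal{H}_i(\bar{\boldsymbol{\theta}}_i;\Theta)$ is continuous, again using regularity of $p_{\boldsymbol{\theta}_j}$ in $\boldsymbol{\theta}_j$ (the term whose derivative defines $M_i$). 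So for $\Theta$ close enough, $\|\nabla\mathcal{H}_i(\bar{\boldsymbol{\theta}}_i;\Theta)\|<\kappa_i r$. Strong concavity then implies that on $\partial B(\bar{\boldsymbol{\theta}}_i,r)$ the objective is strictly smaller than at the center. The maximum over the closed ball is therefore attained in the interior, and it is a critical point, which is the unique local maximizer. Alternatively, one can apply the implicit function theorem to $\nabla\mathcal{H}_i=0$ at $(\bar{\boldsymbol{\theta}}_i,\bar\Theta)$, with invertible Hessian from Step 1. This also yields the smooth selection used later for the Jacobian. Intersecting the neighborhoods over $i=1,\dots,K$ gives a common $\mathcal{U}$ on which each $\mathcal{G}_i$ is single-valued; this is consistent with the tie-breaking footnote, which is never invoked.

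\textbf{Main obstacle.} The delicate point is Step 1's uniformity in $\Theta$. The constant $\varepsilon_i$ is defined only at $\bar\Theta$, so I must pass from $d_{\mathcal W}(p_{\bar{\boldsymbol{\theta}}_j},\cdot)$ to $d_{\mathcal W}(p_{\boldsymbol{\theta}_j},\cdot)$. This needs continuity of the output distribution in parameters, which I will state explicitly as part of the standing regularity. The strictness of $\alpha_i>L_i\varepsilon_i$ absorbs the resulting slack.
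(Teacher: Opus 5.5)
Your proposal is correct and follows the paper's proof: the same add-and-subtract decomposition of $\nabla^2_{\boldsymbol{\theta}'}\mathcal{H}_i$, Assumption~\ref{asp:strong_concave} on the data term and Assumption~\ref{asp:hessian_stability} on the deviations, continuity of $\boldsymbol{\theta}_j\mapsto d_{\mathcal W}(p_{\boldsymbol{\theta}_j},p_i^{\mathrm{data}})$ to relax $\varepsilon_i$ to $\varepsilon_i+\delta$, and the strict margin $\alpha_i>L_i\varepsilon_i$ to keep the bound uniformly negative definite near $\bar\Theta$. Your Step~2 (convex $U_i$, plus an explicit existence argument for $\Theta\neq\bar\Theta$) fills in what the paper simply asserts from strong concavity; the one slip is that your boundary comparison on $\partial B(\bar{\boldsymbol{\theta}}_i,r)$ needs $\|\nabla\mathcal{H}_i(\bar{\boldsymbol{\theta}}_i;\Theta)\|<\kappa_i r/2$ rather than $\kappa_i r$, which is harmless because you can shrink the neighborhood of $\bar\Theta$ further.
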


\begin{proof}
Fix a model $i$. The Hessian of $H_i$ with respect to the optimizing 
variable $\boldsymbol{\theta}'_i$ is
$$
\nabla_{\boldsymbol{\theta}'_i}^2 H_i(\boldsymbol{\theta}'_i; \Theta) 
= (1 + \lambda_i)\, 
\underbrace{\mathbb{E}_{p_i^{\mathrm{data}}}[\nabla^2 \log p_{\boldsymbol{\theta}'_i}]}
_{\preceq -\alpha_i I}
+ \lambda_i \sum_{j} w_{ij}\, 
\underbrace{\bigl(\mathbb{E}_{p_{\boldsymbol{\theta}_j}}[\nabla^2 \log p_{\boldsymbol{\theta}'_i}] 
- \mathbb{E}_{p_i^{\mathrm{data}}}[\nabla^2 \log p_{\boldsymbol{\theta}'_i}]\bigr)}
_{=:\, \Delta^i_j}.
$$
By Assumption~\ref{asp:hessian_stability}, $\|\Delta^i_j\|_2 \le L_i\, d_{\mathcal{W}}(p_{\boldsymbol{\theta}_j}, p_i^{\mathrm{data}})$. 
When evaluated at $\Theta = \bar\Theta$ and any $j$ with $w_{ij} > 0$, 
this is at most $L_i \varepsilon_i$ by definition. 

By continuity of $\boldsymbol{\theta}_j \mapsto d_{\mathcal{W}}(p_{\boldsymbol{\theta}_j}, p_i^{\mathrm{data}})$, the bound extends to a neighborhood of $\bar\Theta$ with a slightly relaxed constant $\varepsilon_i + \delta$ for any $\delta > 0$. 

Since $\alpha_i > L_i \varepsilon_i$ strictly, we may choose $\delta > 0$ small enough that $\alpha_i > L_i(\varepsilon_i + \delta)$. 

This gives a neighborhood $\mathcal{U}$ of $\bar\Theta$ on which, for all 
$\boldsymbol{\theta}'_i \in U_i$ and $\Theta \in \mathcal{U}$,
$$
\nabla_{\boldsymbol{\theta}'_i}^2 H_i(\boldsymbol{\theta}'_i; \Theta) 
\preceq -\bigl((1 + \lambda_i)\alpha_i - \lambda_i L_i(\varepsilon_i + \delta)\bigr) I 
\prec 0.
$$
Hence $\boldsymbol{\theta}'_i \mapsto H_i(\boldsymbol{\theta}'_i; \Theta)$ 
is strongly concave on $U_i$ uniformly in $\Theta \in \mathcal{U}$, so it 
admits a unique local maximizer and $\mathcal{G}_i$ is single-valued on 
$\mathcal{U}$.
\end{proof}

\begin{lemma}[Characterization of fixed points]
\label{sublem:stationarity_iff}
Under assumptions and $\alpha_i > L_i \varepsilon_i$, a global parameter vector $\bar\Theta$ is a fixed point of $\mathcal{G}$ if and only if, for every $i \in \mathcal{V}$,
$$
\nabla_{\boldsymbol{\theta}} \mathbb{E}_{x \sim p_i^{\mathrm{data}}}
[\log p_{\bar{\boldsymbol{\theta}}_i}(x)] 
+ \lambda_i \sum_{j=1}^{K} w_{ij}\, 
\nabla_{\boldsymbol{\theta}} \mathbb{E}_{x \sim p_{\bar{\boldsymbol{\theta}}_j}}
[\log p_{\bar{\boldsymbol{\theta}}_i}(x)] = 0.
$$
\end{lemma}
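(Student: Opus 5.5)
We prove the two directions separately, using Lemma~\ref{lem:well_posedness} as the main tool. Throughout, we work in the neighborhood $\mathcal{U}=\prod_i U_i$ of $\bar\Theta$ given by that lemma. On it, for every $\Theta\in\mathcal{U}$, the map $\boldsymbol{\theta}\mapsto\mathcal{H}_i(\boldsymbol{\theta};\Theta)$ is uniformly strongly concave on $U_i$ with modulus $(1+\lambda_i)\alpha_i-\lambda_i L_i(\varepsilon_i+\delta)>0$. Consequently $\mathcal{G}_i(\Theta)$ is the unique local maximizer of $\mathcal{H}_i(\cdot;\Theta)$ on $U_i$.

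\emph{Necessity.} Suppose $\bar\Theta=\mathcal{G}(\bar\Theta)$. Then for each $i$, $\bar{\boldsymbol{\theta}}_i=\mathcal{G}_i(\bar\Theta)$ maximizes $\mathcal{H}_i(\cdot;\bar\Theta)$, and it lies in the interior of $U_i$, since $U_i$ is a neighborhood of $\bar{\boldsymbol{\theta}}_i$. By Assumption~\ref{asp:strong_concave} the objective is $C^2$ there. The synthetic terms $\mathbb{E}_{x\sim p_{\bar{\boldsymbol{\theta}}_j}}[\log p_{\boldsymbol{\theta}}(x)]$ are differentiable in $\boldsymbol{\theta}$ as well; this is implicit in Assumption~\ref{asp:hessian_stability}, which requires $\mathbb{E}_q\nabla^2_{\boldsymbol{\theta}}\log p_{\boldsymbol{\theta}}$ to be well defined for every $q$. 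Hence the first-order condition $\nabla_{\boldsymbol{\theta}}\mathcal{H}_i(\bar{\boldsymbol{\theta}}_i;\bar\Theta)=0$ holds. Expanding this gradient gives exactly the displayed equation.

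\emph{Sufficiency.} Suppose the stationarity equation holds at $\bar\Theta$ for every $i$. It says precisely that $\nabla_{\boldsymbol{\theta}}\mathcal{H}_i(\bar{\boldsymbol{\theta}}_i;\bar\Theta)=0$. Because $\mathcal{H}_i(\cdot;\bar\Theta)$ is strongly concave on $U_i$, and we may take $U_i$ convex (a ball around $\bar{\boldsymbol{\theta}}_i$), a stationary point is its unique maximizer on $U_i$. Indeed, for any $\boldsymbol{\theta}\in U_i$,
\[
\mathcal{H}_i(\boldsymbol{\theta};\bar\Theta)\le \mathcal{H}_i(\bar{\boldsymbol{\theta}}_i;\bar\Theta)-\tfrac{c_i}{2}\|\boldsymbol{\theta}-\bar{\boldsymbol{\theta}}_i\|^2,
\]
where $c_i$ is the strong-concavity modulus. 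By Lemma~\ref{lem:well_posedness}, $\mathcal{G}_i(\bar\Theta)$ is this unique local maximizer, so $\mathcal{G}_i(\bar\Theta)=\bar{\boldsymbol{\theta}}_i$ for all $i$, i.e., $\bar\Theta=\mathcal{G}(\bar\Theta)$.

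\emph{Main obstacle.} The delicate point is the meaning of $\mathcal{G}_i$: Eq.~\eqref{eq:update_rule} is a global $\arg\max$ over $\Theta_i$, whereas strong concavity is only local. The plan is to adopt the same interpretation as Lemma~\ref{lem:well_posedness}, namely that near $\bar\Theta$ the update is the unique local maximizer on $U_i$. This is consistent with the footnote's fine-tuning selection rule, which picks the maximizer closest to the previous iterate. We will state explicitly that the equivalence is understood in this local sense. A secondary technical point is that the gradient and the expectation may be interchanged. We take this as part of the standing regularity, or justify it by dominated convergence using the $C^2$ assumption on $U_i$.
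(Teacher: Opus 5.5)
Your proposal is correct and follows the paper's proof: Fermat's rule at the interior point $\bar{\boldsymbol{\theta}}_i$ gives necessity, and for sufficiency the uniform strong concavity from Lemma~\ref{lem:well_posedness} makes the stationary point the unique local maximizer on $U_i$. The paper then closes by noting that the nearest-to-previous-iterate tie-breaking rule selects $\bar{\boldsymbol{\theta}}_i$ at distance zero. Your explicit statement that the equivalence is understood in the local sense, and your choice of a convex ball for $U_i$, make precise what the paper's argument assumes without saying.
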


\begin{proof}
($\Rightarrow$) Suppose $\bar\Theta = \mathcal{G}(\bar\Theta)$. Then for 
each $i$, $\bar{\boldsymbol{\theta}}_i = \mathcal{G}_i(\bar\Theta) \in \arg\max_{\boldsymbol{\theta} \in \Theta_i} H_i(\boldsymbol{\theta}; \bar\Theta)$. 

Since $\mathcal{H}_i$ is twice continuously differentiable, and $\bar{\boldsymbol{\theta}}_i$ is an interior local maximizer (strong concavity on $U_i$ from Lemma~\ref{lem:well_posedness} places the maximum in the interior of $U_i$), Fermat's rule gives $\nabla_{\boldsymbol{\theta}} H_i(\bar{\boldsymbol{\theta}}_i; \bar\Theta) = 0$.

($\Leftarrow$) Suppose $\nabla_{\boldsymbol{\theta}} H_i(\bar{\boldsymbol{\theta}}_i; \bar\Theta) = 0$ holds for every $i$.
By Lemma~\ref{lem:well_posedness}, $\boldsymbol{\theta} \mapsto H_i(\boldsymbol{\theta}; \bar\Theta)$ is strongly concave on $U_i$, so any stationary point in $U_i$ is the unique local maximizer on $U_i$. 

Evaluating the tie-breaking rule at $\boldsymbol{\theta}_i^{t-1} = \bar{\boldsymbol{\theta}}_i$ gives distance zero, so $\mathcal{G}_i(\bar\Theta) = \bar{\boldsymbol{\theta}}_i$. Stacking over $i$: $\mathcal{G}(\bar\Theta) = \bar\Theta$.
\end{proof}

\subsection{Proof of the Uncoupled Solution as a Fixed Point}
\label{subsec:profs_uncoupled}

\begin{proposition}[Uncoupled solution as a fixed point]
\label{prop:uncoupled}
Suppose the real data distributions are homogeneous, 
$p_i^{\mathrm{data}} \equiv p^{\mathrm{data}}$ for all $i \in \mathcal{V}$, and there exists a distribution $p^\star$ such that $p_{\boldsymbol{\theta}_i^\star} \equiv p^\star$ for every $i \in \mathcal{V}$. Then $\Theta^\star = (\boldsymbol{\theta}_1^\star, \dots, \boldsymbol{\theta}_K^\star)$ is a fixed point of $\mathcal{G}$ for any $\lambda_i \ge 0$ and any row-stochastic $\mathbf{W}$.
\end{proposition}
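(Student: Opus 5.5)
We check the fixed-point property directly: we evaluate the update map $\mathcal{G}$ at $\Theta^\star$ and show that each component returns $\boldsymbol{\theta}_i^\star$. The key observation is that under hypotheses (i) and (ii), the training mixture of every model at $\Theta^\star$ collapses to a single objective. Because $p_{\boldsymbol{\theta}_j^\star} \equiv p^\star$ for every $j$ and $\mathbf{W}$ is row-stochastic, we get $\sum_j w_{ij} p_{\boldsymbol{\theta}_j^\star} = p^\star$. The local objective of model $i$ at $\Theta^\star$ is therefore
\[
\mathcal{H}_i(\boldsymbol{\theta};\Theta^\star) = \mathbb{E}_{x\sim p^{\mathrm{data}}}[\log p_{\boldsymbol{\theta}}(x)] + \lambda_i\, \mathbb{E}_{x\sim p^\star}[\log p_{\boldsymbol{\theta}}(x)],
\]
and it no longer depends on $\mathbf{W}$ at all.

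The second step shows that $\boldsymbol{\theta}_i^\star$ maximizes each of the two terms separately, and hence their nonnegative combination. The first term is maximized at $\boldsymbol{\theta}_i^\star$ by the definition of the MLE, using homogeneity $p_i^{\mathrm{data}} = p^{\mathrm{data}}$. For the second term, Gibbs' inequality gives $\mathbb{E}_{p^\star}[\log p_{\boldsymbol{\theta}}] \le \mathbb{E}_{p^\star}[\log p^\star]$ for every $\boldsymbol{\theta}$, that is, $\mathrm{KL}(p^\star\|p_{\boldsymbol{\theta}})\ge 0$. Equality holds at $\boldsymbol{\theta}=\boldsymbol{\theta}_i^\star$ because $p_{\boldsymbol{\theta}_i^\star}=p^\star$. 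So $\boldsymbol{\theta}_i^\star$ is a global maximizer of both terms, and thus of $\mathcal{H}_i(\cdot;\Theta^\star)$ for any $\lambda_i\ge 0$. This argument needs no concavity assumption, which is why the statement holds for arbitrary $\lambda_i$ and arbitrary row-stochastic $\mathbf{W}$.

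The final step deals with non-uniqueness, and I expect it to be the only delicate point. The argmax in Eq.~\eqref{eq:update_rule} may be a set, and $\mathcal{G}_i$ selects the element closest to the previous iterate. Taking the previous iterate to be $\boldsymbol{\theta}_i^\star$ itself, we have $\boldsymbol{\theta}_i^\star\in\widehat{\Theta}_i$, so the tie-breaking rule returns the point at distance zero, namely $\boldsymbol{\theta}_i^\star$. This is the same argument used in the ($\Leftarrow$) direction of Lemma~\ref{sublem:stationarity_iff}. Stacking over $i$ gives $\mathcal{G}(\Theta^\star)=\Theta^\star$. As a consistency check, if Assumptions~\ref{asp:strong_concave}--\ref{asp:hessian_stability} are imposed, the stationarity condition Eq.~\eqref{eq:f} can also be verified directly from Lemma~\ref{sublem:stationarity_iff}. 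One small technicality is that the expectations must be finite, so that the KL/Gibbs inequality is meaningful. I would state this as implicit in the well-definedness of the MLE objective.
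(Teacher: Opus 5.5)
Your proposal is correct and follows the paper's proof essentially step for step. You collapse the mixture to $p^\star$ via row-stochasticity, show that $\boldsymbol{\theta}_i^\star$ maximizes the data term by the MLE definition and the synthetic term by nonnegativity of $D_{\mathrm{KL}}(p^\star\,\|\,p_{\boldsymbol{\theta}})$, and close with the nearest-maximizer tie-breaking rule at distance zero, just as the paper does (your remark on finiteness of the expectations is a technicality the paper leaves implicit).
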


\begin{proof}
Fix any $i \in \mathcal{V}$. The local objective of model $i$ evaluated at $\Theta = \Theta^\star$ is
\begin{align}
H_i(\boldsymbol{\theta}; \Theta^\star) 
&= \mathbb{E}_{x \sim p^{\mathrm{data}}}[\log p_{\boldsymbol{\theta}}(x)] 
+ \lambda_i \sum_{j=1}^{K} w_{ij}\, 
\mathbb{E}_{x \sim p_{\boldsymbol{\theta}_j^\star}}[\log p_{\boldsymbol{\theta}}(x)] 
\notag \\
&= \mathbb{E}_{x \sim p^{\mathrm{data}}}[\log p_{\boldsymbol{\theta}}(x)] 
+ \lambda_i \sum_{j=1}^{K} w_{ij}\, 
\mathbb{E}_{x \sim p^\star}[\log p_{\boldsymbol{\theta}}(x)] 
\notag \\
&= \mathbb{E}_{x \sim p^{\mathrm{data}}}[\log p_{\boldsymbol{\theta}}(x)] 
+ \lambda_i\, 
\mathbb{E}_{x \sim p^\star}[\log p_{\boldsymbol{\theta}}(x)], 
\label{eq:Hi_at_star}
\end{align}
where the second equality uses $p_{\boldsymbol{\theta}_j^\star} \equiv p^\star$ for all $j$, and the third uses row-stochasticity $\sum_j w_{ij} = 1$.

We show $\boldsymbol{\theta}_i^\star$ maximizes each term on the right-hand side of~\eqref{eq:Hi_at_star} over $\Theta_i$.

For the first term, by definition of $\boldsymbol{\theta}_i^\star$ as the MLE on $p^{\mathrm{data}}$:
\begin{equation}
\mathbb{E}_{x \sim p^{\mathrm{data}}}[\log p_{\boldsymbol{\theta}}(x)] 
\le \mathbb{E}_{x \sim p^{\mathrm{data}}}[\log p_{\boldsymbol{\theta}_i^\star}(x)], 
\qquad \forall \boldsymbol{\theta} \in \Theta_i.
\label{eq:first_term_opt}
\end{equation}
For the second term, since $p_{\boldsymbol{\theta}_i^\star} \equiv p^\star$, for any $\boldsymbol{\theta} \in \Theta_i$:
\begin{align}
\mathbb{E}_{x \sim p^\star}[\log p_{\boldsymbol{\theta}}(x)] 
&= \mathbb{E}_{x \sim p^\star}[\log p_{\boldsymbol{\theta}_i^\star}(x)] 
- D_{\mathrm{KL}}(p^\star \,\|\, p_{\boldsymbol{\theta}}) \notag \\
&\le \mathbb{E}_{x \sim p^\star}[\log p_{\boldsymbol{\theta}_i^\star}(x)], 
\label{eq:second_term_opt}
\end{align}
using non-negativity of KL divergence. 

Combining~\eqref{eq:Hi_at_star}--\eqref{eq:second_term_opt} yields
$$
H_i(\boldsymbol{\theta}; \Theta^\star) 
\le H_i(\boldsymbol{\theta}_i^\star; \Theta^\star), 
\qquad \forall \boldsymbol{\theta} \in \Theta_i.
$$
Hence $\boldsymbol{\theta}_i^\star$ is a maximizer of $H_i(\cdot; \Theta^\star)$.

By the nearest-local-maximizer tie-breaking rule, $\boldsymbol{\theta}_i^{t-1} = \boldsymbol{\theta}_i^\star$ gives distance zero, so $\mathcal{G}_i(\Theta^\star) = \boldsymbol{\theta}_i^\star$. 

Stacking over $i$: $\mathcal{G}(\Theta^\star) = \Theta^\star$.
\end{proof}

\subsection{Example: Heterogeneous Fixed Point Different from the Uncoupled Solution}
\label{subsec:example}
We present a simple two-model example illustrating that, under heterogeneous data, the coupled retraining dynamics can admit a stable fixed point distinct from the uncoupled solution $\Theta^\star$.

Consider two models $K=2$ with scalar parameters $\theta_1,\theta_2\in\mathbb R$.

Each model observes a different real-data distribution, let the $\log$-likelihood be given by
$$
\log p_{\boldsymbol{\theta}_i}(x) = -\tfrac12(\theta_i-a_i)^2
$$
so that in the absence of interactions, the uncoupled solution is $\theta_i^\star=a_i$, $
\Theta^\star = (a_1, a_2)$.

Introduce symmetric synthetic interactions $\lambda_i=\lambda_j=\lambda$ and $w_{ij}=w_{ji}=1$ by letting each model update according to
$$
H_i(\theta_i;\theta_j)
= \log p_{\boldsymbol{\theta}_i}(x) - \tfrac{\lambda}{2}(\theta_i-\theta_j)^2
$$

For each $i$, the objective $H_i$ is strictly concave in $\theta_i$, so the best-response map $\mathcal{G}_i$ is uniquely defined and smooth.

To compute the update map $\mathcal{G}_i$, we take the first-order condition with respect to $\theta_i$:
$$
\frac{\partial H_i}{\partial \theta_i}
= -(\theta_i - a_i) - \lambda(\theta_i - \theta_j)
$$
Setting this derivative to zero yields
$$
(1+\lambda)\theta_i = a_i + \lambda \theta_j
$$
and hence the unique best response
$$
\mathcal{G}_i(\theta_j) = \frac{a_i + \lambda \theta_j}{1+\lambda}
$$

A fixed point $\bar\Theta=(\bar\theta_1,\bar\theta_2)$ therefore satisfies
$$
\bar\theta_1=\frac{a_1+\lambda \bar\theta_2}{1+\lambda}
\qquad
\bar\theta_2=\frac{a_2+\lambda \bar\theta_1}{1+\lambda}
$$
So,
$$
\bar\theta_1=\frac{(1+\lambda)a_1+\lambda a_2}{1+2\lambda}
\qquad
\bar\theta_2=\frac{\lambda a_1+(1+\lambda)a_2}{1+2\lambda}
$$

For example, setting $a_1=1$, $a_2=-1$, and $\lambda=1$ gives the unique fixed point
$$
\bar\Theta=\bigl(\tfrac13,-\tfrac13\bigr)
$$
whereas the uncoupled solution is $\Theta^\star=(1,-1)$.

\paragraph{Local stability.}
The Jacobian of the update map $\mathcal{G}$ is
$$
\mathbf J =
\begin{pmatrix}
0 & \frac{\lambda}{1+\lambda} \\
\frac{\lambda}{1+\lambda} & 0
\end{pmatrix}
$$
Its spectral radius is $\rho(\mathbf J)=\frac{\lambda}{1+\lambda}<1$, confirming that the fixed point $\bar\Theta$ is locally stable.

Thus, under heterogeneous data, the fixed point of the coupled retraining dynamics exists and is locally stable, but differs from the uncoupled solution.

\subsection{Proofs of Theorem~\ref{thm:stability}}
\label{subsec:profs_stability}

\begin{proposition}[Block Jacobian bound]
\label{subprop:jac-block}
Let $\bar\Theta$ be a fixed point of $\mathcal{G}$. Under assumptions and $\alpha_i > L_i \varepsilon_i$, $\mathcal{G}$ is $C^1$ in a neighborhood of $\bar\Theta$, the Jacobian $\mathbf{J} := \nabla_\Theta \mathcal{G}(\bar\Theta)$ is well-defined, and its $(i,j)$-th block satisfies
\begin{equation}
\label{eq:block_bound_final}
\|J_{ij}\|_2 \le \beta_i w_{ij}, 
\qquad \beta_i := \frac{\lambda_i M_i}{\alpha_i + \lambda_i(\alpha_i - L_i \varepsilon_i)}.
\end{equation}
\end{proposition}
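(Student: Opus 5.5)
The plan is to apply the implicit function theorem to the stacked first-order conditions. For each $i$ define
\[
\Phi_i(\boldsymbol{\theta}'_i;\Theta) := \nabla_{\boldsymbol{\theta}} \mathcal{H}_i(\boldsymbol{\theta}'_i;\Theta).
\]
By Lemma~\ref{lem:well_posedness}, for $\Theta$ in a neighborhood $\mathcal{U}$ of $\bar\Theta$ the map $\boldsymbol{\theta}'_i\mapsto \mathcal{H}_i(\boldsymbol{\theta}'_i;\Theta)$ is strongly concave on $U_i$ with a unique interior maximizer. Hence $\mathcal{G}_i(\Theta)$ is the unique solution in $U_i$ of $\Phi_i(\mathcal{G}_i(\Theta);\Theta)=0$, and Lemma~\ref{sublem:stationarity_iff} gives $\Phi_i(\bar{\boldsymbol{\theta}}_i;\bar\Theta)=0$.

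Since $\nabla_{\boldsymbol{\theta}'_i}\Phi_i=\nabla^2_{\boldsymbol{\theta}'_i}\mathcal{H}_i$ is negative definite, it is invertible. Assuming enough smoothness to differentiate under the expectation in $\boldsymbol{\theta}_j$ (implicit in the $C^1$ claim, and needed for $M_i$ to be meaningful), the implicit function theorem yields that $\mathcal{G}_i$ is $C^1$ near $\bar\Theta$, with
\[
J_{ij} = -\bigl[\nabla^2_{\boldsymbol{\theta}}\mathcal{H}_i(\bar{\boldsymbol{\theta}}_i;\bar\Theta)\bigr]^{-1}\,\nabla_{\boldsymbol{\theta}_j}\Phi_i(\bar{\boldsymbol{\theta}}_i;\bar\Theta).
\]
Stacking over $i$ shows $\mathcal{G}$ is $C^1$ and $\mathbf{J}$ is well defined.

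Both factors are then bounded separately.

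\textbf{Hessian factor.} Write the Hessian as in the proof of Lemma~\ref{lem:well_posedness}:
\[
(1+\lambda_i)\,\mathbb{E}_{p_i^{\mathrm{data}}}[\nabla^2\log p_{\bar{\boldsymbol{\theta}}_i}] + \lambda_i\sum_j w_{ij}\Delta^i_j .
\]
Each $\Delta^i_j$ with $w_{ij}>0$ has norm at most $L_i\varepsilon_i$, now exactly at $\bar\Theta$ with no $\delta$ slack. Using row-stochasticity, the Hessian is $\preceq -\bigl(\alpha_i+\lambda_i(\alpha_i-L_i\varepsilon_i)\bigr)I$. Since this is a symmetric negative definite matrix, its inverse has spectral norm at most $1/\bigl(\alpha_i+\lambda_i(\alpha_i-L_i\varepsilon_i)\bigr)$; this quantity is positive because $\alpha_i>L_i\varepsilon_i$.

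\textbf{Cross derivative.} Only the term $\lambda_i w_{ij}\nabla_{\boldsymbol{\theta}}\mathbb{E}_{x\sim p_{\boldsymbol{\theta}_j}}[\log p_{\boldsymbol{\theta}}(x)]$ depends on $\boldsymbol{\theta}_j$. Differentiating $\int p_{\boldsymbol{\theta}_j}(x)\nabla_{\boldsymbol{\theta}}\log p_{\boldsymbol{\theta}}(x)\,dx$ in $\boldsymbol{\theta}_j$ with the score identity $\nabla_{\boldsymbol{\theta}_j}p_{\boldsymbol{\theta}_j}=p_{\boldsymbol{\theta}_j}\nabla_{\boldsymbol{\theta}_j}\log p_{\boldsymbol{\theta}_j}$ gives
\[
\nabla_{\boldsymbol{\theta}_j}\Phi_i = \lambda_i w_{ij}\,\mathbb{E}_{x\sim p_{\bar{\boldsymbol{\theta}}_j}}\bigl[\nabla_{\boldsymbol{\theta}}\log p_{\bar{\boldsymbol{\theta}}_i}(x)\,\nabla_{\boldsymbol{\theta}_j}\log p_{\bar{\boldsymbol{\theta}}_j}(x)^\top\bigr].
\]
Its spectral norm is at most $\lambda_i w_{ij}M_i$ whenever $w_{ij}>0$, and the block vanishes when $w_{ij}=0$.

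Submultiplicativity then gives $\|J_{ij}\|_2\le \lambda_i w_{ij}M_i/\bigl(\alpha_i+\lambda_i(\alpha_i-L_i\varepsilon_i)\bigr)=\beta_i w_{ij}$.

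I expect the Hessian bound at $\bar\Theta$ to be the main obstacle. One must check that Assumption~\ref{asp:hessian_stability} applies to each parent's distribution $p_{\bar{\boldsymbol{\theta}}_j}$, with $\varepsilon_i$ defined as a maximum only over parents, and that the convex combination $\sum_j w_{ij}\Delta^i_j$ inherits the bound $L_i\varepsilon_i$. The other delicate point is justifying the interchange of differentiation and integration in the cross term; I would simply assume the standard dominated-convergence regularity that underlies the definition of $M_i$.
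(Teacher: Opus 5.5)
Your proposal is correct and follows essentially the same route as the paper: the implicit function theorem applied to the stationarity map $\nabla_{\boldsymbol{\theta}}\mathcal{H}_i$, the Hessian bound $\preceq -\bigl(\alpha_i+\lambda_i(\alpha_i-L_i\varepsilon_i)\bigr)I$ at $\bar\Theta$, and submultiplicativity with the cross-derivative bounded by $M_i$. Two of your steps make explicit what the paper only asserts: the score-identity computation that identifies the cross-derivative with the matrix in the definition of $M_i$, and the observation that no $\delta$-slack is needed when bounding the Hessian exactly at $\bar\Theta$.
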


\begin{proof}
\emph{$C^1$ smoothness and Jacobian decomposition.}
Define the stationarity map 
$$
F_i(\boldsymbol{\theta}'_i; \Theta) := 
\nabla_{\boldsymbol{\theta}'_i} H_i(\boldsymbol{\theta}'_i; \Theta).
$$
By Lemma~\ref{sublem:stationarity_iff}, $F_i(\bar{\boldsymbol{\theta}}_i; \bar\Theta) = 0$. Moreover, by Lemma~\ref{lem:well_posedness}, 
$H_{ii} := \nabla_{\boldsymbol{\theta}'_i}^2 H_i(\bar{\boldsymbol{\theta}}_i; \bar\Theta) \prec 0$, 
hence invertible.

The Implicit Function Theorem \citep{ImplicitFunctionTheorem} applied to $F_i = 0$ yields a 
$C^1$ selection $\Theta \mapsto \mathcal{G}_i(\Theta)$ in a neighborhood 
of $\bar\Theta$ with $\mathcal{G}_i(\bar\Theta) = \bar{\boldsymbol{\theta}}_i$ 
and
$$
\nabla_\Theta \mathcal{G}_i(\bar\Theta) 
= -H_{ii}^{-1}\, \nabla_\Theta F_i(\bar{\boldsymbol{\theta}}_i; \bar\Theta).
$$

Stacking over $i$, $\mathcal{G}$ is $C^1$ and $\mathbf{J} := \nabla_\Theta \mathcal{G}(\bar\Theta)$ is well-defined.

Differentiating $F_i$ with respect to $\boldsymbol{\theta}_j$:
$$
\frac{\partial F_i}{\partial {\boldsymbol{\theta}}_j}(\bar{\boldsymbol{\theta}}_i)
=
\lambda_i w_{ij}
\frac{\partial}{\partial {\boldsymbol{\theta}}_j}
\Big(
\mathbb{E}_{p_{{\boldsymbol{\theta}}_j}}\big[\nabla_\theta \log p_{\bar{\boldsymbol{\theta}}_i}\big]
\Big)\Big|_{{\boldsymbol{\theta}}_j=\bar{\boldsymbol{\theta}}_j}
=
\lambda_i w_{ij}B_{ij}
$$

Hence the $(i,j)$-th block of $\mathbf{J}$ is
$$
J_{ij} = -H_{ii}^{-1}\, \lambda_i w_{ij}\, B_{ij}.
$$
Taking spectral norms,
\begin{equation}
\label{eq:Jij_norm_bound}
\|J_{ij}\|_2 \le \|H_{ii}^{-1}\|_2 \cdot \lambda_i w_{ij} \cdot \|B_{ij}\|_2.
\end{equation}

\emph{Bounding $\|H_{ii}^{-1}\|_2$.}  
From the proof of Lemma~\ref{lem:well_posedness}, $H_{ii} \preceq -\bigl((1+\lambda_i)\alpha_i - \lambda_i L_i \varepsilon_i\bigr) I$. 

Since $\alpha_i > L_i \varepsilon_i$, the bracketed quantity is positive, 
so
\begin{equation}
\label{eq:Hii_inv_bound}
\|H_{ii}^{-1}\|_2 
\le \frac{1}{(1+\lambda_i)\alpha_i - \lambda_i L_i \varepsilon_i}.
\end{equation}

\emph{Bounding $\|B_{ij}\|_2$.}
By assumption, $\|B_{ij}\|_2 \le M_i$.

Combining \eqref{eq:Jij_norm_bound}, \eqref{eq:Hii_inv_bound}, and 
$\|B_{ij}\|_2 \le M_i$:
$$
\|J_{ij}\|_2 \le 
\frac{\lambda_i w_{ij} M_i}{(1+\lambda_i)\alpha_i - \lambda_i L_i \varepsilon_i} 
= \frac{\lambda_i M_i}{\alpha_i + \lambda_i(\alpha_i - L_i \varepsilon_i)} \cdot w_{ij} 
= \beta_i w_{ij}.
$$
\end{proof}

\begin{lemma}[Comparison principle $\rho(\mathbf{J}) \le \rho(\mathbf{D})$]
\label{lem:J_vs_D}
Let $\mathbf{J} = \nabla_\Theta \mathcal{G}(\bar\Theta)$ with block decomposition satisfying $\|J_{ij}\|_2 \le \beta_i w_{ij}$, as established in Proposition~\ref{subprop:jac-block}, and let $\mathbf{D} := \mathrm{diag}(\beta_1, \dots, \beta_K) \mathbf{W}$. Then $\rho(\mathbf{J}) \le \rho(\mathbf{D}).$
\end{lemma}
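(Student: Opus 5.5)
The plan is the standard comparison argument for block matrices: show that powers of $\mathbf{J}$ are controlled blockwise by powers of $\mathbf{D}$, then pass to spectral radii with Gelfand's formula. Write $\mathbf{J}=[J_{ij}]_{i,j=1}^K$ with $J_{ij}\in\mathbb{R}^{d_i\times d_j}$. Define the nonnegative $K\times K$ matrix $\mathbf{N}(\mathbf{A}):=[\|A_{ij}\|_2]$ for any conformally partitioned block matrix $\mathbf{A}$. By Proposition~\ref{subprop:jac-block} we have the entrywise inequality $\mathbf{N}(\mathbf{J})\le \mathbf{D}$, since $D_{ij}=\beta_i w_{ij}$.

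The first step is submultiplicativity of the block-norm matrix. For block matrices $\mathbf{A},\mathbf{B}$ we have
\[
\|(\mathbf{A}\mathbf{B})_{ij}\|_2=\Bigl\|\sum_k A_{ik}B_{kj}\Bigr\|_2\le \sum_k \|A_{ik}\|_2\|B_{kj}\|_2 ,
\]
so $\mathbf{N}(\mathbf{A}\mathbf{B})\le \mathbf{N}(\mathbf{A})\mathbf{N}(\mathbf{B})$ entrywise. Iterating gives $\mathbf{N}(\mathbf{J}^t)\le \mathbf{N}(\mathbf{J})^t$. Because $\mathbf{N}(\mathbf{J})$ and $\mathbf{D}$ are nonnegative, monotonicity of products of nonnegative matrices then gives $\mathbf{N}(\mathbf{J}^t)\le \mathbf{D}^t$ entrywise for every $t\ge 1$.

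Next, convert the blockwise bound into an operator-norm bound. For $v=(v_1,\dots,v_K)$, the triangle inequality gives $\|(\mathbf{A}v)_i\|\le \sum_j\|A_{ij}\|_2\|v_j\|$. Setting $u=(\|v_1\|,\dots,\|v_K\|)\in\mathbb{R}^K_{\ge0}$, so that $\|u\|=\|v\|$, we get $\|\mathbf{A}v\|\le \|\mathbf{N}(\mathbf{A})u\|$, hence $\|\mathbf{A}\|_2\le \|\mathbf{N}(\mathbf{A})\|_2$. For nonnegative matrices $0\le \mathbf{P}\le\mathbf{Q}$ entrywise, we also have $\|\mathbf{P}\|_2\le\|\mathbf{Q}\|_2$, because the maximizing vector may be taken nonnegative. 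Together these give $\|\mathbf{J}^t\|_2\le \|\mathbf{D}^t\|_2$ for all $t$.

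Finally, Gelfand's formula gives
\[
\rho(\mathbf{J})=\lim_t\|\mathbf{J}^t\|_2^{1/t}\le \lim_t\|\mathbf{D}^t\|_2^{1/t}=\rho(\mathbf{D}).
\]
The main point needing care is the entrywise monotonicity of the spectral norm for nonnegative matrices. This follows from $\|\mathbf{P}\|_2=\max_{\|u\|=1,u\ge0}\|\mathbf{P}u\|$ and from $\mathbf{P}u\le \mathbf{Q}u$ componentwise when $u\ge 0$. Everything else is routine.
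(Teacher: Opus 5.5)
Your proof is correct and follows the same route as the paper: the blockwise induction $\|(\mathbf{J}^t)_{ij}\|_2 \le (\mathbf{D}^t)_{ij}$, then a conversion to operator norms, then Gelfand's formula. The only difference is the conversion step, and yours is slightly cleaner. The paper passes through Frobenius norms and picks up a factor $\sqrt{K d_{\max}}$ that disappears under the $t$-th root. Your compression argument with $u=(\|v_1\|,\dots,\|v_K\|)$, together with entrywise monotonicity of the spectral norm on nonnegative matrices, gives the constant-free bound $\|\mathbf{J}^t\|_2 \le \|\mathbf{D}^t\|_2$ directly, and it also avoids the paper's implicit assumption that the blocks are square in the step $\|M\|_F \le \sqrt{d}\,\|M\|_2$.
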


\begin{proof}
We first show by induction that for all $t \ge 1$,
\begin{equation}
\label{eq:power_bound}
\|(\mathbf{J}^t)_{ij}\|_2 \le (\mathbf{D}^t)_{ij}, 
\qquad \forall i, j \in \mathcal{V}.
\end{equation}

\emph{Base case} ($t = 1$): $\|J_{ij}\|_2 \le \beta_i w_{ij} = D_{ij}$ 
by Proposition~\ref{subprop:jac-block}.

\emph{Inductive step}: assume \eqref{eq:power_bound} holds for some $t \ge 1$. Then for the $(i,j)$-th block of $\mathbf{J}^{t+1} = \mathbf{J} \cdot \mathbf{J}^t$,
\begin{align*}
\|(\mathbf{J}^{t+1})_{ij}\|_2 
&= \Bigl\|\sum_{k \in \mathcal{V}} J_{ik}\, (\mathbf{J}^t)_{kj}\Bigr\|_2 
\le \sum_k \|J_{ik}\|_2\, \|(\mathbf{J}^t)_{kj}\|_2 \\
&\le \sum_k D_{ik}\, (\mathbf{D}^t)_{kj} 
= (\mathbf{D}^{t+1})_{ij},
\end{align*}
where the first inequality uses triangle inequality and submultiplicativity 
of the spectral norm, and the seconduses the inductive hypothesis.

To convert the entrywise block bound to a spectral norm bound on 
$\mathbf{J}^t$, using the Frobenius norm $\|\cdot\|_F$:
$$
\|\mathbf{J}^t\|_2 \le \|\mathbf{J}^t\|_F 
= \sqrt{\sum_{i, j} \|(\mathbf{J}^t)_{ij}\|_F^2} 
$$
$d_{\max} := \max_i d_i$ accounts for block dimensions, using 
$\|M\|_F \le \sqrt{d}\, \|M\|_2$ for any $d \times d$ matrix $M$:
$$
\sqrt{\sum_{i, j} \|(\mathbf{J}^t)_{ij}\|_F^2} 
\le \sqrt{d_{\max}} \sqrt{\sum_{i, j} \|(\mathbf{J}^t)_{ij}\|_2^2} 
\le \sqrt{d_{\max}}\, \sqrt{\sum_{i, j} (\mathbf{D}^t)_{ij}^2} 
= \sqrt{d_{\max}}\, \|\mathbf{D}^t\|_F
$$

Combined with $\|\mathbf{D}^t\|_F \le \sqrt{K}\, \|\mathbf{D}^t\|_2$:
$$
\|\mathbf{J}^t\|_2 \le \sqrt{K\, d_{\max}}\, \|\mathbf{D}^t\|_2.
$$

Applying Gelfand's formula \citep{matrix}:
$$
\rho(\mathbf{J}) = \lim_{t \to \infty} \|\mathbf{J}^t\|_2^{1/t} 
\le \lim_{t \to \infty} (K\, d_{\max})^{1/(2t)}\, \|\mathbf{D}^t\|_2^{1/t} 
= \rho(\mathbf{D}),
$$
since $(K\, d_{\max})^{1/(2t)} \to 1$.
\end{proof}

\Stability*
\begin{proof}
By Proposition~\ref{subprop:jac-block}, $\mathcal{G}$ is $C^1$ in a neighborhood of $\bar\Theta$ and the Jacobian $\mathbf{J} := \nabla_\Theta \mathcal{G}(\bar\Theta)$ is well-defined. 

By Lemma~\ref{lem:J_vs_D}, $\rho(\mathbf{J}) \le \rho(\mathbf{D})$.

Combined with the hypothesis $\rho(\mathbf{D}) < 1$, we have $\rho(\mathbf{J}) < 1$, which implies that $\bar\Theta$ is a locally asymptotically stable fixed point.

\emph{Choice of contraction norm.}
For any $\delta > 0$ with $\rho(\mathbf{J}) + 2\delta < 1$, there exists a matrix norm $|\!|\!|\cdot|\!|\!|$ on $\prod_i \Theta_i$ such that $|\!|\!|\mathbf{J}|\!|\!| \le \rho(\mathbf{J}) + \delta < 1$. 
(This is a standard consequence of Gelfand's formula \citep{matrix}: such a norm exists because $\rho(\mathbf{J})$ is the infimum over all sub-multiplicative matrix norms of $\mathbf{J}$.)

\emph{Contraction neighborhood.}
Since $\nabla \mathcal{G}$ is continuous at $\bar\Theta$ with 
$\nabla \mathcal{G}(\bar\Theta) = \mathbf{J}$, there exists a 
neighborhood $\mathcal{U}$ of $\bar\Theta$ such that
\begin{equation}
\label{eq:continuity-slack}
|\!|\!|\nabla \mathcal{G}(\Theta) - \mathbf{J}|\!|\!| \le \delta, 
\qquad \forall \Theta \in \mathcal{U}.
\end{equation}
Combining $|\!|\!|\mathbf{J}|\!|\!| \le \rho(\mathbf{J}) + \delta$ and~\eqref{eq:continuity-slack} 
via the triangle inequality,
$$
|\!|\!|\nabla \mathcal{G}(\Theta)|\!|\!| 
\le |\!|\!|\mathbf{J}|\!|\!| + |\!|\!|\nabla \mathcal{G}(\Theta) - \mathbf{J}|\!|\!| 
\le \rho(\mathbf{J}) + 2\delta =: \rho_{\mathcal{G}} < 1, 
\qquad \forall \Theta \in \mathcal{U}.
$$

By the mean-value inequality, for all $\Theta, \Theta' \in \mathcal{U}$ 
with the segment $[\Theta, \Theta'] \subset \mathcal{U}$,
$$
|\!|\!|\mathcal{G}(\Theta) - \mathcal{G}(\Theta')|\!|\!| 
\le \rho_{\mathcal{G}} |\!|\!|\Theta - \Theta'|\!|\!|.
$$
Taking $\mathcal{U}$ to be a convex ball in the $|\!|\!|\cdot|\!|\!|$ 
metric, and shrinking if needed so 
$\mathcal{G}(\mathcal{U}) \subseteq \mathcal{U}$ (which is possible 
since $\mathcal{G}(\bar\Theta) = \bar\Theta$ and $\mathcal{G}$ is 
continuous), $\mathcal{G}$ is a contraction self-map on $\mathcal{U}$.

\emph{Convergence.}
By Banach's fixed-point theorem \citep{Banach}, for any $\Theta^0 \in \mathcal{U}$, the iterates $\Theta^{t+1} = \mathcal{G}(\Theta^t)$ converge to the unique fixed point of $\mathcal{G}$ in $\mathcal{U}$, which is $\bar\Theta$. 

Moreover,
$$
|\!|\!|\Theta^t - \bar\Theta|\!|\!| 
\le \rho_{\mathcal{G}}^t |\!|\!|\Theta^0 - \bar\Theta|\!|\!|, 
\quad \forall t \ge 0.
$$
Since all norms on finite-dimensional $\prod_i \Theta_i$ are equivalent, 
there exist constants $c_1, c_2 > 0$ with 
$c_1 \|\cdot\|_2 \le |\!|\!|\cdot|\!|\!| \le c_2 \|\cdot\|_2$. Hence,
$$
\|\Theta^t - \bar\Theta\|_2 
\le \frac{c_2}{c_1}\, \rho_{\mathcal{G}}^t \|\Theta^0 - \bar\Theta\|_2,
$$
establishing geometric convergence in $\|\cdot\|_2$ and local asymptotic 
stability of $\bar\Theta$.

\emph{Asymptotic linear rate.}
From the above bound,
$$
\|\Theta^t - \bar\Theta\|_2^{1/t} 
\le \frac{c_2}{c_1}^{1/t} \rho_{\mathcal{G}}.
$$
Taking $t \to \infty$, $\frac{c_2}{c_1}^{1/t} \to 1$, so
$$
\limsup_{t \to \infty} \|\Theta^t - \bar\Theta\|_2^{1/t} 
\le \rho_{\mathcal{G}} = \rho(\mathbf{J}) + 2\delta.
$$
Since $\delta > 0$ was arbitrary (subject to 
$\rho(\mathbf{J}) + 2\delta < 1$), letting $\delta \to 0$:
$$
\limsup_{t \to \infty} \|\Theta^t - \bar\Theta\|_2^{1/t} 
\le \rho(\mathbf{J}) \le \rho(\mathbf{D}),
$$
where the last inequality uses Lemma~\ref{lem:J_vs_D}.
\end{proof}

\subsection{Proofs of Corollary~\ref{cor:norm-certificate}}
\label{subsec:profs_norm-certificate}
\Normcertificate*
\begin{proof}
We prove two parts.

\emph{Part 1: $\beta_{\max} \|\mathbf{W}\|_2 < 1 \Rightarrow \rho(\mathbf{D}) < 1$.}
Since $\mathbf{D} = \mathrm{diag}(\beta_1, \dots, \beta_K) \mathbf{W}$ 
with $\beta_i \ge 0$,
$$
\|\mathbf{D}\|_2 
\le \|\mathrm{diag}(\beta_1, \dots, \beta_K)\|_2 \cdot \|\mathbf{W}\|_2 
= \beta_{\max} \|\mathbf{W}\|_2.
$$
Combined with $\rho(\mathbf{D}) \le \|\mathbf{D}\|_2$:
$$
\rho(\mathbf{D}) \le \|\mathbf{D}\|_2 \le \beta_{\max} \|\mathbf{W}\|_2 < 1.
$$
Local asymptotic stability of $\bar\Theta$ then follows from 
Theorem~\ref{thm:stability}.

\emph{Part 2: Equivalence with the $\lambda_i$ condition.}

The condition $\beta_{\max} \|\mathbf{W}\|_2 < 1$ is equivalent to 
$\beta_i \|\mathbf{W}\|_2 < 1$ for all $i \in \mathcal{V}$.

Substituting 
$\beta_i = \frac{\lambda_i M_i } {(\alpha_i + \lambda_i(\alpha_i - L_i \varepsilon_i))}$:
$$
\frac{\lambda_i M_i}
{\alpha_i + \lambda_i(\alpha_i - L_i \varepsilon_i)} \|\mathbf{W}\|_2 < 1.
$$

Since $\alpha_i > L_i \varepsilon_i$ implies the denominator is positive, 
cross-multiplying gives
$$
\lambda_i M_i \|\mathbf{W}\|_2 < \alpha_i + \lambda_i(\alpha_i - L_i \varepsilon_i).
$$
Isolating $\lambda_i$ on the left:
$$
\lambda_i \bigl[M_i \|\mathbf{W}\|_2 - (\alpha_i - L_i \varepsilon_i)\bigr] 
< \alpha_i.
$$
Since $M_i  > 0$ and $M_i\|\mathbf W\|_2 > \alpha_i-L_i\varepsilon_i$
$$
\lambda_i < \frac{\alpha_i}
{M_i \|\mathbf{W}\|_2 - (\alpha_i - L_i \varepsilon_i)}.
$$
\end{proof}

\begin{proposition}[Single-model recovery]
\label{prop:single_model}
Suppose $K = 1$, $w_{11} = 1$. Under assumptions with the regularity margin 
$\alpha_1 > L_1 \varepsilon_1$,  the normalized self-coupling in the single-model Jacobian calculation is bounded by $M_1 := \alpha_1 + L_1 \varepsilon_1$, and the single-model stability
condition reduces to $\lambda_1 < \frac{\alpha_1}{2 L_1 \varepsilon_1}$.
\end{proposition}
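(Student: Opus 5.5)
The plan is to specialize the block Jacobian computation of Proposition~\ref{subprop:jac-block} to $K=1$, $w_{11}=1$, and identify the cross-model sensitivity matrix $B_{11}$ explicitly. In the single-model case the parent is the model itself, so
\[
B_{11}=\mathbb{E}_{x\sim p_{\bar{\boldsymbol{\theta}}_1}}\big[\nabla_{\boldsymbol{\theta}}\log p_{\bar{\boldsymbol{\theta}}_1}(x)\,\nabla_{\boldsymbol{\theta}}\log p_{\bar{\boldsymbol{\theta}}_1}(x)^\top\big],
\]
which is the Fisher information at $\bar{\boldsymbol{\theta}}_1$.

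\textbf{Step 1: express $B_{11}$ through a Hessian.} Differentiate the score identity $\mathbb{E}_{p_{\boldsymbol{\theta}}}[\nabla\log p_{\boldsymbol{\theta}}]=0$ in $\boldsymbol{\theta}$, assuming enough regularity to exchange differentiation and integration. This gives $B_{11}=-\mathbb{E}_{p_{\bar{\boldsymbol{\theta}}_1}}[\nabla^2\log p_{\bar{\boldsymbol{\theta}}_1}]$. Next write
\[
-\mathbb{E}_{p_{\bar{\boldsymbol{\theta}}_1}}[\nabla^2\log p_{\bar{\boldsymbol{\theta}}_1}]=-\mathbb{E}_{p_1^{\mathrm{data}}}[\nabla^2\log p_{\bar{\boldsymbol{\theta}}_1}]-\Delta,
\]
where $\|\Delta\|_2\le L_1 d_{\mathcal W}(p_{\bar{\boldsymbol{\theta}}_1},p_1^{\mathrm{data}})\le L_1\varepsilon_1$ by Assumption~\ref{asp:hessian_stability} and Definition~\ref{def:eps}. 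Since $\Delta$ enters only as a perturbation, $\|B_{11}\|_2\le\|{-\mathbb{E}_{p_1^{\mathrm{data}}}}[\nabla^2\log p_{\bar{\boldsymbol{\theta}}_1}]\|_2+L_1\varepsilon_1$.

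\textbf{Step 2: normalize.} This is where the word ``normalized'' in the statement matters. I would measure the coupling relative to the data curvature, i.e.\ work in coordinates in which $-\mathbb{E}_{p_1^{\mathrm{data}}}[\nabla^2\log p_{\bar{\boldsymbol{\theta}}_1}]=\alpha_1 I$. One way is to rescale by $(-\alpha_1^{-1}\mathbb{E}_{p_1^{\mathrm{data}}}\nabla^2)^{-1/2}$; the other is to take Assumption~\ref{asp:strong_concave} as tight at the fixed point, as in the single-model analysis of \citet{Satbility}. In these coordinates Step 1 gives $M_1\le\alpha_1+L_1\varepsilon_1$. The same rescaling leaves the denominator of $\beta_1$ unchanged, because the Hessian $H_{11}=(1+\lambda_1)\mathbb{E}_{p_1^{\mathrm{data}}}\nabla^2+\lambda_1\Delta$ still satisfies $H_{11}\preceq-((1+\lambda_1)\alpha_1-\lambda_1L_1\varepsilon_1)I$. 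I expect this normalization to be the main obstacle. It has to be stated carefully so that the bound on $\|B_{11}\|_2$ and the bound on $\|H_{11}^{-1}\|_2$ hold in the same metric. If the rescaling turns out to be awkward, the fallback is to bound the single-model Jacobian
\[
\mathbf J=-H_{11}^{-1}\lambda_1 B_{11}
\]
directly. Writing $-H_{11}=(1+\lambda_1)A-\lambda_1\Delta$ and $B_{11}=A+\Delta'$ with $A\succeq\alpha_1 I$ and $\|\Delta\|_2,\|\Delta'\|_2\le L_1\varepsilon_1$, I would then try to show $\|\mathbf J\|_2\le\lambda_1(\alpha_1+L_1\varepsilon_1)/((1+\lambda_1)\alpha_1-\lambda_1L_1\varepsilon_1)$.

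\textbf{Step 3: algebra.} With $K=1$ and $\mathbf W=[1]$ we have $\rho(\mathbf D)=\beta_1$. Theorem~\ref{thm:stability} then requires
\[
\lambda_1(\alpha_1+L_1\varepsilon_1)<\alpha_1+\lambda_1(\alpha_1-L_1\varepsilon_1).
\]
This inequality simplifies to $2\lambda_1L_1\varepsilon_1<\alpha_1$, that is, $\lambda_1<\alpha_1/(2L_1\varepsilon_1)$. This matches the single-model threshold of \citet{Satbility}.
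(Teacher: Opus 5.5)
Your proposal is correct. Your fallback is the paper's proof. Using the same Fisher (``probability-conservation'') identity, the paper factors the data curvature out of both $-H_{11}$ and $B_{11}$. In your notation this reads $\mathbf J=\lambda_1\bigl((1+\lambda_1)I-\lambda_1A^{-1}\Delta\bigr)^{-1}\bigl(I+A^{-1}\Delta'\bigr)$, and in fact $\Delta'=-\Delta$. The paper then uses $\|A^{-1}\|_2\le 1/\alpha_1$ to get $\|A^{-1}\Delta\|_2\le L_1\varepsilon_1/\alpha_1$; its ``normalized coupling'' is $\alpha_1\|A^{-1}B_{11}\|_2\le\alpha_1+L_1\varepsilon_1$. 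A Neumann series around $(1+\lambda_1)I$ then gives $\|\mathbf J\|_2\le\beta_1$ with no change of coordinates.

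Your main route normalizes instead by the congruence $S=(A/\alpha_1)^{-1/2}$, then reuses Proposition~\ref{subprop:jac-block} and Theorem~\ref{thm:stability} with $\rho(\mathbf D)=\beta_1$. It is valid once you state the two facts that settle your ``same metric'' worry:
\begin{enumerate}
\item $\|S\|_2\le 1$ because $A\succeq\alpha_1 I$. The rescaled problem therefore satisfies Assumption~\ref{asp:hessian_stability} with constant at most $L_1$, and $\varepsilon_1$ is unchanged.
\item The rescaled Jacobian is $S^{-1}\mathbf J S$, so the spectral radius is unchanged. Only the curvature at $\bar{\boldsymbol{\theta}}_1$ enters the Jacobian bound, so it does not matter that the rescaled data Hessian equals $-\alpha_1 I$ only at that point.
\end{enumerate}
Drop the option of assuming Assumption~\ref{asp:strong_concave} is tight at the fixed point. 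That is an extra hypothesis, not a consequence of the stated ones.

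The two normalizations trade off as follows.
\begin{itemize}
\item The paper's left-normalization by $A^{-1}$ gives a contraction in the original Euclidean norm.
\item Yours bounds only $\rho(\mathbf J)$, i.e.\ a contraction in an $A$-weighted norm. That is all the stability conclusion needs.
\item In exchange, your route makes the single-model recovery a literal special case of the networked theorem.
\item It also symmetrizes the Jacobian: $S^{-1}\mathbf J S=(I+\lambda_1 K)^{-1}\lambda_1 K$ with $K=A^{-1/2}B_{11}A^{-1/2}\succeq 0$. Its eigenvalues lie in $[0,1)$ for every $\lambda_1\ge 0$. So your coordinates show that $\lambda_1<\alpha_1/(2L_1\varepsilon_1)$ is only a sufficient condition, produced by bounding $\|B_{11}\|_2$ and $\|H_{11}^{-1}\|_2$ separately.
\end{itemize}
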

\begin{proof}
When $K=1$ and $w_{11}=1$, the networked retraining map reduces to the
standard single-model self-consuming update
\[
G_\lambda(\theta)\in \operatorname{local\text{-}argmax}_{\theta'}
\Bigl[
H_1(\theta')+\lambda H_2(\theta,\theta')
\Bigr],
\]
where
\[
H_1(\theta')=\mathbb E_{x\sim p^{\rm data}}[\log p_{\theta'}(x)],
\qquad
H_2(\theta,\theta')=\mathbb E_{x\sim p_\theta}[\log p_{\theta'}(x)].
\]
Let
\[
A:=\nabla^2_{\theta',\theta'}H_1(\bar\theta_1),
\qquad
B:=\nabla^2_{\theta,\theta'}H_2(\bar\theta_1,\bar\theta_1).
\]
By the probability-conservation identity used in single-model
self-consuming analyses,
\[
\nabla^2_{\theta',\theta}H_2(\bar\theta_1,\bar\theta_1)
=
-\nabla^2_{\theta',\theta'}H_2(\bar\theta_1,\bar\theta_1).
\]
Therefore, applying the implicit-function calculation for the local update
branch gives
\[
J_G(\bar\theta_1)
=
(I+\lambda_1 A^{-1}B)^{-1}\lambda_1 A^{-1}B .
\]

Next, Assumption~\ref{asp:strong_concave} gives
\[
\|A^{-1}\|_2\le \frac{1}{\alpha_1}.
\]
Moreover, by Assumption~\ref{asp:hessian_stability},
\[
\|B-A\|_2
\le
L_1 d_{\mathcal W}(p_{\bar\theta_1},p_1^{\rm data})
=
L_1\varepsilon_1 .
\]
Hence
\[
\|A^{-1}B-I\|_2
=
\|A^{-1}(B-A)\|_2
\le
\frac{L_1\varepsilon_1}{\alpha_1},
\]
and
\[
\|A^{-1}B\|_2
\le
1+\frac{L_1\varepsilon_1}{\alpha_1}.
\]
Equivalently, the normalized coupling can be bounded by
\[
\alpha_1\|A^{-1}B\|_2
\le
\alpha_1+L_1\varepsilon_1.
\]
Thus, in the single-model self-loop case, the coupling term appearing in the
normalized single-model Jacobian calculation is controlled by
$M_1:=\alpha_1+L_1\varepsilon_1$.

Since $\alpha_1 > L_1\varepsilon_1$, we have
$\|A^{-1}B-I\|_2 \le \frac{L_1\varepsilon_1}{\alpha_1} < 1$.
Moreover, $\frac{\lambda_1}{(1+\lambda_1)}<1$ for $\lambda_1\ge 0$. Hence
\[
\frac{\lambda_1}{1+\lambda_1}\|A^{-1}B-I\|_2 < 1.
\]

Using
\[
J_G(\bar\theta_1)
=
(I+\lambda_1 A^{-1}B)^{-1}\lambda_1 A^{-1}B,
\]
we bound
\[
\|J_G(\bar\theta_1)\|_2
\le
\|(I+\lambda_1 A^{-1}B)^{-1}\|_2
\cdot
\lambda_1\|A^{-1}B\|_2 .
\]
Since
\[
I+\lambda_1 A^{-1}B
=
(1+\lambda_1)I+\lambda_1(A^{-1}B-I),
\]

we have, whenever
\[
\frac{\lambda_1}{1+\lambda_1}
\|A^{-1}B-I\|_2<1,
\]
that
\[
\|(I+\lambda_1 A^{-1}B)^{-1}\|_2
\le
\frac{1}
{1+\lambda_1-\lambda_1\frac{L_1\varepsilon_1}{\alpha_1}}.
\]
Therefore,
\[
\|J_G(\bar\theta_1)\|_2
\le
\frac{
\lambda_1(1+\frac{L_1\varepsilon_1}{\alpha_1})
}
{
1+\lambda_1-\lambda_1\frac{L_1\varepsilon_1}{\alpha_1}
}.
\]
A sufficient condition for the right-hand side to be smaller than one is
\[
\lambda_1
\left(1+\frac{L_1\varepsilon_1}{\alpha_1}\right)
<
1+\lambda_1-\lambda_1\frac{L_1\varepsilon_1}{\alpha_1},
\]
which is equivalent to
\[
2\lambda_1\frac{L_1\varepsilon_1}{\alpha_1}<1.
\]
Thus
\[
\lambda_1<\frac{\alpha_1}{2L_1\varepsilon_1}.
\]
This recovers the single-model stability threshold.
\end{proof}

\subsection{Proofs of Proposition \ref{prop:cycle_gain} and Corollary~\ref{cor:necessary-cycle} }
\label{subsec:profs_cycle_gain}
\cyclegain*
\begin{proof}
Under the convention $w_{ij} = $ weight of edge $j \to i$ (i.e., row $i$ of $\mathbf{W}$ represents incoming signals to model $i$), we have $\mathbf{D}_{ij} = \beta_i w_{ij}$ associated with edge $j \to i$. 

For a nonnegative matrix $\mathbf{D}$, the $(a, b)$-th entry of 
$\mathbf{D}^t$ admits the path-sum interpretation
$$
(\mathbf{D}^t)_{ab} = \sum_{a \leftarrow k_1 \leftarrow \cdots \leftarrow k_{t-1} \leftarrow b} 
\prod \text{(edge weights along the path)},
$$
where the sum is over all directed paths from $b$ to $a$ of length $t$.

Consider a simple directed cycle 
$C = (i_\ell \to i_{\ell-1} \to \cdots \to i_1 \to i_\ell)$. Traversing $C$ exactly $m$ times yields a directed sequence $(i_\ell,  \dots, i_1, i_\ell, \dots, i_1, \dots, i_\ell)$ of length $m\ell + 1$, constituting a valid path from $i_1$ to $i_1$ of length $m\ell$. 

Since $\mathbf{D}$ is nonnegative, the entry $(\mathbf{D}^{m\ell})_{i_1, i_1}$ is lower-bounded by the weight of this specific path:
$$
(\mathbf{D}^{m\ell})_{i_1, i_1} 
\ge \left( \prod_{r=1}^{\ell} D_{i_r, i_{r+1}} \right)^m,
$$
where $i_{\ell+1} = i_1$.

The induced $\ell_\infty$ norm satisfies 
$\|\mathbf{D}^{m\ell}\|_\infty = \max_a \sum_b (\mathbf{D}^{m\ell})_{ab}$, 
which is at least any single entry:
$$
\|\mathbf{D}^{m\ell}\|_\infty 
\ge (\mathbf{D}^{m\ell})_{i_1, i_1} 
\ge \left( \prod_{r=1}^{\ell} D_{i_r, i_{r+1}} \right)^m.
$$
Taking the $(m\ell)$-th root:
$$
\|\mathbf{D}^{m\ell}\|_\infty^{1/(m\ell)} 
\ge \left( \prod_{r=1}^{\ell} D_{i_r, i_{r+1}} \right)^{1/\ell} 
= \gamma(C),
$$
using the definition of cycle gain.

By Gelfand's formula, $\rho(\mathbf{D}) = \lim_{t \to \infty} \|\mathbf{D}^t\|_\infty^{1/t}$. Taking the limit $m \to \infty$ along the subsequence $t = m\ell$:
$$
\rho(\mathbf{D}) = \lim_{m \to \infty} \|\mathbf{D}^{m\ell}\|_\infty^{1/(m\ell)} 
\ge \gamma(C).
$$
Since $C$ was arbitrary, $\rho(\mathbf{D}) \ge \max_{m \in [M]} \gamma(C_m)$.
\end{proof}

\Necessarycycle*

\begin{proof}
We prove the contrapositive. Suppose there exists a simple directed cycle
\[
C=(i_\ell\to i_{\ell-1}\to\cdots\to i_1\to i_\ell)
\]
such that
\[
\prod_{r=1}^{\ell}D_{i_r i_{r+1}}\ge 1,
\qquad i_{\ell+1}=i_1 .
\]
Equivalently, $\gamma(C)\ge 1$. By Proposition~\ref{prop:cycle_gain},
\[
\rho(\mathbf D)\ge \gamma(C)\ge 1.
\]
Therefore $\rho(\mathbf D)<1$ cannot hold. Taking the contrapositive, if
$\rho(\mathbf D)<1$, then every simple directed cycle $C$ must satisfy
\[
\prod_{r=1}^{\ell}D_{i_r i_{r+1}}<1.
\]
Using $D_{ij}=\beta_i w_{ij}$, this is
\[
\prod_{r=1}^{\ell}\beta_{i_r}w_{i_r i_{r+1}}<1.
\]
In particular, for a self-loop $C=(i\to i)$, the condition becomes
\[
\beta_i w_{ii}<1.
\]
\end{proof}

\subsection{Proofs of Proposition \ref{prop:global_solution}}
\label{subsec:proof_prop:global_solution}

\begin{proposition}[Fixed-point representation mixture identity]
\label{prop:fixed_point_mixture}
Suppose assumptions hold with $\alpha_i > L_i \varepsilon_i$ for all 
$i \in \mathcal{V}$. Then at any fixed point $\bar\Theta$ of $\mathcal{G}$, the representation $\bar{\boldsymbol{f}}_i$ admits the decomposition
\begin{equation}
\label{eq:fp-mixture}
\bar{\boldsymbol{f}}_i 
= \frac{1}{1+\lambda_i} \boldsymbol{f}_i^{\mathrm{data}} 
+ \frac{\lambda_i}{1+\lambda_i} \sum_{j \in \mathcal{V}} w_{ij}\, \bar{\boldsymbol{f}}_j 
+ \bar{\boldsymbol{r}}_i,
\end{equation}
where the residual $\bar{\boldsymbol{r}}_i \in \mathbb{R}^D$ satisfies
\begin{equation}
\label{eq:fp-residual-bound}
\|\bar{\boldsymbol{r}}_i\|_2 \le \frac{1 + 2\lambda_i}{1+\lambda_i}\, \varepsilon_i.
\end{equation}
\end{proposition}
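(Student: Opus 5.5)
The plan is to read the decomposition \eqref{eq:fp-mixture} as an identity that holds by definition, and then bound the residual with Kantorovich--Rubinstein duality and the convexity of $d_{\mathcal W}$ under mixtures.

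\emph{Step 1 (identity).} By Lemma~\ref{sublem:stationarity_iff}, $\bar\Theta$ is a fixed point. Model $i$ is then trained on the normalized mixture
\[
\bar p_i=\frac{1}{1+\lambda_i}p_i^{\mathrm{data}}+\frac{\lambda_i}{1+\lambda_i}\sum_j w_{ij}p_{\bar{\boldsymbol\theta}_j},
\]
which is a probability measure because $\mathbf W$ is row-stochastic. Expectation is linear in the measure, so
\[
\mathbb E_{\bar p_i}[\phi]=\frac{1}{1+\lambda_i}\boldsymbol f_i^{\mathrm{data}}+\frac{\lambda_i}{1+\lambda_i}\sum_j w_{ij}\bar{\boldsymbol f}_j .
\]
Taking $\bar{\boldsymbol r}_i:=\bar{\boldsymbol f}_i-\mathbb E_{\bar p_i}[\phi]$, as in Def.~\ref{def:discrepancy}, gives \eqref{eq:fp-mixture} directly. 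The only real content of the proposition is therefore the bound \eqref{eq:fp-residual-bound}.

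\emph{Step 2 (from features to Wasserstein).} Fix a unit vector $u\in\mathbb R^D$. The scalar function $x\mapsto u^\top\phi(x)$ is $1$-Lipschitz, so Kantorovich--Rubinstein duality gives
\[
u^\top\bigl(\mathbb E_q\phi-\mathbb E_{q'}\phi\bigr)\le d_{\mathcal W}(q,q').
\]
Taking the supremum over $u$ yields
\[
\|\bar{\boldsymbol r}_i\|\le d_{\mathcal W}\bigl(p_{\bar{\boldsymbol\theta}_i},\bar p_i\bigr).
\]

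\emph{Step 3 (mixture convexity and triangle inequality).} $d_{\mathcal W}$ is jointly convex, so comparing $p_{\bar{\boldsymbol\theta}_i}$ with each mixture component gives
\[
d_{\mathcal W}(p_{\bar{\boldsymbol\theta}_i},\bar p_i)\le\frac{1}{1+\lambda_i}d_{\mathcal W}(p_{\bar{\boldsymbol\theta}_i},p_i^{\mathrm{data}})+\frac{\lambda_i}{1+\lambda_i}\sum_j w_{ij}\,d_{\mathcal W}(p_{\bar{\boldsymbol\theta}_i},p_{\bar{\boldsymbol\theta}_j}).
\]
For each parent $j$ (i.e., $w_{ij}>0$), the triangle inequality through $p_i^{\mathrm{data}}$ and Def.~\ref{def:eps} give
\[
d_{\mathcal W}(p_{\bar{\boldsymbol\theta}_i},p_{\bar{\boldsymbol\theta}_j})\le d_{\mathcal W}(p_{\bar{\boldsymbol\theta}_i},p_i^{\mathrm{data}})+\varepsilon_i.
\]
Summing with weights $w_{ij}$ (which sum to one) gives
\[
\|\bar{\boldsymbol r}_i\|\le d_{\mathcal W}(p_{\bar{\boldsymbol\theta}_i},p_i^{\mathrm{data}})+\frac{\lambda_i}{1+\lambda_i}\varepsilon_i .
\]
This equals the claimed $\frac{1+2\lambda_i}{1+\lambda_i}\varepsilon_i$ exactly when
\[
d_{\mathcal W}(p_{\bar{\boldsymbol\theta}_i},p_i^{\mathrm{data}})\le\varepsilon_i .
\]

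\emph{Main obstacle.} The crux is justifying $d_{\mathcal W}(p_{\bar{\boldsymbol\theta}_i},p_i^{\mathrm{data}})\le\varepsilon_i$. Def.~\ref{def:eps} only controls the gaps of the \emph{parents} of $i$, not of model $i$ itself.

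When model $i$ is its own parent ($w_{ii}>0$), this holds directly from the definition of $\varepsilon_i$, and the proof closes. This covers the single-model case and every self-loop graph.

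For the general case, I would first try to exploit fixed-point optimality. The model $\bar{\boldsymbol\theta}_i$ maximizes likelihood against $\bar p_i$, i.e., it is the KL-projection of the mixture. The hope is to show that it lies no farther from $p_i^{\mathrm{data}}$ than the worst mixture component, which is at most $\varepsilon_i$. That argument needs extra structure, such as a transport-entropy inequality or a sufficiently expressive model class.

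If that fails, I would adopt the convention that the maximum in Def.~\ref{def:eps} also ranges over $j=i$; with that convention the bound follows verbatim from Steps~2--3.
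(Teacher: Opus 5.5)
Your argument follows the paper's proof. The identity \eqref{eq:fp-mixture} holds by the definition of $\bar{\boldsymbol r}_i$. Kantorovich--Rubinstein then gives $\|\bar{\boldsymbol r}_i\|\le d_{\mathcal W}(p_{\bar{\boldsymbol\theta}_i},\bar p_i)$, and a triangle/convexity step finishes. The only difference is the order in Step~3. The paper first splits the distance through $p_i^{\mathrm{data}}$ and then applies convexity to $d_{\mathcal W}(p_i^{\mathrm{data}},\bar p_i)$, where the $p_i^{\mathrm{data}}$ component contributes zero. You apply convexity first and the triangle inequality componentwise. Both reach the same intermediate bound $d_{\mathcal W}(p_{\bar{\boldsymbol\theta}_i},p_i^{\mathrm{data}})+\tfrac{\lambda_i}{1+\lambda_i}\varepsilon_i$.

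The crux you isolate is exactly where the paper's proof is thin. It asserts $d_{\mathcal W}(p_{\bar{\boldsymbol\theta}_i},p_i^{\mathrm{data}})\le\varepsilon_i$ ``by the definition of $\varepsilon_i$.'' As Def.~\ref{def:eps} is written, that only holds when $w_{ii}>0$.

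Your first idea, fixed-point optimality, cannot close the gap, because the statement is false without it. Take the following example:
\begin{itemize}
\item $K=2$, $\mathcal X=\mathbb R$, and $p_1^{\mathrm{data}}=p_2^{\mathrm{data}}=\mathcal N(0,\tau^2)$.
\item Model~2 is the family $\mathcal N(m,\tau^2)$ with $w_{22}=1$.
\item Model~1 is the misspecified family $\mathcal N(m,\sigma^2)$ with fixed $\sigma>\tau$, $w_{12}=1$ and $w_{11}=0$.
\end{itemize}
Then $\bar\Theta=(0,0)$ is a fixed point and $\varepsilon_1=0$. The Hessians are constant, so $L_1=0<\alpha_1=\sigma^{-2}$, and $\bar p_1=\mathcal N(0,\tau^2)$. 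With the $1$-Lipschitz feature $\phi(x)=|x|$ one gets $\|\bar{\boldsymbol r}_1\|=(\sigma-\tau)\sqrt{2/\pi}>0$, contradicting \eqref{eq:fp-residual-bound}.

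In this example every mixture component is at distance $0$ from $p_1^{\mathrm{data}}$, yet the KL projection is not. So no projection argument can work without an expressivity assumption.

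Your fallback is therefore the right fix, and in fact a necessary one. Enlarge $\varepsilon_i$ to also cover $d_{\mathcal W}(p_{\bar{\boldsymbol\theta}_i},p_i^{\mathrm{data}})$, which is automatic when $w_{ii}>0$. With that change your Steps~2--3 prove the bound, and this enlarged $\varepsilon_i$ is what the paper's proof implicitly uses.
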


\begin{proof}
\emph{Step 1: Mixture distribution and its representation.}
At the fixed point $\bar\Theta$, the synthetic-data training mixture for model $i$ is
$$
\bar{p}_i(x) := \frac{1}{1+\lambda_i} p_i^{\mathrm{data}}(x) 
+ \frac{\lambda_i}{1+\lambda_i} \sum_{j \in \mathcal{V}} w_{ij}\, 
p_{\bar{\boldsymbol{\theta}}_j}(x).
$$
Applying the operator $\int \phi(x) (\cdot) dx$:
\begin{equation}
\begin{aligned}
    \mathbb{E}_{x \sim \bar{p}_i}[\phi(x)]  =& \int \phi(x) \bar{p}_i(x) dx \notag= \int \phi(x) \left[ \frac{1}{1+\lambda_i} p^{data}_{i}(x) + \frac{\lambda_i}{1+\lambda_i} \sum_{j=1}^K w_{ij} p_{\bar{\boldsymbol{\theta}}_j}(x) \right] dx \notag \\
    &= \frac{1}{1+\lambda_i} \underbrace{\int \phi(x) p^{data}_{i}(x) dx}_{\boldsymbol{f}_{i}^{\mathrm{data}}} + \frac{\lambda_i}{1+\lambda_i} \sum_{j=1}^K w_{ij} \underbrace{\int \phi(x) p_{{\boldsymbol{\theta}}_j^{t-1}} dx}_{\bar{\boldsymbol{f}}_j} \notag
\end{aligned}
\end{equation}

So,
\begin{equation}
\label{eq:mixture-feature}
\mathbb{E}_{x \sim \bar{p}_i}[\phi(x)] 
= \frac{1}{1+\lambda_i} \boldsymbol{f}_i^{\mathrm{data}} 
+ \frac{\lambda_i}{1+\lambda_i} \sum_{j \in \mathcal{V}} w_{ij}\, \bar{\boldsymbol{f}}_j.
\end{equation}

\emph{Step 2: Residual definition.}
Define the representation residual
\begin{equation}
\label{eq:residual-def}
\bar{\boldsymbol{r}}_i := \bar{\boldsymbol{f}}_i 
- \mathbb{E}_{x \sim \bar{p}_i}[\phi(x)] 
= \mathbb{E}_{x \sim p_{\bar{\boldsymbol{\theta}}_i}}[\phi(x)] 
- \mathbb{E}_{x \sim \bar{p}_i}[\phi(x)].
\end{equation}
Substituting~\eqref{eq:mixture-feature} into~\eqref{eq:residual-def}:
$$
\bar{\boldsymbol{f}}_i 
= \frac{1}{1+\lambda_i} \boldsymbol{f}_i^{\mathrm{data}} 
+ \frac{\lambda_i}{1+\lambda_i} \sum_{j \in \mathcal{V}} w_{ij}\, \bar{\boldsymbol{f}}_j 
+ \bar{\boldsymbol{r}}_i
$$

\emph{Step 3: Bound the residual.}
For any 1-Lipschitz scalar function $g: \mathcal{X} \to \mathbb{R}$ and 
any pair of probability measures $p, q$ on $\mathcal{X}$, the 
Kantorovich--Rubinstein duality \citep{Villani} gives
$$
\bigl|\mathbb{E}_p[g] - \mathbb{E}_q[g]\bigr| \le d_W(p, q).
$$
Since $\phi: \mathcal{X} \to \mathbb{R}^D$ is 1-Lipschitz with respect 
to the Euclidean norm on $\mathbb{R}^D$, for any unit vector 
$u \in \mathbb{R}^D$, the scalar function 
$x \mapsto \langle u, \phi(x) \rangle$ is 1-Lipschitz on $\mathcal{X}$. 

Applying Kantorovich--Rubinstein with 
$p = p_{\bar{\boldsymbol{\theta}}_i}$ and $q = \bar{p}_i$:
\begin{align*}
\langle u, \bar{\boldsymbol{r}}_i \rangle 
&= \bigl\langle u,\ \mathbb{E}_{x \sim p_{\bar{\boldsymbol{\theta}}_i}}[\phi(x)] 
- \mathbb{E}_{x \sim \bar{p}_i}[\phi(x)] \bigr\rangle 
&& \text{(residual definition)} \\
&= \bigl\langle u,\ \mathbb{E}_{x \sim p_{\bar{\boldsymbol{\theta}}_i}}[\phi(x)] \bigr\rangle 
- \bigl\langle u,\ \mathbb{E}_{x \sim \bar{p}_i}[\phi(x)] \bigr\rangle 
&& \text{(linearity of inner product)} \\
&= \mathbb{E}_{x \sim p_{\bar{\boldsymbol{\theta}}_i}}[\langle u, \phi(x) \rangle] 
- \mathbb{E}_{x \sim \bar{p}_i}[\langle u, \phi(x) \rangle] 
&& \text{(swap inner product and expectation)} \\
&= \mathbb{E}_{x \sim p_{\bar{\boldsymbol{\theta}}_i}}[g_u(x)] 
- \mathbb{E}_{x \sim \bar{p}_i}[g_u(x)] 
&& \text{(definition of } g_u \text{)} \\
&\le d_W(p_{\bar{\boldsymbol{\theta}}_i}, \bar{p}_i) 
&& \text{(Kantorovich--Rubinstein)}.
\end{align*}

Taking $u = \bar{\boldsymbol{r}}_i / \|\bar{\boldsymbol{r}}_i\|$,
\begin{equation}
\label{eq:r_to_dW}
\|\bar{\boldsymbol{r}}_i\| \le d_W(p_{\bar{\boldsymbol{\theta}}_i}, \bar{p}_i).
\end{equation}

\emph{Step 4: Bound $d_W(p_{\bar{\boldsymbol{\theta}}_i}, \bar{p}_i)$.}
By the triangle inequality of Wasserstein distance:
\begin{equation}
\label{eq:dW-triangle}
d_W(p_{\bar{\boldsymbol{\theta}}_i}, \bar{p}_i) 
\le \underbrace{d_W(p_{\bar{\boldsymbol{\theta}}_i}, p_i^{\mathrm{data}})}_{=: \, A} 
+ \underbrace{d_W(p_i^{\mathrm{data}}, \bar{p}_i)}_{=: \, B}.
\end{equation}

\emph{Bounding $A$.} By the definition of $\varepsilon_i$ in 
Assumption~\ref{asp:hessian_stability}, $A = d_W(p_{\bar{\boldsymbol{\theta}}_i}, p_i^{\mathrm{data}}) \le \varepsilon_i.$

\emph{Bounding $B$.} Using the convexity of Wasserstein distance in each argument  and the definition of $\bar{p}_i$:
\begin{align*}
B &= d_W\!\left(p_i^{\mathrm{data}},\ 
\frac{1}{1+\lambda_i} p_i^{\mathrm{data}} 
+ \frac{\lambda_i}{1+\lambda_i} \sum_j w_{ij} p_{\bar{\boldsymbol{\theta}}_j}\right) \\
&\le \frac{1}{1+\lambda_i}\, d_W(p_i^{\mathrm{data}}, p_i^{\mathrm{data}}) 
+ \frac{\lambda_i}{1+\lambda_i} \sum_j w_{ij}\, 
d_W(p_i^{\mathrm{data}}, p_{\bar{\boldsymbol{\theta}}_j}) \\
&= \frac{\lambda_i}{1+\lambda_i} \sum_j w_{ij}\, 
d_W(p_i^{\mathrm{data}}, p_{\bar{\boldsymbol{\theta}}_j}),
\end{align*}
where the first term vanishes since $d_W(p_i^{\mathrm{data}}, p_i^{\mathrm{data}}) = 0$. 

By the definition of $\varepsilon_i = \max_{j: w_{ij}>0} 
d_W(p_{\bar{\boldsymbol{\theta}}_j}, p_i^{\mathrm{data}})$ and 
$\sum_j w_{ij} = 1$ (row-stochasticity of $\mathbf{W}$):
$$
B \le \frac{\lambda_i}{1+\lambda_i} \cdot \varepsilon_i.
$$

\emph{Combining.} Substituting the bounds for $A$ and $B$ 
into~\eqref{eq:dW-triangle}:
$$
d_W(p_{\bar{\boldsymbol{\theta}}_i}, \bar{p}_i) 
\le \varepsilon_i + \frac{\lambda_i}{1+\lambda_i}\, \varepsilon_i 
= \frac{1 + 2\lambda_i}{1+\lambda_i}\, \varepsilon_i.
$$
Combined with~\eqref{eq:r_to_dW},
$$
\|\bar{\boldsymbol{r}}_i\| \le \frac{1 + 2\lambda_i}{1+\lambda_i}\, \varepsilon_i.
$$
\end{proof}

\globalsolution*
\begin{proof}
By Proposition~\ref{prop:fixed_point_mixture}, for each 
$i \in \mathcal{V}$,
$$
\bar{\boldsymbol{f}}_i 
= \frac{1}{1+\lambda_i} \boldsymbol{f}_i^{\mathrm{data}} 
+ \frac{\lambda_i}{1+\lambda_i} \sum_{j \in \mathcal{V}} w_{ij}\, \bar{\boldsymbol{f}}_j 
+ \bar{\boldsymbol{r}}_i.
$$
Multiplying both sides by $(1+\lambda_i)$ and rearranging:
$$
(1+\lambda_i)\bar{\boldsymbol{f}}_i 
- \lambda_i \sum_{j \in \mathcal{V}} w_{ij}\, \bar{\boldsymbol{f}}_j 
= \boldsymbol{f}_i^{\mathrm{data}} + (1+\lambda_i)\bar{\boldsymbol{r}}_i.
$$
Stacking over $i \in \mathcal{V}$ in matrix form:
$$
(\mathbf{I} + \mathbf{\Lambda})\bar{\mathbf{F}} 
- \mathbf{\Lambda}\mathbf{W}\bar{\mathbf{F}} 
= \mathbf{F}_{\mathrm{data}} + (\mathbf{I} + \mathbf{\Lambda})\bar{\mathbf{R}}.
$$
Since $(\mathbf{I} + \mathbf{\Lambda}) - \mathbf{\Lambda}\mathbf{W} 
= \mathbf{I} + \mathbf{\Lambda}(\mathbf{I} - \mathbf{W}) 
= \mathbf{I} + \mathbf{\Lambda}\mathbf{L}$:
\begin{equation}
\label{eq:matrix_eqn}
(\mathbf{I} + \mathbf{\Lambda}\mathbf{L})\bar{\mathbf{F}} 
= \mathbf{F}_{\mathrm{data}} + (\mathbf{I} + \mathbf{\Lambda})\bar{\mathbf{R}}.
\end{equation}

Let $\mathbf{M} := \mathbf{I} + \mathbf{\Lambda}\mathbf{L} 
= (\mathbf{I} + \mathbf{\Lambda}) - \mathbf{\Lambda}\mathbf{W}$. The 
entries of $\mathbf{M}$ are
$$
M_{ii} = 1 + \lambda_i - \lambda_i w_{ii} = 1 + \lambda_i(1 - w_{ii}), 
\qquad 
M_{ij} = -\lambda_i w_{ij} \quad (j \neq i).
$$
The off-diagonal absolute row sum is
$$
\sum_{j \neq i} |M_{ij}| 
= \lambda_i \sum_{j \neq i} w_{ij} 
= \lambda_i (1 - w_{ii}),
$$
using $\sum_j w_{ij} = 1$ (row-stochasticity of $\mathbf{W}$). Comparing 
with the diagonal:
$$
M_{ii} = 1 + \lambda_i(1 - w_{ii}) > \lambda_i(1 - w_{ii}) = \sum_{j \neq i} |M_{ij}|,
$$
so $\mathbf{M}$ is strictly diagonally dominant, $\mathbf{M}$ is invertible.

Multiplying both sides of~\eqref{eq:matrix_eqn} by 
$\mathbf{M}^{-1} = (\mathbf{I} + \mathbf{\Lambda}\mathbf{L})^{-1}$:
$$
\bar{\mathbf{F}} = (\mathbf{I} + \mathbf{\Lambda}\mathbf{L})^{-1}
\bigl[\mathbf{F}_{\mathrm{data}} + (\mathbf{I} + \mathbf{\Lambda})\bar{\mathbf{R}}\bigr].
$$
\end{proof}

\subsection{Proofs of Theorem~\ref{thm:general_diversity} and Corollary~\ref{cor:symmetric_bound}}
\label{subsec:proof_thm:diversity_bound}

\generaldiversity*
\begin{proof}
Let $\mathbf{M} := \mathbf{I} + \mathbf{\Lambda}\mathbf{L}$ and 
$\mathbf{P} := \mathbf{I}_K - \frac{1}{|\mathcal{V}|}\mathbf{1}\mathbf{1}^\top$ 
be the centering matrix.

For $\mathbf{A} \in \mathbb{R}^{K \times D}$ with rows $\boldsymbol{a}_i$ and column-mean $\boldsymbol{a}_{\mathrm{avg}} = \frac{1}{|\mathcal{V}|}\sum_i \boldsymbol{a}_i$, the rows of $\mathbf{P}\mathbf{A}$ are $\boldsymbol{a}_i - \boldsymbol{a}_{\mathrm{avg}}$, so
$$
\mathcal{D}_{\mathrm{out}} = \frac{1}{|\mathcal{V}|} \|\mathbf{P}\bar{\mathbf{F}}\|_F^2, 
\qquad 
\mathcal{D}_{\mathrm{data}} = \frac{1}{|\mathcal{V}|} \|\mathbf{P}\mathbf{F}_{\mathrm{data}}\|_F^2.
$$

By Proposition~\ref{prop:global_solution},
$$
\bar{\mathbf{F}} = \mathbf{M}^{-1}\mathbf{F}_{\mathrm{data}} 
+ \mathbf{M}^{-1}(\mathbf{I} + \mathbf{\Lambda})\bar{\mathbf{R}}.
$$
Since $\mathbf{W}$ is row-stochastic, $\mathbf{M}\mathbf{1} = \mathbf{1}$, 
hence $\mathbf{M}^{-1}\mathbf{1} = \mathbf{1}$. 

Decomposing 
$\mathbf{F}_{\mathrm{data}} = \mathbf{P}\mathbf{F}_{\mathrm{data}} + 
\tfrac{1}{|\mathcal{V}|}\mathbf{1}\mathbf{1}^\top \mathbf{F}_{\mathrm{data}}$ 
and applying $\mathbf{M}^{-1}$:
\begin{align*}
\mathbf{P}\mathbf{M}^{-1}\mathbf{F}_{\mathrm{data}} 
&= \mathbf{P}\mathbf{M}^{-1}\mathbf{P}\mathbf{F}_{\mathrm{data}} 
+ \tfrac{1}{|\mathcal{V}|}\mathbf{P}\mathbf{M}^{-1}\mathbf{1}\mathbf{1}^\top \mathbf{F}_{\mathrm{data}} \\
&= \mathbf{P}\mathbf{M}^{-1}\mathbf{P}\mathbf{F}_{\mathrm{data}} 
+ \tfrac{1}{|\mathcal{V}|}\mathbf{P}\mathbf{1}\mathbf{1}^\top \mathbf{F}_{\mathrm{data}} 
&& \text{(}\mathbf{M}^{-1}\mathbf{1} = \mathbf{1}\text{)} \\
&= \mathbf{P}\mathbf{M}^{-1}\mathbf{P}\mathbf{F}_{\mathrm{data}} 
&& \text{(}\mathbf{P}\mathbf{1} = 0\text{)}.
\end{align*}
Hence
$$
\mathbf{P}\bar{\mathbf{F}} 
= \mathbf{P}\mathbf{M}^{-1}\mathbf{P}\mathbf{F}_{\mathrm{data}} 
+ \mathbf{P}\mathbf{M}^{-1}(\mathbf{I} + \mathbf{\Lambda})\bar{\mathbf{R}}.
$$

Apply $\|a + b\|_F^2 \le 2\|a\|_F^2 + 2\|b\|_F^2$.
$$
\|\mathbf{P}\bar{\mathbf{F}}\|_F^2 
\le 2\|\mathbf{P}\mathbf{M}^{-1}\mathbf{P}\mathbf{F}_{\mathrm{data}}\|_F^2 
+ 2\|\mathbf{P}\mathbf{M}^{-1}(\mathbf{I} + \mathbf{\Lambda})\bar{\mathbf{R}}\|_F^2.
$$

Using $\|\mathbf{A}\mathbf{B}\|_F \le \|\mathbf{A}\|_2 \|\mathbf{B}\|_F$ 
and $\|\mathbf{P}\|_2 = 1$:
$$
\|\mathbf{P}\mathbf{M}^{-1}\mathbf{P}\mathbf{F}_{\mathrm{data}}\|_F^2 
\le \|\mathbf{M}^{-1}\|_2^2 \cdot \|\mathbf{P}\mathbf{F}_{\mathrm{data}}\|_F^2 
= |\mathcal{V}|\, \|\mathbf{M}^{-1}\|_2^2 \cdot \mathcal{D}_{\mathrm{data}}.
$$

Similarly,
$$
\|\mathbf{P}\mathbf{M}^{-1}(\mathbf{I} + \mathbf{\Lambda})\bar{\mathbf{R}}\|_F^2 
\le \|\mathbf{M}^{-1}\|_2^2 \cdot \|(\mathbf{I} + \mathbf{\Lambda})\bar{\mathbf{R}}\|_F^2.
$$
Since $(\mathbf{I} + \mathbf{\Lambda})\bar{\mathbf{R}}$ has rows 
$(1+\lambda_i)\bar{\boldsymbol{r}}_i$,
\[
\|(\mathbf{I} + \mathbf{\Lambda})\bar{\mathbf{R}}\|_F^2 
= \sum_{i \in \mathcal{V}} (1+\lambda_i)^2 \|\bar{\boldsymbol{r}}_i\|_2^2.
\]
By the residual bound from Proposition~\ref{prop:fixed_point_mixture}, 
$\|\bar{\boldsymbol{r}}_i\|\le \frac{1+2\lambda_i}{1+\lambda_i}\varepsilon_i$ . Hence
$$
(1+\lambda_i)^2 \|\bar{\boldsymbol{r}}_i\|^2 
\le (1+\lambda_i)^2 \cdot \frac{(1+2\lambda_i)^2}{(1+\lambda_i)^2}\, \varepsilon_i^2 
= (1+2\lambda_i)^2\, \varepsilon_i^2.
$$
Therefore
$$
\|(\mathbf{I} + \mathbf{\Lambda})\bar{\mathbf{R}}\|_F^2 
\le \sum_{i \in \mathcal{V}} (1+2\lambda_i)^2\, \varepsilon_i^2.
$$

Substituting into Step 3 and dividing by $|\mathcal{V}|$:
$$
\mathcal{D}_{\mathrm{out}} 
= \frac{1}{|\mathcal{V}|} \|\mathbf{P}\bar{\mathbf{F}}\|_F^2 
\le 2\|\mathbf{M}^{-1}\|_2^2 \biggl[\mathcal{D}_{\mathrm{data}} 
+ \frac{1}{|\mathcal{V}|}\sum_{i \in \mathcal{V}} (1+2\lambda_i)^2 \varepsilon_i^2\biggr].
$$
\end{proof}

\symmetricbound*
\begin{proof}
Under the assumption $\bar{\boldsymbol{r}}_i = 0$, hence $\bar{\mathbf{R}} = \mathbf{0}$, and 
Proposition~\ref{prop:global_solution} reduces to
$$
\bar{\mathbf{F}} = \mathbf{M}^{-1} \mathbf{F}_{\mathrm{data}}, 
\qquad \mathbf{M} := \mathbf{I} + \lambda \mathbf{L}.
$$

\emph{Step 1: Spectral analysis of $\mathbf{M}$.}
Since $\mathbf{W}$ is symmetric, $\mathbf{L} = \mathbf{I} - \mathbf{W}$ is symmetric. 

Because $\mathbf{W}$ is row-stochastic and nonnegative, $\mathbf{L}$ is positive semi-definite with eigenvalues $0 = \mu_1 < \mu_2 \le \cdots \le \mu_K$ and orthonormal eigenvectors $\mathbf{u}_1 = \frac{1}{\sqrt{K}}\mathbf{1}, \mathbf{u}_2, \ldots, \mathbf{u}_K$.

Hence $\mathbf{M}^{-1}$ has the same eigenvectors with eigenvalues $\frac{1}{1+\lambda \mu_k}$. The function $g(\mu) = \frac{1}{1+\lambda\mu}$ is monotonically decreasing for $\lambda > 0$, so $\mathbf{M}^{-1}$'s largest eigenvalue is $\frac{1}{1+\lambda\mu_1} = 1$ (along $\mathbf{u}_1$), and its second largest is $\frac{1}{1+\lambda\mu_2}$ (along $\mathbf{u}_2$).

\emph{Step 2: Centering and restriction to $\mathbf{1}^\perp$.}
Let $\mathbf{P} := \mathbf{I}_K - \frac{1}{K}\mathbf{1}\mathbf{1}^\top$ 
be the centering matrix. 
\[
\mathcal{D}_{\mathrm{out}} = \frac{1}{K}\|\mathbf{P}\bar{\mathbf{F}}\|_F^2, 
\qquad 
\mathcal{D}_{\mathrm{data}} = \frac{1}{K}\|\mathbf{P}\mathbf{F}_{\mathrm{data}}\|_F^2.
\]
Since $\mathbf{M}^{-1}\mathbf{1} = \mathbf{1}$ (the eigenvector for 
$\mu_1 = 0$), $\mathbf{M}^{-1}$ commutes with $\mathbf{P}$ in this 
symmetric case, and $\mathbf{P}$ projects onto $\mathbf{1}^\perp$:
\[
\mathbf{P}\bar{\mathbf{F}} = \mathbf{P}\mathbf{M}^{-1}\mathbf{F}_{\mathrm{data}} 
= \mathbf{M}^{-1}\mathbf{P}\mathbf{F}_{\mathrm{data}}.
\]

\emph{Step 3: Bound via restricted spectral norm.}
On $\mathbf{1}^\perp$, $\mathbf{M}^{-1}$ has spectral norm
\[
\|\mathbf{M}^{-1}|_{\mathbf{1}^\perp}\|_2 
= \max_{k \ge 2} \frac{1}{1 + \lambda \mu_k} 
= \frac{1}{1 + \lambda \mu_2}.
\]
Hence
\[
\|\mathbf{P}\bar{\mathbf{F}}\|_F^2 
= \|\mathbf{M}^{-1}\mathbf{P}\mathbf{F}_{\mathrm{data}}\|_F^2 
\le \|\mathbf{M}^{-1}|_{\mathbf{1}^\perp}\|_2^2 \cdot \|\mathbf{P}\mathbf{F}_{\mathrm{data}}\|_F^2 
= \frac{1}{(1+\lambda\mu_2)^2} \|\mathbf{P}\mathbf{F}_{\mathrm{data}}\|_F^2.
\]
Dividing both sides by $K$:
\[
\mathcal{D}_{\mathrm{out}} \le \frac{1}{(1+\lambda\mu_2)^2}\, \mathcal{D}_{\mathrm{data}}.
\]
\end{proof}
\section{Closed-Form Spectral Validation of the Networked Dynamics}
\label{sec:controlled}

To complement the neural generative-model experiments, we consider a controlled networked dynamical system in which the fixed point, Jacobian, comparison matrix, and cycle gains are all available in closed form. While the experiments in Sec.~\ref{sec:exp} test whether the qualitative mechanisms predicted by our theory persist under finite data, imperfect optimization, and model misspecification, the purpose of this experiment is different: it provides a direct numerical validation.

\textbf{Setup.}We consider $K=4$ interacting nodes, denoted by $\{A,B,C,D\}$, with each node represented by a two-dimensional state $\theta_i\in\mathbb{R}^2$. We assign the heterogeneous reference centroids
\begin{equation*}
    a_A =
    \begin{bmatrix}
        1 \\ 0
    \end{bmatrix},
    \qquad
    a_B =
    \begin{bmatrix}
        -1 \\ 0
    \end{bmatrix},
    \qquad
    a_C =
    \begin{bmatrix}
        0 \\ 1
    \end{bmatrix},
    \qquad
    a_D =
    \begin{bmatrix}
        0 \\ -1
    \end{bmatrix}
\end{equation*}
and let
\begin{equation*}
    A =
    \begin{bmatrix}
        a_A^\top \\
        a_B^\top \\
        a_C^\top \\
        a_D^\top
    \end{bmatrix}
    \in \mathbb{R}^{4\times 2}
\end{equation*}

To introduce controlled model heterogeneity, we associate each source node with a sensitivity coefficient $c = (1,\,1,\,1.8,\,1)^\top$ and $\Gamma = \operatorname{diag}(c)$. Thus, node $C$ has a larger local sensitivity than the other nodes. Importantly, the same sensitivity vector $c$ is used for every interaction topology; no topology-specific parameters are tuned.

For an interaction matrix $W$, the network evolves according to the affine update
\begin{equation*}
    \Theta^{t+1}
    =
    \frac{
        A + \lambda W\Gamma\Theta^t
    }{1+\lambda}
\end{equation*}
where $\Theta^t=[(\theta_A^t)^\top,\ldots,(\theta_D^t)^\top]^\top$ collects the four node states and $\lambda\geq 0$ controls the strength of the interaction term. As in the main experiments, we report this strength using the synthetic-data ratio $s := \frac{\lambda}{1+\lambda}\in[0,1)$.

We evaluate the first four interaction systems used in the main paper, as shown in Fig.~\ref{Appfig:HomoToy}. System~1 corresponds to isolated self-consumption, Systems~2 and~3 share the same ring topology but use different interaction strengths, and System~4 is fully connected. These are the same four controlled topologies used in the main experiments.

\textbf{Closed-form fixed point and exact Jacobian.} Whenever $(1+\lambda)I-\lambda W\Gamma$ is nonsingular, Eq.~\eqref{eq:f} admits the unique fixed point $\bar{\Theta} =\left[(1+\lambda)I-\lambda W\Gamma \right]^{-1} A$. Unlike the realistic generative-model experiments, this controlled setting therefore allows us to directly measure the distance to the fixed point, $\|\Theta^t-\bar{\Theta}\|_F$. And its Jacobian is constant and available exactly:$J=s(W\Gamma)\otimes I_2=sW\Gamma$. Consequently, the error $E^t:=\Theta^t-\bar{\Theta}$ follows $E^{t+1}=J E^t$. Hence, for a generic perturbation with a nonzero component in the dominant eigenspace, the asymptotic convergence factor is $\rho(J)$. All graph-level quantities, including the comparison matrix $D$ and the cycle gain $\gamma(C)$, follow the definitions in Sec.~\ref{sec:HomoData}.

\textbf{Direct validation of the spectral criterion}
We first fix $\lambda=1$, corresponding to a synthetic-data ratio $s=0.5$. For each interaction topology, we initialize the dynamics in a small neighborhood of the exact fixed point $\Theta^0=\bar{\Theta}+10^{-3} Z$, where the entries of $Z$ are independently sampled from a standard Gaussian distribution. We run $250$ iterations and record $e_t := \|\Theta^t-\bar{\Theta}\|_F$ .
To estimate the asymptotic contraction rate, we fit
\begin{equation*}
    \log e_t
    \approx
    b + t\log r_{\mathrm{emp}}
\end{equation*}
over the latter half of the trajectory before numerical saturation, and report the resulting $r_{\mathrm{emp}}$.

Tbl.~\ref{subtab:closed_form_spectral} reports the exact and empirical spectral quantities. The empirical convergence rate agrees with $\rho(J)$ to three decimal places in every system. Moreover, $\rho(J)\leq\rho(D)$ holds throughout, consistent with the comparison principle in Thm.~\ref{thm:stability}.

\begin{table}[htbp]
    \centering
    \small
    \caption{
    Closed-form spectral validation at $\lambda=1$ ($s=0.5$).  The measured asymptotic rate $r_{\mathrm{emp}}$ matches the exact Jacobian spectral radius $\rho(J)$, while $\rho(D)$ upper-bounds $\rho(J)$. The last column reports the largest cycle gain and verifies the lower bound in Prop.~\ref{prop:cycle_gain}.
    }
    \label{subtab:closed_form_spectral}
    \begin{tabular}{lccccc}
        \toprule
        System
        & $\|W\|_2$
        & $\rho(J)$
        & $r_{\mathrm{emp}}$
        & $\rho(D)$
        & $\max_C \gamma(C)$
        \\
        \midrule
        Sys.~1 (isolated)
        & 1.00
        & 0.900
        & 0.900
        & 0.900
        & 0.900
        \\
        Sys.~2 (weak ring)
        & 1.00
        & 0.606
        & 0.606
        & 0.700
        & 0.450
        \\
        Sys.~3 (strong ring)
        & 1.00
        & 0.582
        & 0.582
        & 0.674
        & 0.604
        \\
        Sys.~4 (dense)
        & 1.00
        & 0.600
        & 0.600
        & 0.900
        & 0.225
        \\
        \bottomrule
    \end{tabular}
\end{table}

All four systems have the same spectral norm $\|W\|_2=1$, yet their exact convergence factors range from $0.582$ to $0.900$. Thus, the interaction norm alone cannot distinguish their dynamics. In contrast, the topology-aware spectral quantities retain information about how heterogeneous local sensitivities are propagated through the network. This illustrates why the norm-based condition in Cor.~\ref{cor:norm-certificate} can be strictly more conservative than the spectral criterion of Thm.~\ref{thm:stability}..In all systems,$\max_C \gamma(C)\leq\rho(D)$, as predicted by Prop.~\ref{prop:cycle_gain}. The bound can be tight, as in the isolated system, or substantially loose in graphs containing
many simultaneously interacting paths.

\textbf{Critical synthetic-data ratio} Because both $J$ and $D$ scale linearly with $s=\lambda/(1+\lambda)$, the two stability thresholds are available in closed form. For the systems considered here,
$s_c^J=\frac{1}{\rho(W\Gamma)}$, $s_c^D= \frac{1}{\rho(\operatorname{diag}(M_1,\ldots,M_K)W)}$. Table~\ref{subtab:closed_form_threshold} reports these two thresholds. For every system, $s_c^D\leq s_c^J$, showing quantitatively that the $D$-based criterion lies on the conservative side of the exact Jacobian threshold.

\begin{table}[htpb]
    \centering
    \caption{ Critical synthetic-data ratio $s=\lambda/(1+\lambda)$. The comparison-matrix threshold $s_c^D$ is a sufficient stability certificate and is therefore no larger than the exact Jacobian threshold $s_c^J$ in all tested systems.
    }
    \label{subtab:closed_form_threshold}
    \begin{tabular}{lcc}
        \toprule
        System
        & $s_c^D$
        & $s_c^J$
        \\
        \midrule
        Sys.~1 (isolated)
        & 0.556
        & 0.556
        \\
        Sys.~2 (weak ring)
        & 0.714
        & 0.825
        \\
        Sys.~3 (strong ring)
        & 0.742
        & 0.860
        \\
        Sys.~4 (dense)
        & 0.556
        & 0.833
        \\
        \bottomrule
    \end{tabular}
\end{table}

The gap between the two thresholds quantifies the conservativeness introduced by replacing heterogeneous block sensitivities with the nonnegative comparison system. This gap depends on topology. For example, the bound is tight in the isolated System~1, whereas in the dense System~4 the sufficient threshold is $0.556$, compared with the exact Jacobian threshold $0.833$. Thus, the comparison matrix provides a tractable topology-aware certificate without requiring it to exactly recover the full Jacobian dynamics.

\textbf{Summary.} This controlled experiment complements the neural generative-model experiments in two respects. First, because the fixed point and Jacobian
are available exactly, it directly verifies that $\rho(J)$ governs the asymptotic convergence rate. Second, it numerically illustrates both levels of graph-based comparison developed in the theory. Together with the realistic experiments in Sec.~\ref{sec:exp}, these results distinguish two complementary forms of validation: the controlled system directly verifies the spectral predictions under an exactly analyzable update map, while the neural-model experiments test whether the same qualitative mechanisms remain visible under finite data, imperfect optimization, and heterogeneous model architectures.

\newpage
\section{Experiments with Synthetic Gaussian Data}
\label{sec:synex}

In this section, we present synthetic data experiments to empirically validate the theoretical results of the main paper, providing a low-dimensional illustration that complements large-scale image experiments.

\begin{table}[thbp!]
\begin{center}
\small
\caption{Model architectures and data access.}
\label{subtab:model_summary}
\begin{tabular}{c c c c c c }
\toprule
$\#$ &
Model &
Architecture & Stability &
Diveristy & 
Retrain Steps\\
\midrule
A & DDPM & MLP (hidden dimensions 128) & $t \in \{0,\cdots,7\}$ & $t\in\{0,4\}$ & $100$\\
B & CFM  & MLP & $t \in \{0,\cdots,7\}$       & $t \in \{2,6\}$ & $50$ \\
C & GAN  & MLP  & $t \in \{0,\cdots,7\}$      & $t \in \{3,5\}$ & $30$ \\
D & DDPM & MLP (hidden dimensions 256) & $t \in \{0,\cdots,7\}$ & $t \in \{1,7\}$ & $100$ \\
\bottomrule
\end{tabular}
\end{center}
\end{table}

\paragraph{Dataset.} The synthetic dataset is a two-dimensional dataset constructed from mixtures of isotropic Gaussian distributions. The global data distribution follows a standard 8-mode Gaussian, where eight mode centers are uniformly distributed on a circle of radius $2$. The coordinates of these centers are:
$\mu_t = 2 \times \left(\cos(\frac{t\pi}{4}), \sin(\frac{t\pi}{4})\right)$ where $t = 0,1,2,\dots,7$. We use two real-data assignments to evaluate different aspects of the coupled system. The same data for all models is used for stability experiments, and the heterogeneous assignment is used for diversity experiments, where each model $i$ is associated with a subset of these Gaussian modes, as specified in~\Cref{subtab:model_summary}. Each data point is independently and uniformly sampled from one of the assigned mode centers, with isotropic Gaussian noise (mean of $0$ and standard deviation of $0.02$) added:
$x = \mu_t + \epsilon$, $\epsilon \sim \mathcal{N}(0, 0.02^2 I_2)$

\paragraph{Models.} Each model is implemented as a lightweight generative model operating in $\mathbb{R}^2$.
We consider three standard generative paradigms: DDPM \citep{ddpm}, CFM \citep{cfm}, GANs \citep{GAN}.
All models are implemented using multilayer perceptrons and are intentionally kept simple to minimize architectural inductive bias. For DDPM models, we use two network architectures with different capacities. All models are trained using identical optimization procedures within each paradigm.

Each model is first trained on its own real data distribution during a warm-up phase until convergence. This initialization ensures that all models start from a converged solution of their real-data objectives. A detailed summary of model architectures, data preferences is reported in~\Cref{subtab:model_summary}.

\paragraph{Training Settings.}
All experiments follow a fixed iterative retraining protocol.
In each retraining round, every model is trained on a mixture of real samples and synthetic samples generated in the previous round.
The total training set size is fixed to 10,000 samples, with the proportion of synthetic samples determined by the mixing ratio $\alpha_i= \frac{\lambda_i}{\lambda_i+1}$. In each round, $10,000$ samples are generated to calculate the metrics. All experiments are repeated with three different random seeds, and results are reported as the mean across runs.

\paragraph{Metrics.}
We measure the distributional discrepancy between generated samples and reference distributions using the 1-Wasserstein \citep{Wasserstein} distance $\mathcal{W}\mathcal{D}$.
To quantify diversity across models, we compute a system-level diversity  metric $\mathcal{D}$ defined as the average deviation of each model’s generated sample mean from the global mean of all models:
$
\mathcal{D} = \frac{1}{K} \sum_{i=1}^K \left\| \mu_i - \frac{1}{K}\sum_{j=1}^K \mu_j \right\|_2
$

\begin{algorithm}[t]
\caption{Iterative Self-Consuming Retraining}
\label{alg:retraining}
\small
\begin{algorithmic}[1]
\STATE {\bfseries Input:} $\{\mathcal{D}_i^{\mathrm{real}}\}_{i=1}^K$,  $\boldsymbol{\theta}_i^{0}$ trained on $\mathcal{D}_i^{\mathrm{real}}$, interaction weights $\mathbf{W}$, synthetic mixing ratio $\alpha_i$

\FOR{round $t=1$ to $T$}
    \FOR{each model $j=1,\dots,K$}
        \STATE Generate synthetic samples $\mathcal{D}_j^{t} \sim p_{\boldsymbol{\theta}_j^{t-1}}$
    \ENDFOR
    \FOR{each model $i=1,\dots,K$}
        \STATE Construct training set $\mathcal{D}_i^{t} = ( 1- \alpha_i) \mathcal{D}_i^{\mathrm{real}} \cup \alpha_i \bigcup_j w_{ij} \mathcal{D}_j^{(t)}$
        \STATE Update $\boldsymbol{\theta}_i^{t}$ by training on $\mathcal{D}_i^{t}$
    \ENDFOR
\ENDFOR

\STATE {\bfseries Output:} $\{\boldsymbol{\theta}_i^{T}\}_{i=1}^K$
\end{algorithmic}
\end{algorithm}

\begin{figure}[htbp]
    \centering
    \begin{subfigure}[c]{0.1\linewidth}
        \centering
        \includegraphics[width=\linewidth]{Figure/W/W1.png}
    \end{subfigure}
    \hfill
    \begin{subfigure}[c]{0.87\linewidth}
        \centering
        \includegraphics[width=\linewidth]{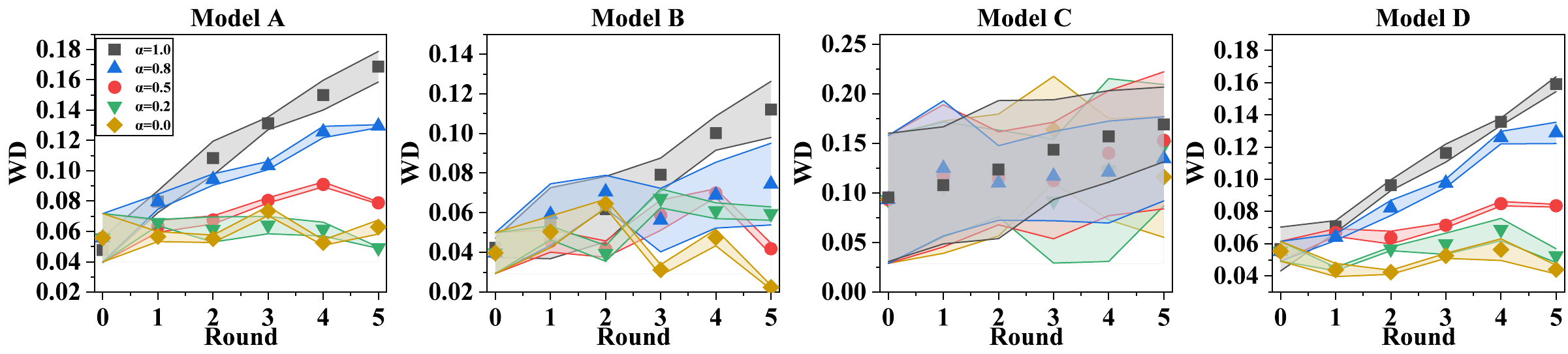}
    \end{subfigure}

    \begin{subfigure}[c]{0.1\linewidth}
        \centering
        \includegraphics[width=\linewidth]{Figure/W/W2.png}
    \end{subfigure}
    \hfill
    \begin{subfigure}[c]{0.87\linewidth}
        \centering
        \includegraphics[width=\linewidth]{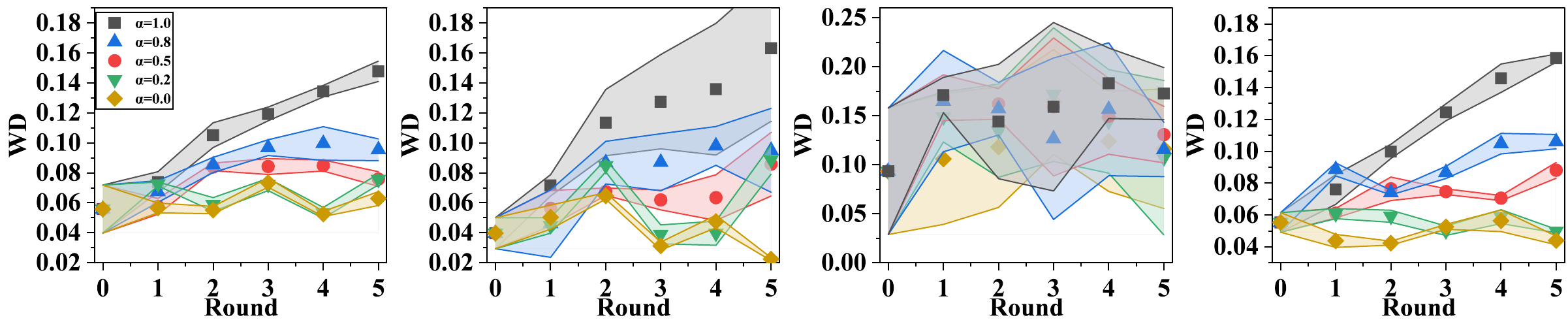}
    \end{subfigure}

    \begin{subfigure}[c]{0.1\linewidth}
        \centering
        \includegraphics[width=\linewidth]{Figure/W/W3.png}
    \end{subfigure}
    \hfill
    \begin{subfigure}[c]{0.87\linewidth}
        \centering
        \includegraphics[width=\linewidth]{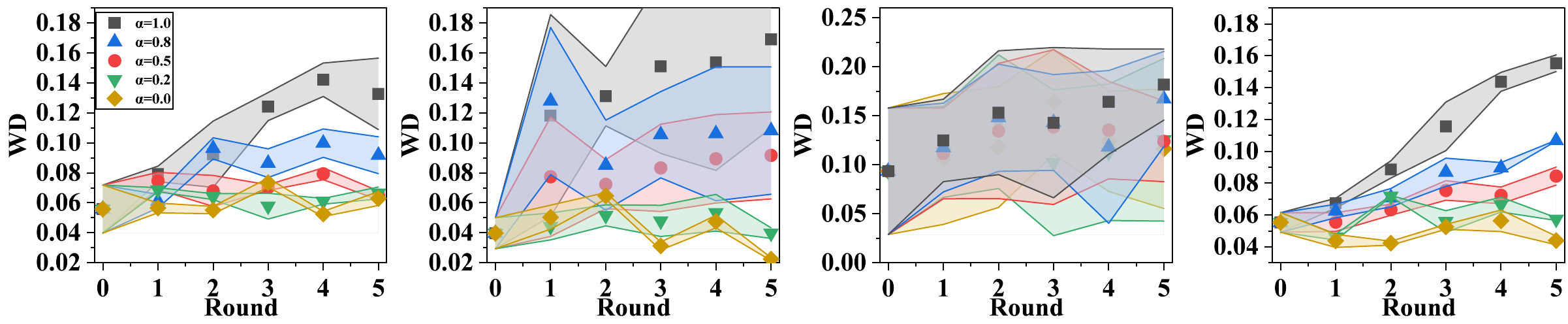}
    \end{subfigure}

    \begin{subfigure}[c]{0.1\linewidth}
        \centering
        \includegraphics[width=\linewidth]{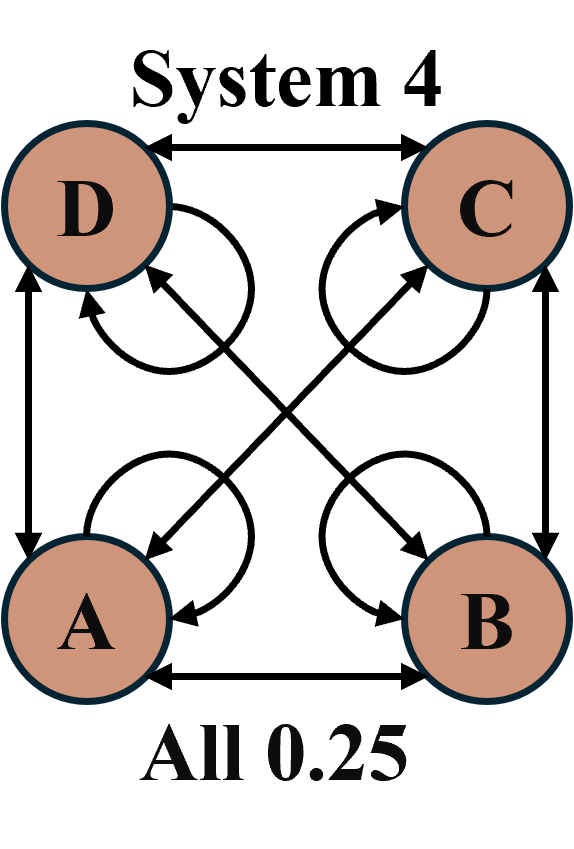}
    \end{subfigure}
    \hfill
    \begin{subfigure}[c]{0.87\linewidth}
        \centering
        \includegraphics[width=\linewidth]{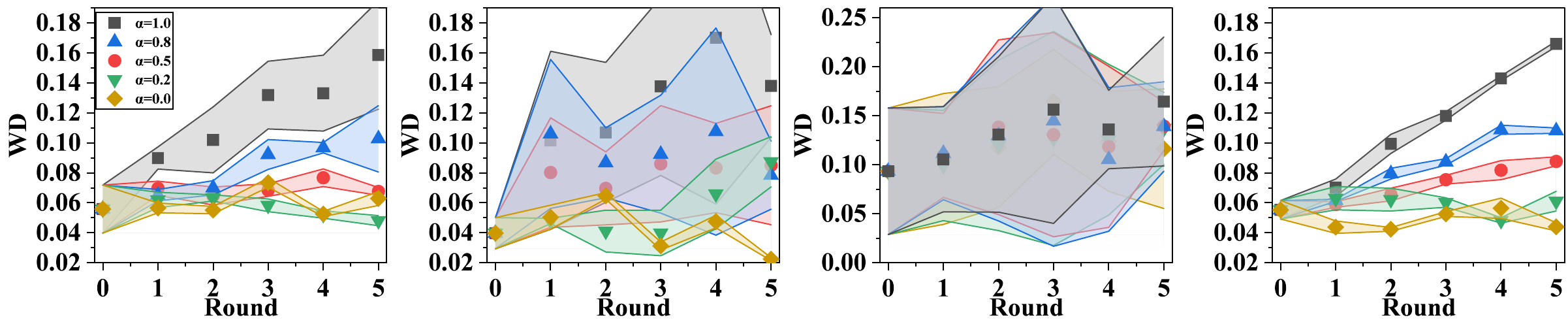}
    \end{subfigure}

    \begin{subfigure}[c]{0.1\linewidth}
        \centering
        \includegraphics[width=\linewidth]{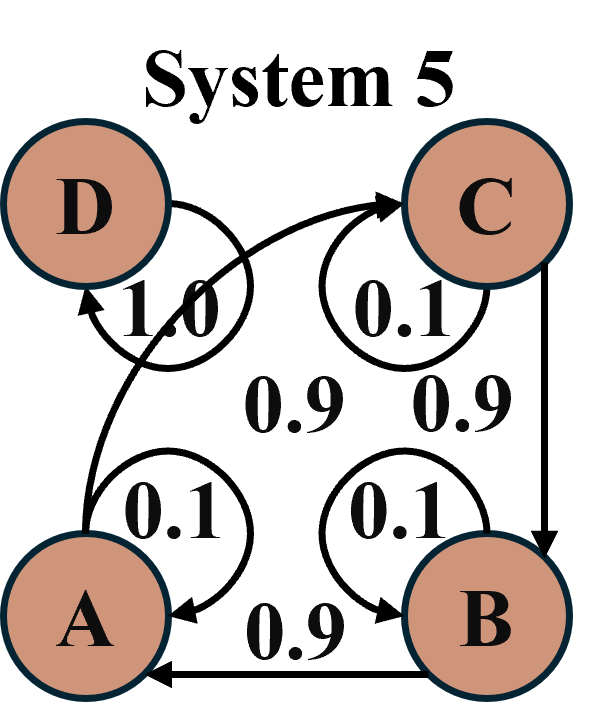}
    \end{subfigure}
    \hfill
    \begin{subfigure}[c]{0.87\linewidth}
        \centering
        \includegraphics[width=\linewidth]{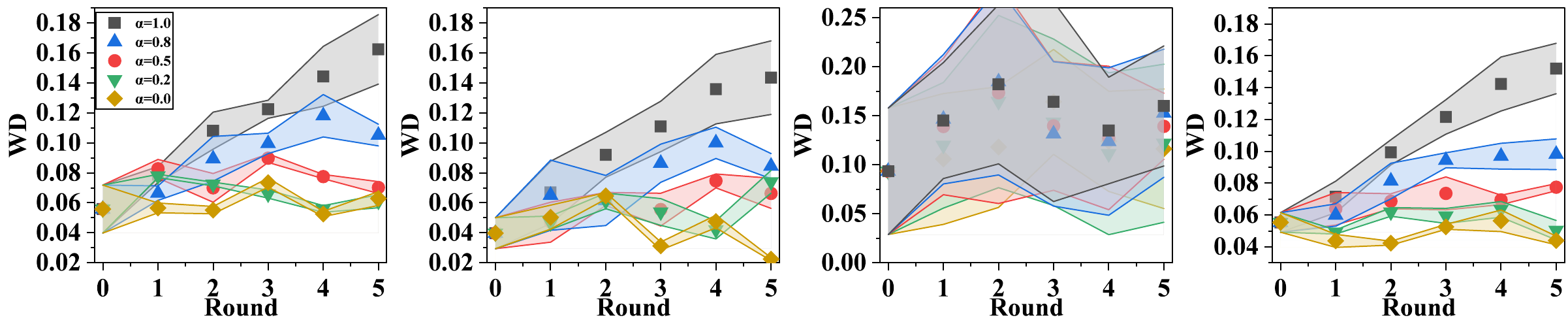}
    \end{subfigure}

    \begin{subfigure}[c]{0.1\linewidth}
        \centering
        \includegraphics[width=\linewidth]{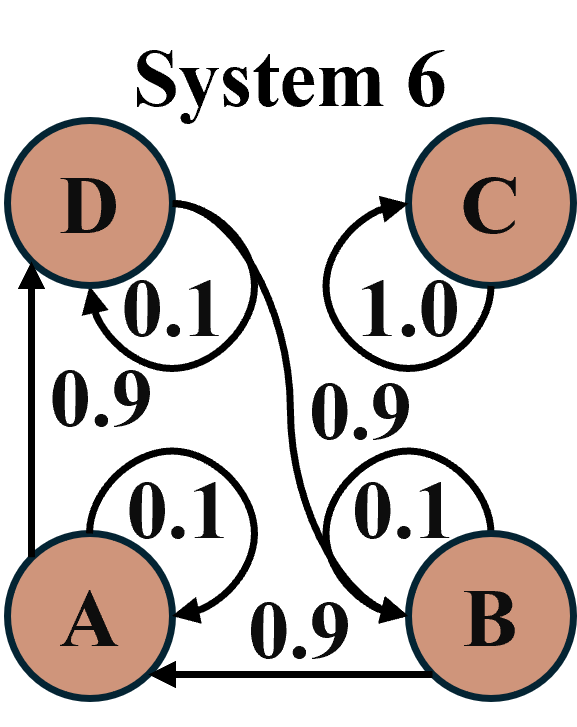}
    \end{subfigure}
    \hfill
    \begin{subfigure}[c]{0.87\linewidth}
        \centering
        \includegraphics[width=\linewidth]{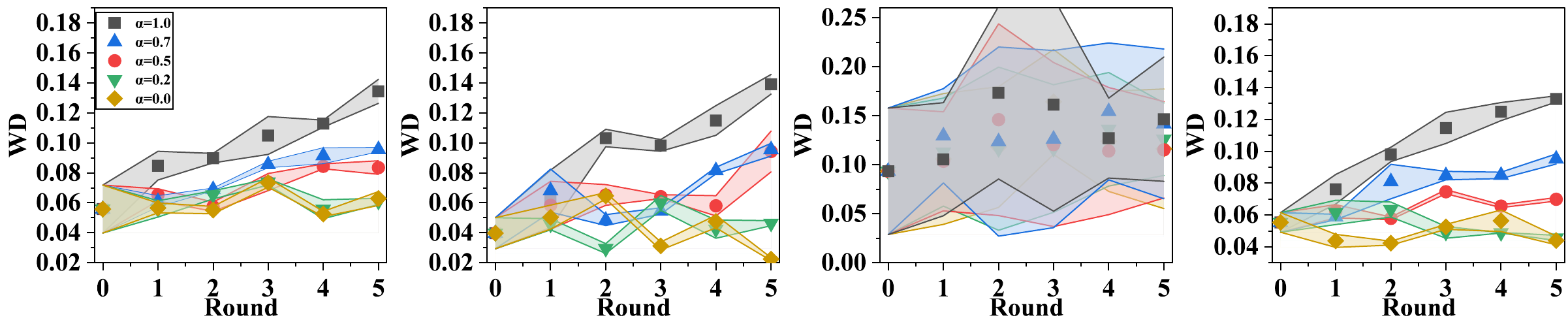}
    \end{subfigure}
    
    \begin{subfigure}[c]{0.1\linewidth}
        \centering
        \includegraphics[width=\linewidth]{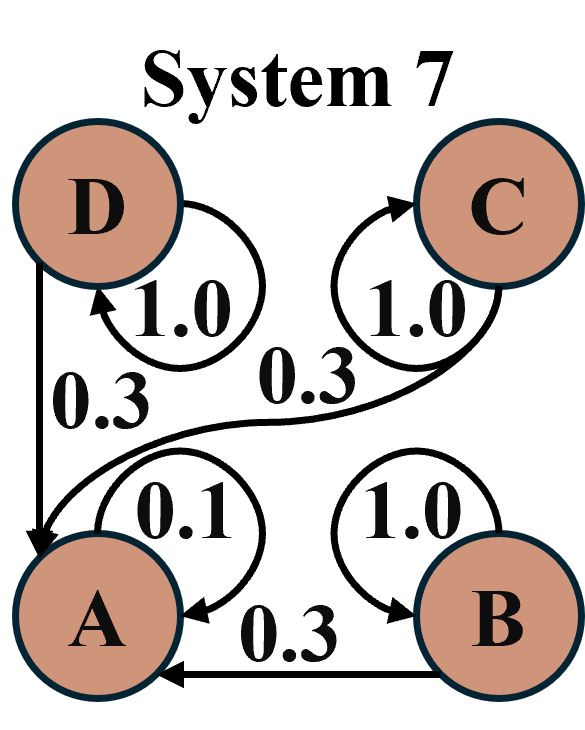}
    \end{subfigure}
    \hfill
    \begin{subfigure}[c]{0.87\linewidth}
        \centering
        \includegraphics[width=\linewidth]{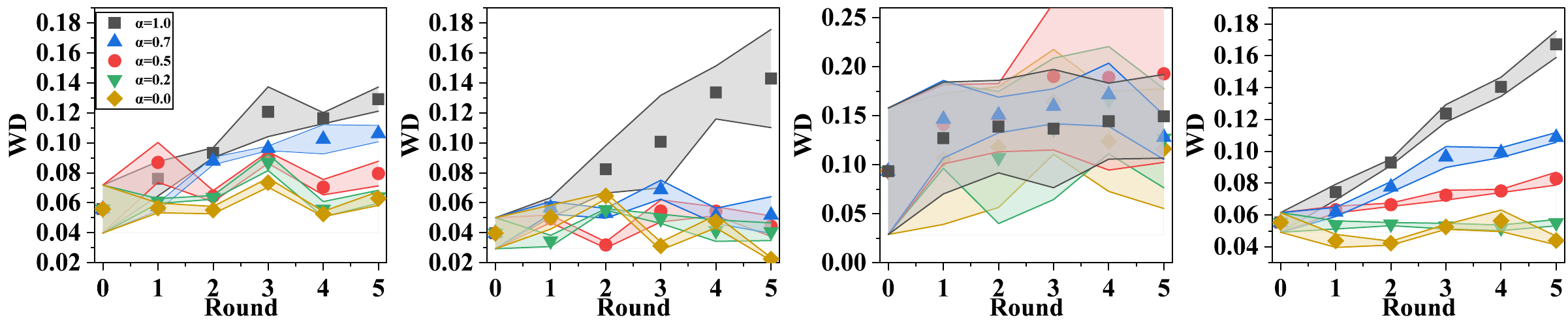}
    \end{subfigure}
    \caption{Stability experiments when all models trained on full dataset. Each row corresponds to a different interaction graph. For each row, the left figure illustrates the direction and strength of synthetic data flow, while the right figures report the Wasserstein distance ($\mathcal{W}\mathcal{D}$) between generated samples and real data for each model. Results are shown under different synthetic mixing strengths, with $\alpha_i=\alpha$ shared across models.}
    \label{Appfig:HomoToy}
\end{figure}

\begin{figure}[htbp]
    \centering
    \begin{subfigure}[c]{0.1\linewidth}
        \centering
        \includegraphics[width=\linewidth]{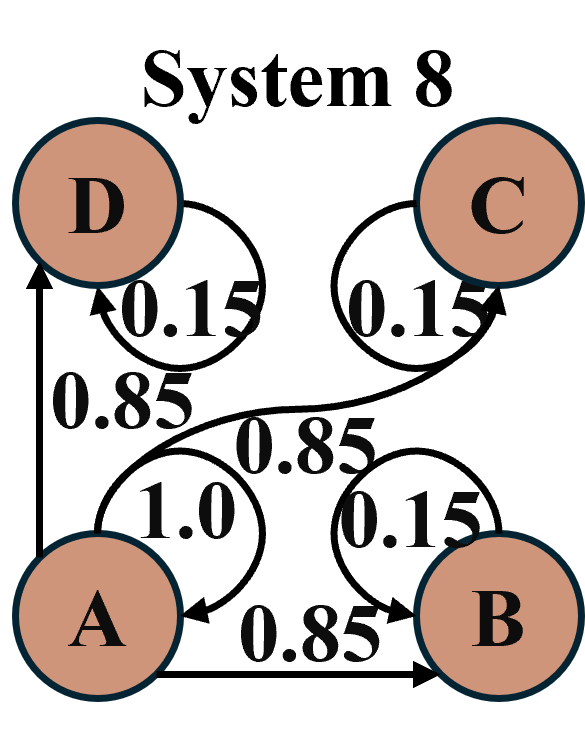}
    \end{subfigure}
    \hfill
    \begin{subfigure}[c]{0.87\linewidth}
        \centering
        \includegraphics[width=\linewidth]{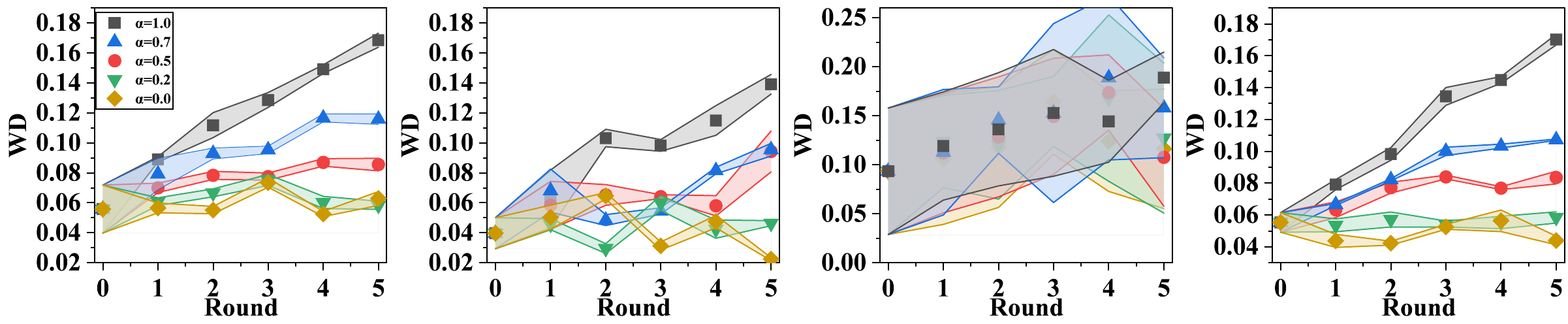}
    \end{subfigure}
    \caption{Stability experiments when all models trained on full dataset (continued).}
    \vspace{-10pt}
    \label{Appfig:HomoToy2}
\end{figure}

\subsection{Stability Experimental Results}
\label{subsec:synex_homo}
We first present the results of stability, where all models access to the same real data distribution $t \in \{0,\cdots,7\}$.
We fix the synthetic mixing ratio to be identical $\alpha_i = \alpha_j = \alpha$.
Fig.~\ref{Appfig:HomoToy} (and its continuation Fig.~\ref{Appfig:HomoToy2}) illustrates different interaction graph structures with varying synthetic data flows and reports the Wasserstein distance $\mathcal{W}\mathcal{D}$ across three  independent runs with different seeds between generated and real samples.

We analyze eight interaction graph structures across four models under the homogeneous data setting. Since all models have the same real-data target in this experiment, persistent growth in $\mathcal{W}\mathcal{D}$ indicates instability of the retraining dynamics rather than disagreement in the desired data distribution.
Although all models are trained on the same real data distribution and use an identical synthetic mixing strength $\alpha$, their retraining behaviors differ substantially due to both model-specific stability and interaction topology.

\textit{Model~C} adopts a GAN-based architecture and exhibits weaker baseline stability compared to the other models. This instability is reflected throughout the experiments. Across nearly all graph structures, Model C shows larger WD fluctuations across retraining rounds.

\textbf{System~1} serves as a baseline that approximates isolated self-consuming retraining, where models primarily train on their own generated samples.
In this setting, increasing $\alpha$ consistently leads to higher $\mathcal{W}\mathcal{D}$ for all models, with the effect being most pronounced for \textit{Model C}. This behavior aligns with the theoretical expectation that stronger synthetic mixing amplifies local instability even in the absence of cross-model interactions.

We next examine the effect of cycles by comparing \textbf{System~1}, \textbf{System~3}, and \textbf{System~5}, which differ in their maximum cycle lengths. Under the same synthetic mixing strength, longer cycles do not necessarily result in more severe degradation. In particular, \textbf{System~5} exhibits the highest $\mathcal{W}\mathcal{D}$ mean and variance across models, despite not having the longest cycle.
This observation is consistent with the theoretical result that stability is governed by the dominant cycle gain rather than cycle length alone.

The influence of interaction strength is illustrated by comparing \textbf{System~2} and \textbf{System 3}. \textbf{System~2} satisfies the theoretical stability condition based on the interaction norm, while \textbf{System~3} exceeds this threshold. Empirically, \textbf{System~3} displays faster $\mathcal{W}\mathcal{D}$ growth and increased variance across models under the same $\alpha$. These results support the theoretical prediction that stronger interactions reduce the stability margin of self-consuming retraining. At the same time, differences among systems with similar interaction norms (\textbf{System~1}, \textbf{System~2} and \textbf{System~4}) indicate that norm-based conditions are conservative and do not fully capture structural effects.

Finally, comparing \textbf{System~5} and \textbf{System~6} highlights the role of model participation within the interaction graph. Although the two systems share similar topological structures, \textbf{System~5} involves the unstable \textit{Model~C} more directly in high-gain cycles.
As a result, all models in \textbf{System~5} experience higher $\mathcal{W}\mathcal{D}$ and larger variance, whereas the corresponding effect is attenuated in \textbf{System~6}. This comparison suggests that instability introduced by a single weak model can propagate through the network when it participates in dominant interaction cycles, thereby affecting the entire system.

Overall, the homogeneous data experiments confirm the main theoretical predictions. Stability depends jointly on synthetic mixing strength, interaction structure, and the presence of unstable models. Moreover, graph topology shapes how local instabilities are amplified or contained, beyond what can be explained by interaction norms alone.

We further visualize the generated samples in Fig.~\ref{Appfig:HomoToy_samples_1.0} ($\alpha=1.0$) and Fig.~\ref{Appfig:HomoToy_samples_0.2} ($\alpha=0.2$).

\begin{figure}[htbp]
    \centering
        \begin{subfigure}[c]{0.1\linewidth}
        \centering \textbf{Initiation}
    \end{subfigure}
    \hfill
    \begin{subfigure}[c]{0.87\linewidth}
        \centering
        \includegraphics[width=\linewidth]{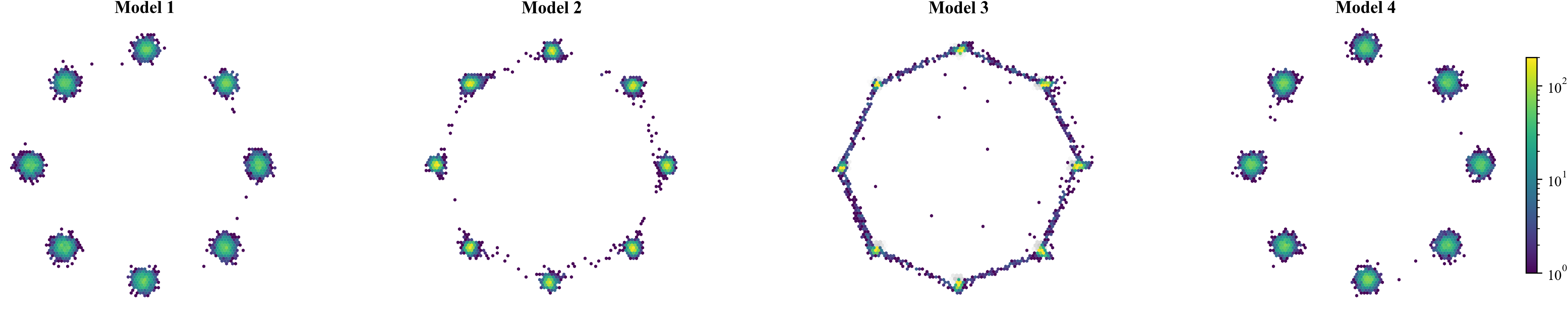}
    \end{subfigure}
    
    \begin{subfigure}[c]{0.1\linewidth}
        \centering
        \includegraphics[width=\linewidth]{Figure/W/W1.png}
    \end{subfigure}  
    \hfill
    \begin{subfigure}[c]{0.87\linewidth}
        \centering
        \includegraphics[width=\linewidth]{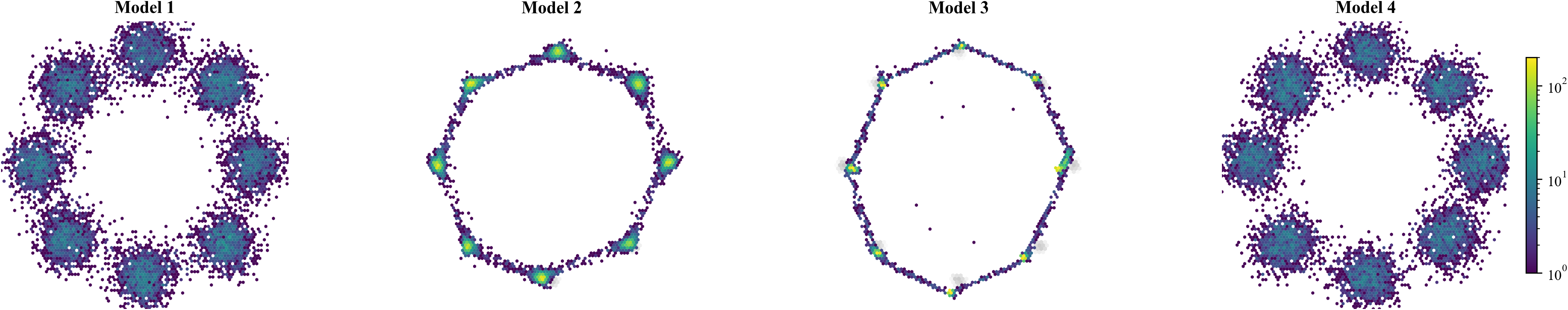}
    \end{subfigure}

    \begin{subfigure}[c]{0.1\linewidth}
        \centering
        \includegraphics[width=\linewidth]{Figure/W/W2.png}
    \end{subfigure}
    \hfill
    \begin{subfigure}[c]{0.87\linewidth}
        \centering
        \includegraphics[width=\linewidth]{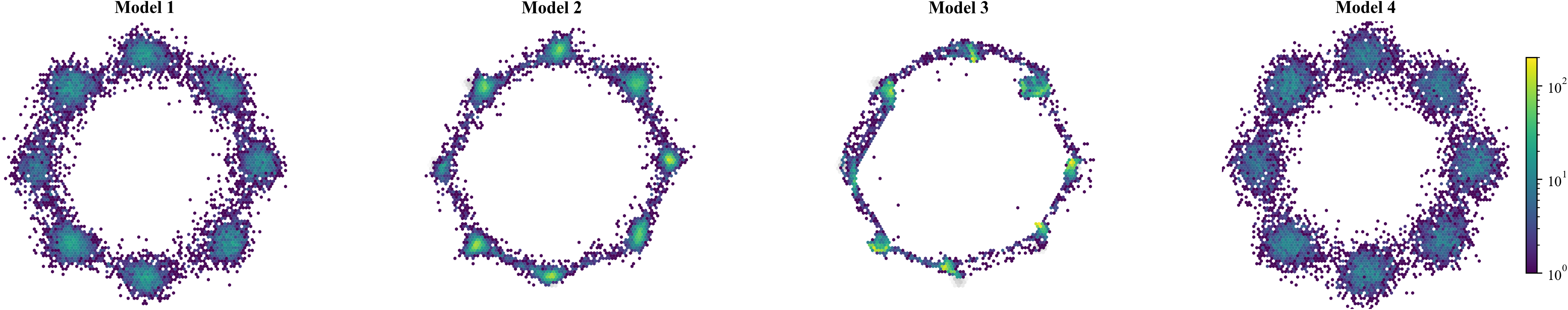}
    \end{subfigure}

    \begin{subfigure}[c]{0.1\linewidth}
        \centering
        \includegraphics[width=\linewidth]{Figure/W/W3.png}
    \end{subfigure}
    \hfill
    \begin{subfigure}[c]{0.87\linewidth}
        \centering
        \includegraphics[width=\linewidth]{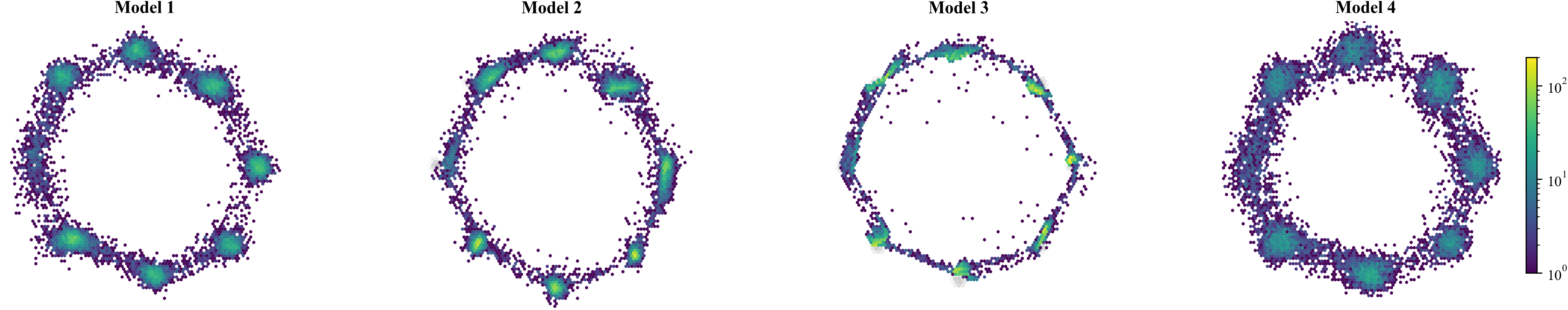}
    \end{subfigure}

    \begin{subfigure}[c]{0.1\linewidth}
        \centering
        \includegraphics[width=\linewidth]{Figure/W/W4.png}
    \end{subfigure}
    \hfill
    \begin{subfigure}[c]{0.87\linewidth}
        \centering
        \includegraphics[width=\linewidth]{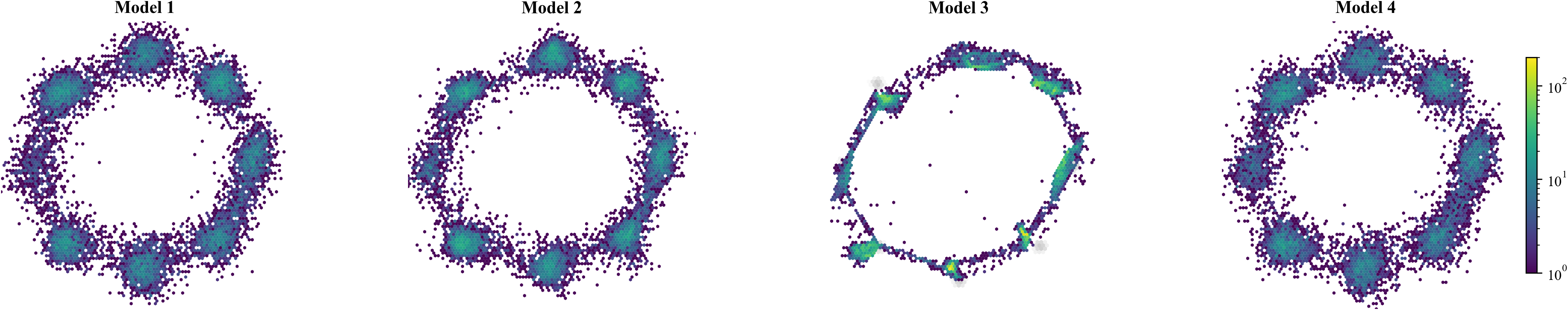}
    \end{subfigure}

    \begin{subfigure}[c]{0.1\linewidth}
        \centering
        \includegraphics[width=\linewidth]{Figure/W/W5.png}
    \end{subfigure}
    \hfill
    \begin{subfigure}[c]{0.87\linewidth}
        \centering
        \includegraphics[width=\linewidth]{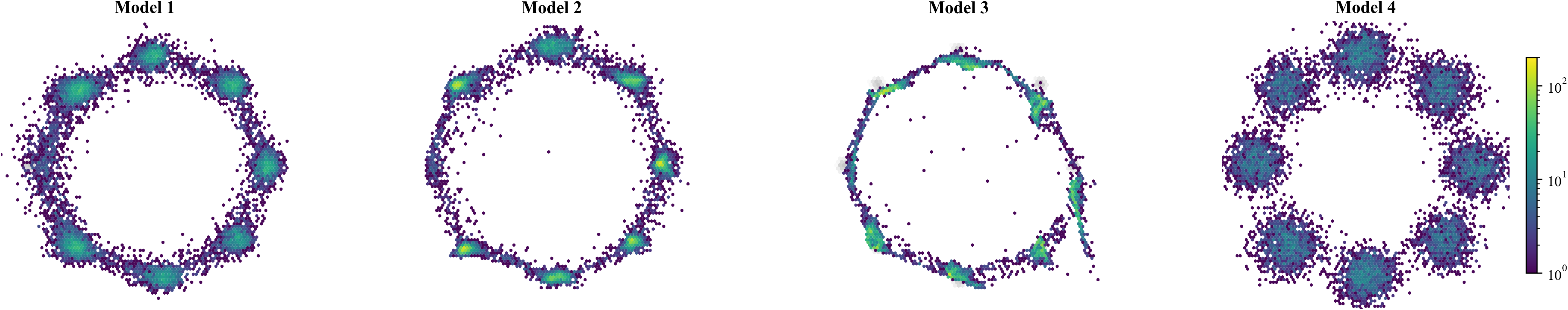}
    \end{subfigure}

    \begin{subfigure}[c]{0.1\linewidth}
        \centering
        \includegraphics[width=\linewidth]{Figure/W/W8.png}
    \end{subfigure}
    \hfill
    \begin{subfigure}[c]{0.87\linewidth}
        \centering
        \includegraphics[width=\linewidth]{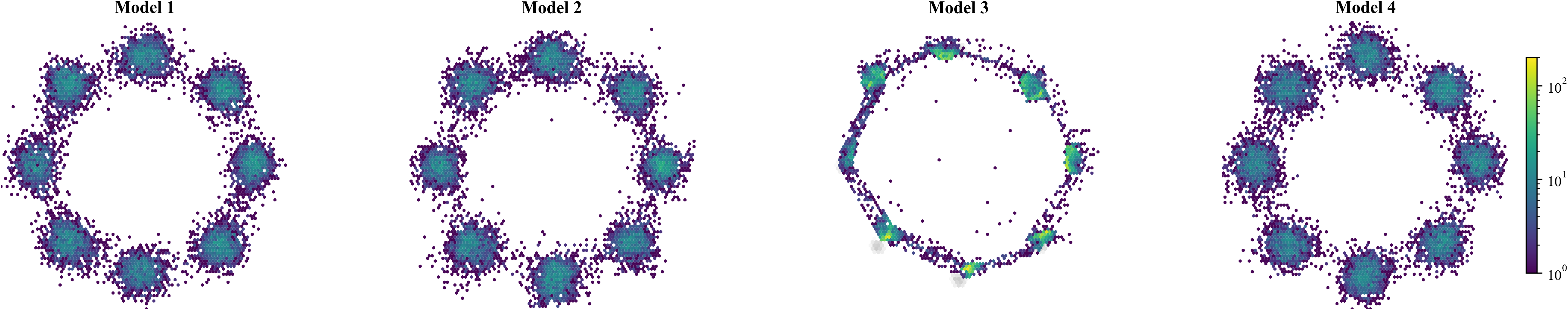}
    \end{subfigure}
    \caption{Generated samples when all models trained on full dataset with different interaction graphs. All models use $\alpha_i=\alpha=1.0$.}
    \label{Appfig:HomoToy_samples_1.0}
\end{figure}

\begin{figure}[htbp]
    \centering
        \begin{subfigure}[c]{0.1\linewidth}
        \centering \textbf{Initiation}
    \end{subfigure}
    \hfill
    \begin{subfigure}[c]{0.87\linewidth}
        \centering
        \includegraphics[width=\linewidth]{Figure/HomoToy/Sample/0.png}
    \end{subfigure}
    
    \begin{subfigure}[c]{0.1\linewidth}
        \centering
        \includegraphics[width=\linewidth]{Figure/W/W1.png}
    \end{subfigure}  
    \hfill
    \begin{subfigure}[c]{0.87\linewidth}
        \centering
        \includegraphics[width=\linewidth]{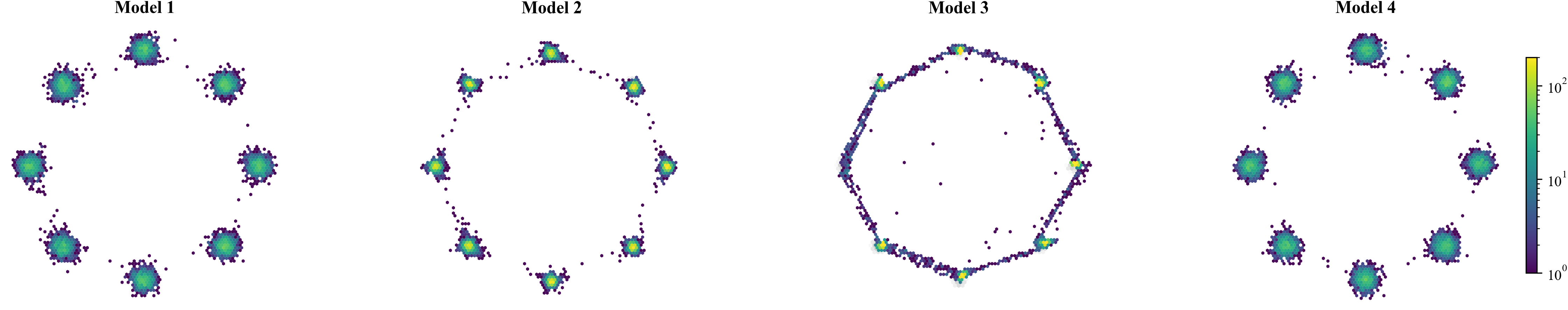}
    \end{subfigure}

    \begin{subfigure}[c]{0.1\linewidth}
        \centering
        \includegraphics[width=\linewidth]{Figure/W/W2.png}
    \end{subfigure}
    \hfill
    \begin{subfigure}[c]{0.87\linewidth}
        \centering
        \includegraphics[width=\linewidth]{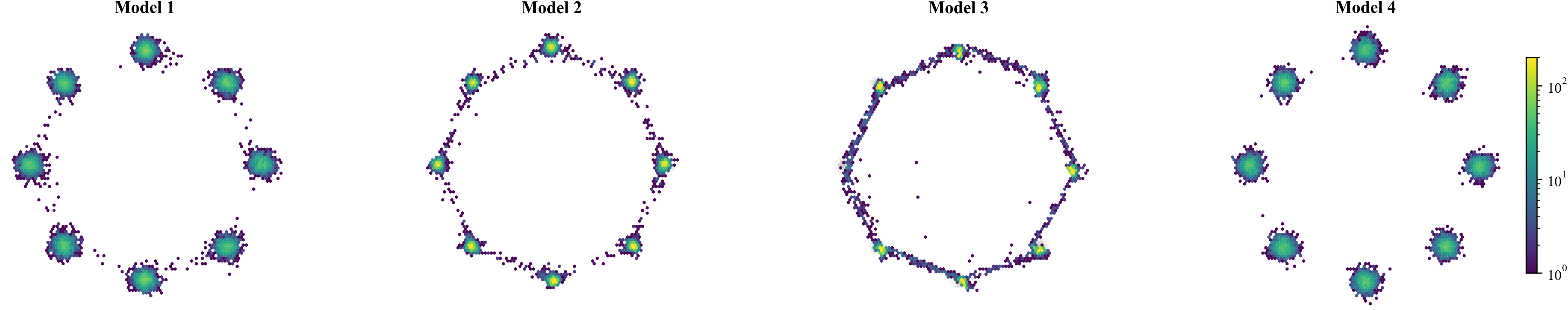}
    \end{subfigure}

    \begin{subfigure}[c]{0.1\linewidth}
        \centering
        \includegraphics[width=\linewidth]{Figure/W/W3.png}
    \end{subfigure}
    \hfill
    \begin{subfigure}[c]{0.87\linewidth}
        \centering
        \includegraphics[width=\linewidth]{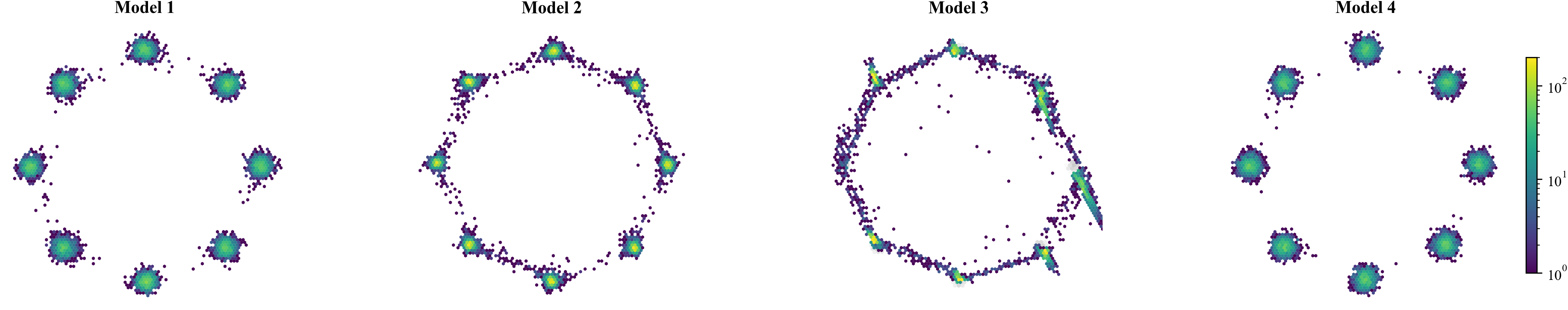}
    \end{subfigure}

    \begin{subfigure}[c]{0.1\linewidth}
        \centering
        \includegraphics[width=\linewidth]{Figure/W/W4.png}
    \end{subfigure}
    \hfill
    \begin{subfigure}[c]{0.87\linewidth}
        \centering
        \includegraphics[width=\linewidth]{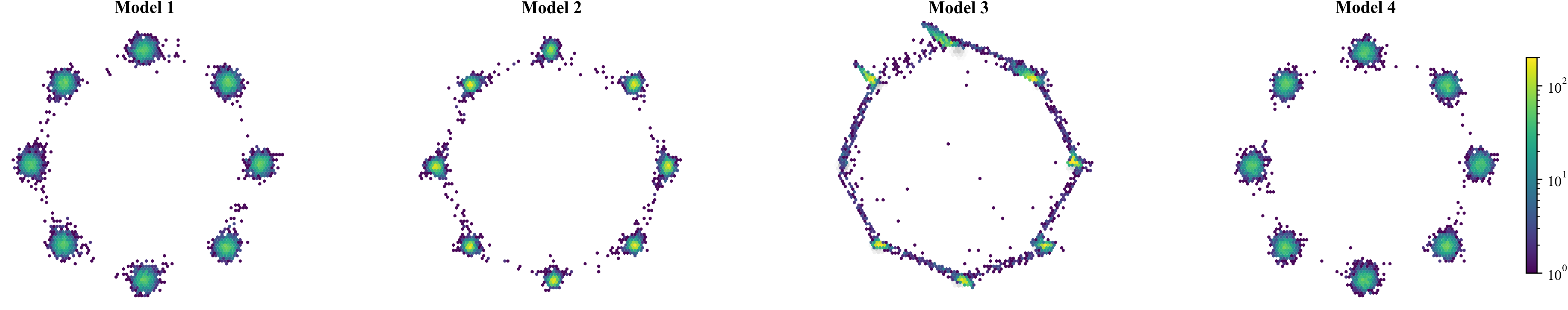}
    \end{subfigure}

    \begin{subfigure}[c]{0.1\linewidth}
        \centering
        \includegraphics[width=\linewidth]{Figure/W/W5.png}
    \end{subfigure}
    \hfill
    \begin{subfigure}[c]{0.87\linewidth}
        \centering
        \includegraphics[width=\linewidth]{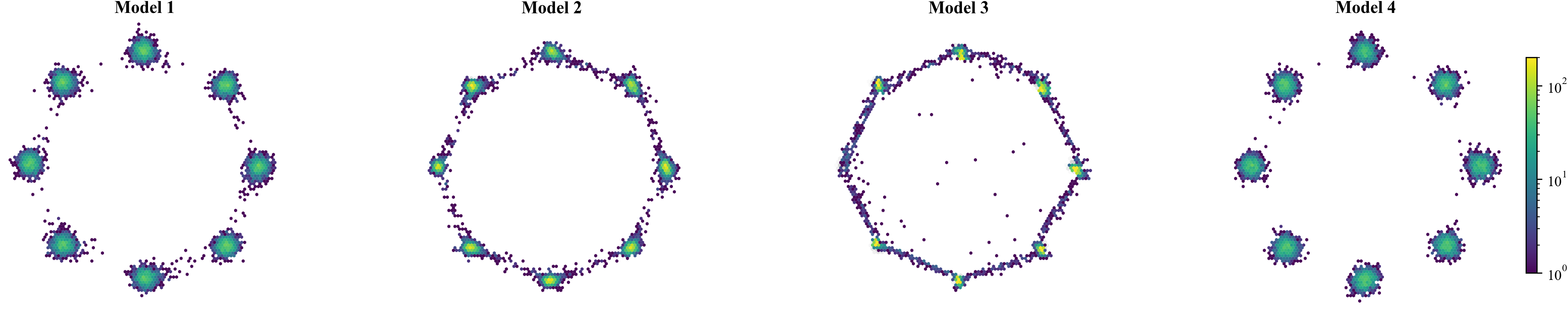}
    \end{subfigure}

    \begin{subfigure}[c]{0.1\linewidth}
        \centering
        \includegraphics[width=\linewidth]{Figure/W/W8.png}
    \end{subfigure}
    \hfill
    \begin{subfigure}[c]{0.87\linewidth}
        \centering
        \includegraphics[width=\linewidth]{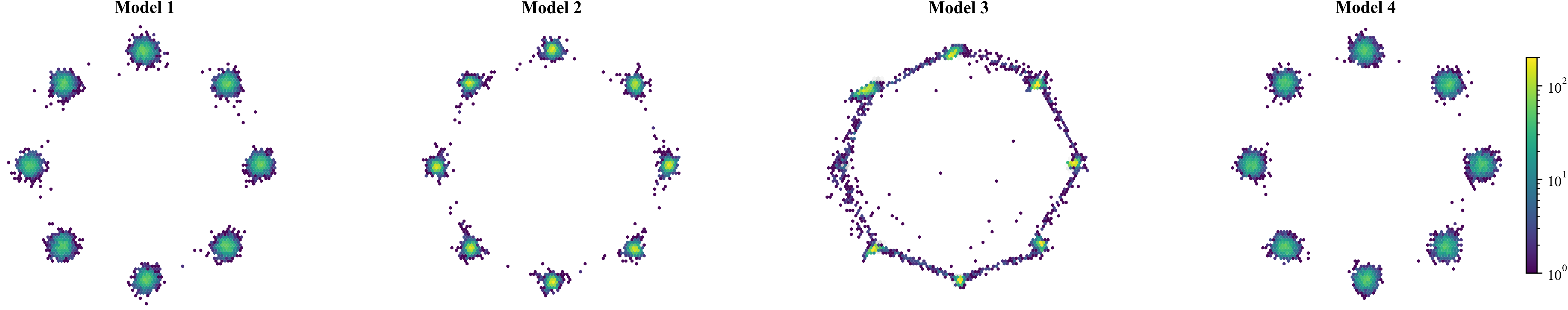}
    \end{subfigure}
    \caption{Generated samples when all models trained on full dataset. All models use $\alpha_i=\alpha=0.2$.}
    \label{Appfig:HomoToy_samples_0.2}
\end{figure}

\subsection{Diversity Experimental Results}
\label{subsec:synex_hete}
We next study the diversity, where models are trained on different subsets of the underlying modes as shown in~\Cref{subtab:model_summary}.  In this setting, we focus on a system-level diversity metric $\mathcal{D}$, which captures the degree of differentiation among models without requiring a shared reference distribution as shown in Fig.~\ref{Appfig:HeteToy}. We further visualize the generated samples to qualitatively illustrate the stability and diversity of the system as shown in Fig.~\ref{Appfig:HeteToy_samples_1.0} and Fig.~\ref{Appfig:HeteToy_samples_0.2}.

\begin{figure}[htbp]
    \centering
     \begin{subfigure}[c]{0.23\linewidth}
        \centering
        \includegraphics[width=0.5\linewidth]{Figure/W/W1.png}
    \end{subfigure}
    \hfill
     \begin{subfigure}[c]{0.23\linewidth}
        \centering
        \includegraphics[width=0.5\linewidth]{Figure/W/W2.png}
    \end{subfigure}
    \hfill
     \begin{subfigure}[c]{0.23\linewidth}
        \centering
        \includegraphics[width=0.5\linewidth]{Figure/W/W3.png}
    \end{subfigure}
    \hfill
     \begin{subfigure}[c]{0.23\linewidth}
        \centering
        \includegraphics[width=0.5\linewidth]{Figure/W/W4.png}
    \end{subfigure}
    
    \begin{subfigure}[c]{0.23\linewidth}
        \centering
        \includegraphics[width=\linewidth]{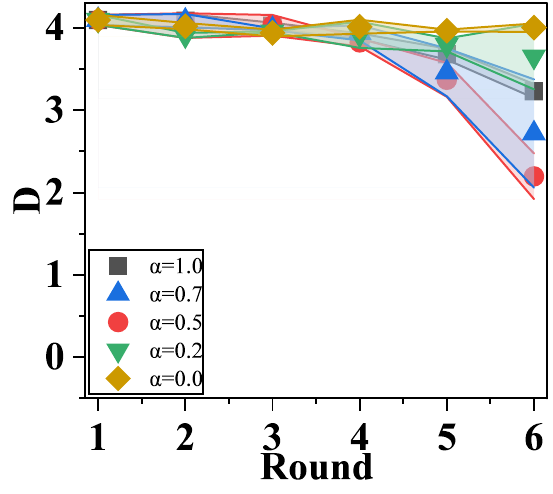}
    \end{subfigure}
    \hfill
    \begin{subfigure}[c]{0.23\linewidth}
        \centering
        \includegraphics[width=\linewidth]{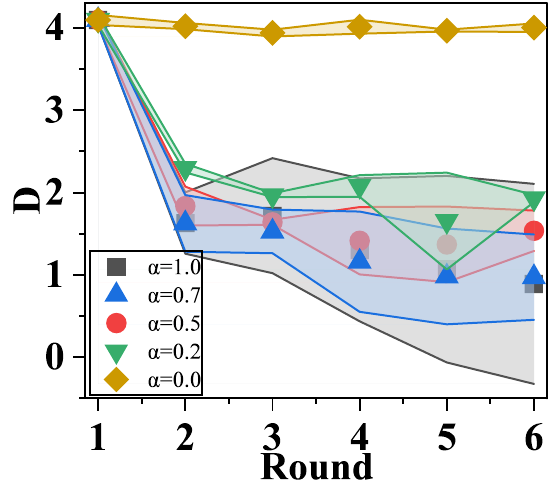}
    \end{subfigure}
    \hfill
    \begin{subfigure}[c]{0.23\linewidth}
        \centering
        \includegraphics[width=\linewidth]{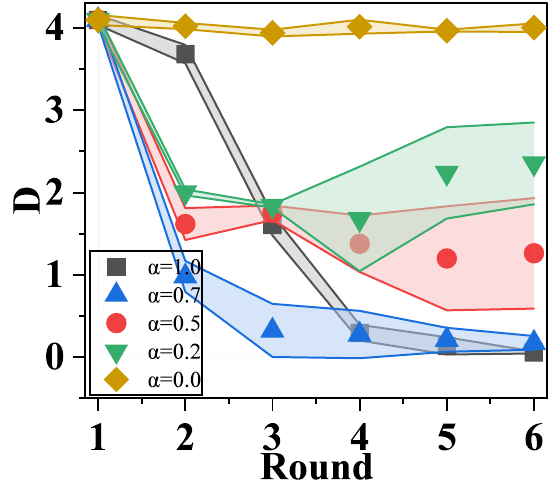}
    \end{subfigure}
    \hfill
    \begin{subfigure}[c]{0.23\linewidth}
        \centering
        \includegraphics[width=\linewidth]{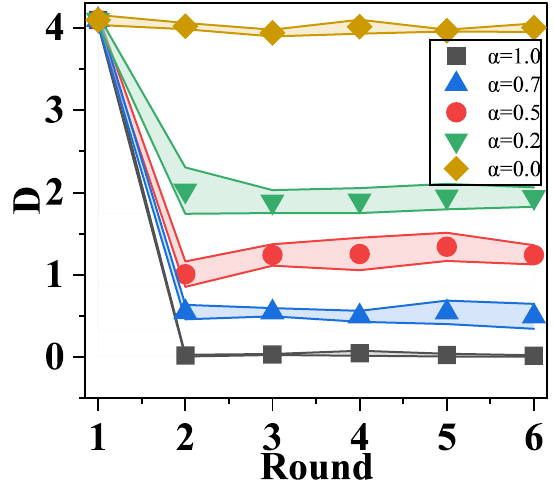}
    \end{subfigure}

    \begin{subfigure}[c]{0.23\linewidth}
        \centering
        \includegraphics[width=0.5\linewidth]{Figure/W/W5.png}
    \end{subfigure}
    \hfill
     \begin{subfigure}[c]{0.23\linewidth}
        \centering
        \includegraphics[width=0.5\linewidth]{Figure/W/W8.png}
    \end{subfigure}
    \hfill
     \begin{subfigure}[c]{0.23\linewidth}
        \centering
        \includegraphics[width=0.5\linewidth]{Figure/W/W6.png}
    \end{subfigure}
    \hfill
     \begin{subfigure}[c]{0.23\linewidth}
        \centering
        \includegraphics[width=0.5\linewidth]{Figure/W/W7.png}
    \end{subfigure}
    
    \begin{subfigure}[c]{0.23\linewidth}
        \centering
        \includegraphics[width=\linewidth]{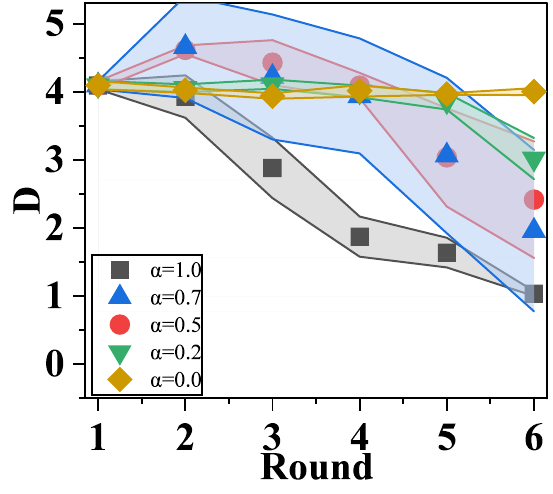}
    \end{subfigure}
    \hfill
    \begin{subfigure}[c]{0.23\linewidth}
        \centering
        \includegraphics[width=\linewidth]{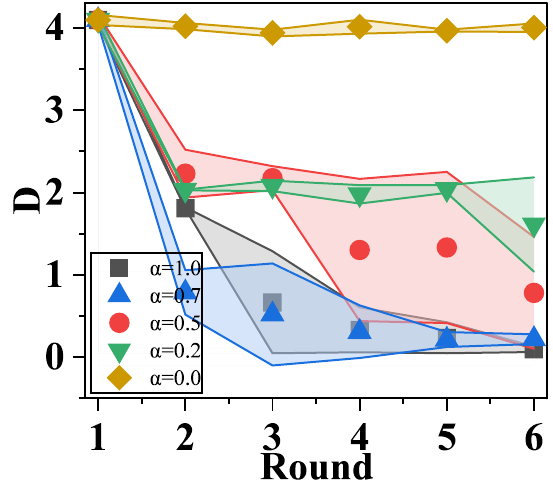}
    \end{subfigure}
    \hfill
    \begin{subfigure}[c]{0.23\linewidth}
        \centering
        \includegraphics[width=\linewidth]{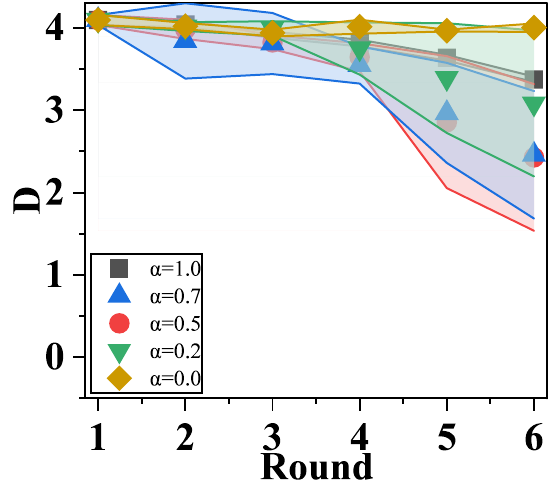}
    \end{subfigure}
    \hfill
    \begin{subfigure}[c]{0.23\linewidth}
        \centering
        \includegraphics[width=\linewidth]{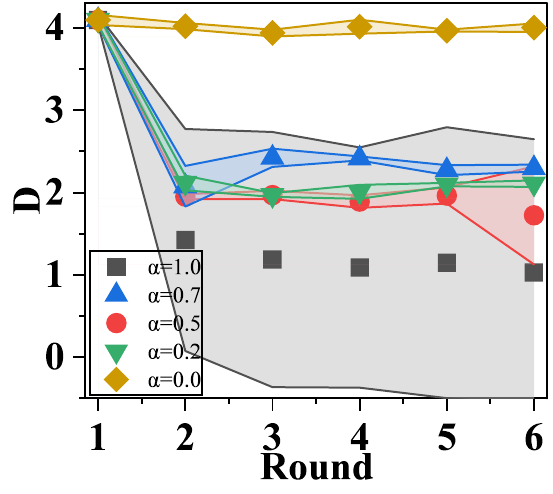}
    \end{subfigure}
    \caption{Diversity across models under heterogeneous data with different interaction graphs. Each figure reports the system-level diversity metric $\mathcal{D}$ across retraining rounds under different synthetic mixing strengths $\alpha$. }
    \label{Appfig:HeteToy}
    \vspace{-10pt}
\end{figure}

\begin{figure}[htbp]
    \centering
        \begin{subfigure}[c]{0.1\linewidth}
        \centering \textbf{Initiation}
    \end{subfigure}
    \hfill
    \begin{subfigure}[c]{0.87\linewidth}
        \centering
        \includegraphics[width=\linewidth]{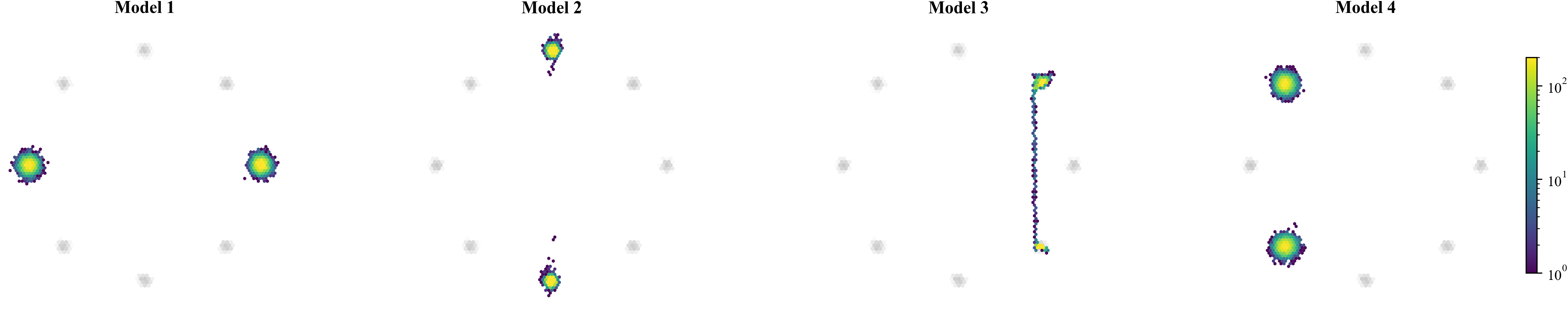}
    \end{subfigure}
    
    \begin{subfigure}[c]{0.1\linewidth}
        \centering
        \includegraphics[width=\linewidth]{Figure/W/W1.png}
    \end{subfigure}  
    \hfill
    \begin{subfigure}[c]{0.87\linewidth}
        \centering
        \includegraphics[width=\linewidth]{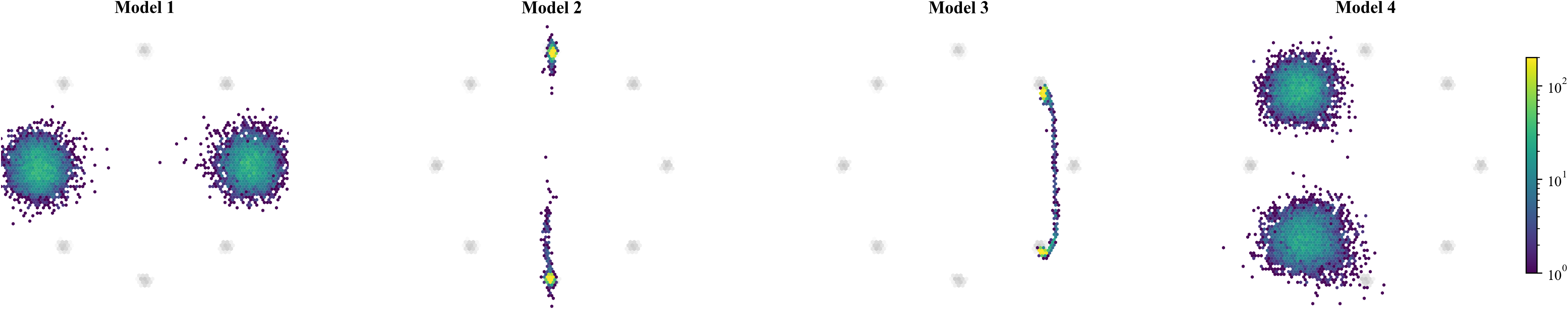}
    \end{subfigure}

    \begin{subfigure}[c]{0.1\linewidth}
        \centering
        \includegraphics[width=\linewidth]{Figure/W/W2.png}
    \end{subfigure}
    \hfill
    \begin{subfigure}[c]{0.87\linewidth}
        \centering
        \includegraphics[width=\linewidth]{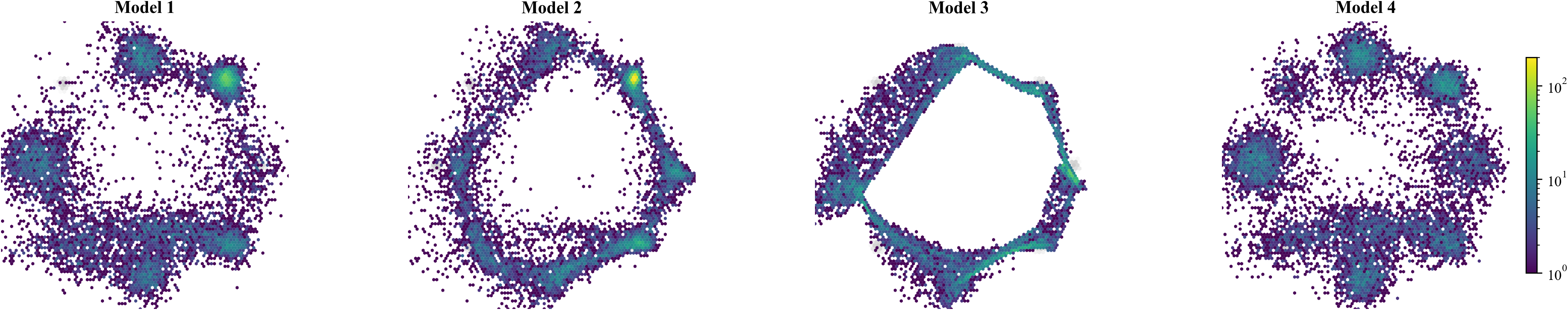}
    \end{subfigure}

    \begin{subfigure}[c]{0.1\linewidth}
        \centering
        \includegraphics[width=\linewidth]{Figure/W/W3.png}
    \end{subfigure}
    \hfill
    \begin{subfigure}[c]{0.87\linewidth}
        \centering
        \includegraphics[width=\linewidth]{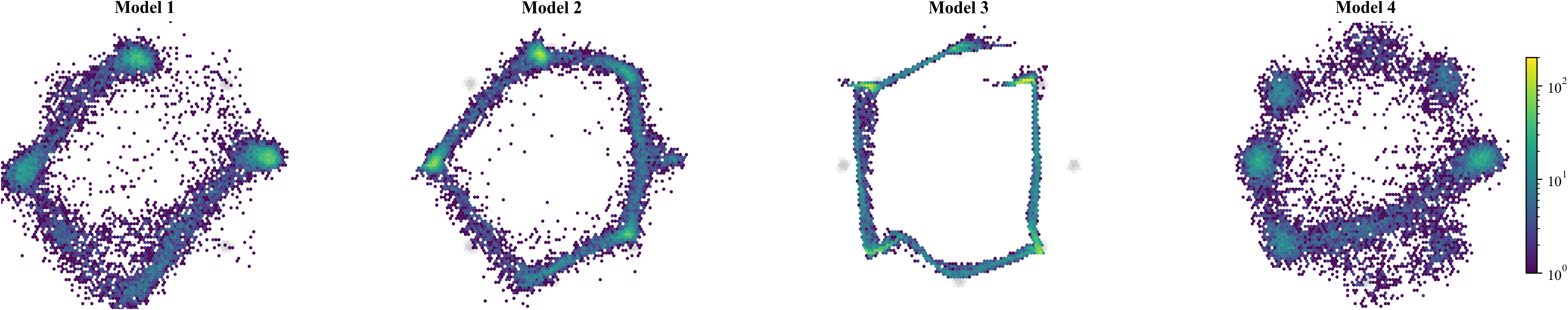}
    \end{subfigure}

    \begin{subfigure}[c]{0.1\linewidth}
        \centering
        \includegraphics[width=\linewidth]{Figure/W/W4.png}
    \end{subfigure}
    \hfill
    \begin{subfigure}[c]{0.87\linewidth}
        \centering
        \includegraphics[width=\linewidth]{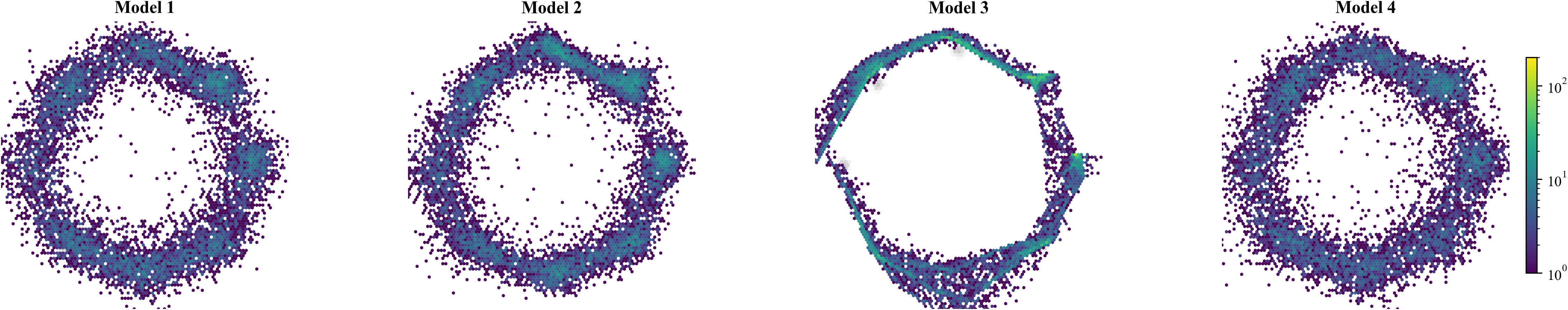}
    \end{subfigure}

    \begin{subfigure}[c]{0.1\linewidth}
        \centering
        \includegraphics[width=\linewidth]{Figure/W/W5.png}
    \end{subfigure}
    \hfill
    \begin{subfigure}[c]{0.87\linewidth}
        \centering
        \includegraphics[width=\linewidth]{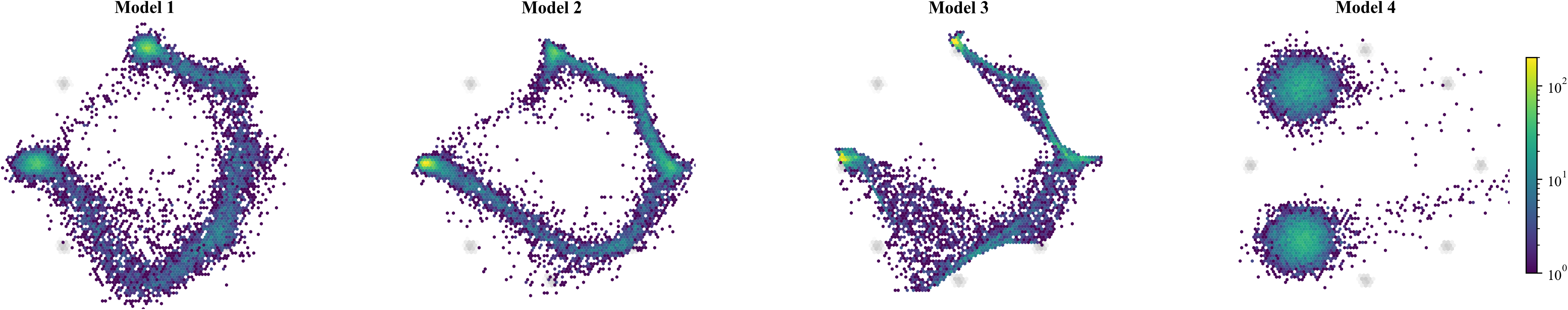}
    \end{subfigure}

    \begin{subfigure}[c]{0.1\linewidth}
        \centering
        \includegraphics[width=\linewidth]{Figure/W/W8.png}
    \end{subfigure}
    \hfill
    \begin{subfigure}[c]{0.87\linewidth}
        \centering
        \includegraphics[width=\linewidth]{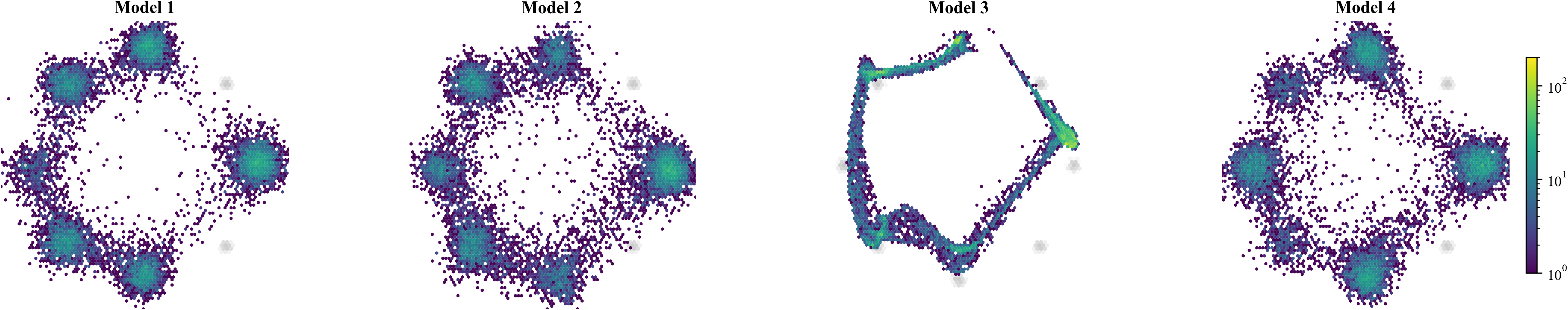}
    \end{subfigure}
    \caption{Generated samples under the heterogeneous data with different interaction graphs.
All models use $\alpha_i=\alpha=1.0$.}
    \label{Appfig:HeteToy_samples_1.0}
\end{figure}

\begin{figure}[htbp]
    \centering
        \begin{subfigure}[c]{0.1\linewidth}
        \centering \textbf{Initiation}
    \end{subfigure}
    \hfill
    \begin{subfigure}[c]{0.87\linewidth}
        \centering
        \includegraphics[width=\linewidth]{Figure/HeteToy/Sample/0.png}
    \end{subfigure}
    
    \begin{subfigure}[c]{0.1\linewidth}
        \centering
        \includegraphics[width=\linewidth]{Figure/W/W1.png}
    \end{subfigure}  
    \hfill
    \begin{subfigure}[c]{0.87\linewidth}
        \centering
        \includegraphics[width=\linewidth]{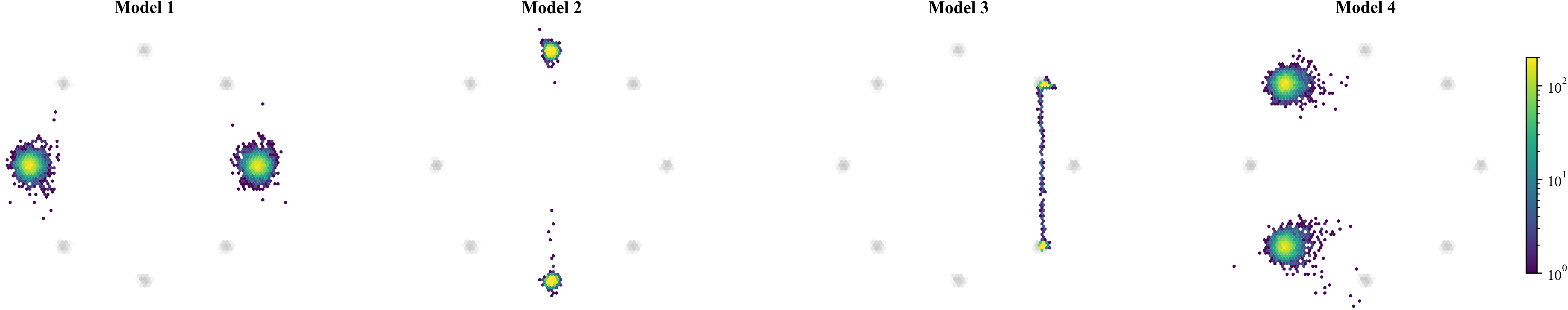}
    \end{subfigure}

    \begin{subfigure}[c]{0.1\linewidth}
        \centering
        \includegraphics[width=\linewidth]{Figure/W/W2.png}
    \end{subfigure}
    \hfill
    \begin{subfigure}[c]{0.87\linewidth}
        \centering
        \includegraphics[width=\linewidth]{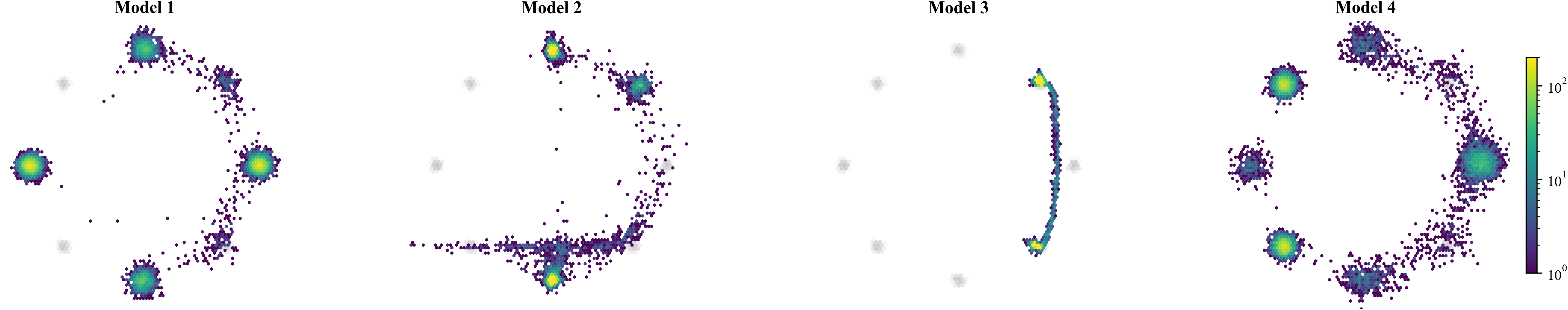}
    \end{subfigure}

    \begin{subfigure}[c]{0.1\linewidth}
        \centering
        \includegraphics[width=\linewidth]{Figure/W/W3.png}
    \end{subfigure}
    \hfill
    \begin{subfigure}[c]{0.87\linewidth}
        \centering
        \includegraphics[width=\linewidth]{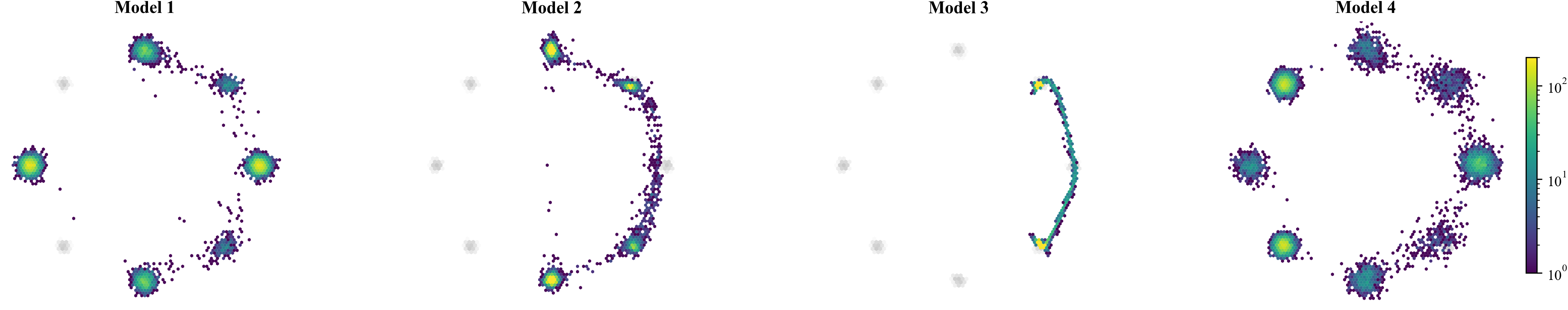}
    \end{subfigure}

    \begin{subfigure}[c]{0.1\linewidth}
        \centering
        \includegraphics[width=\linewidth]{Figure/W/W4.png}
    \end{subfigure}
    \hfill
    \begin{subfigure}[c]{0.87\linewidth}
        \centering
        \includegraphics[width=\linewidth]{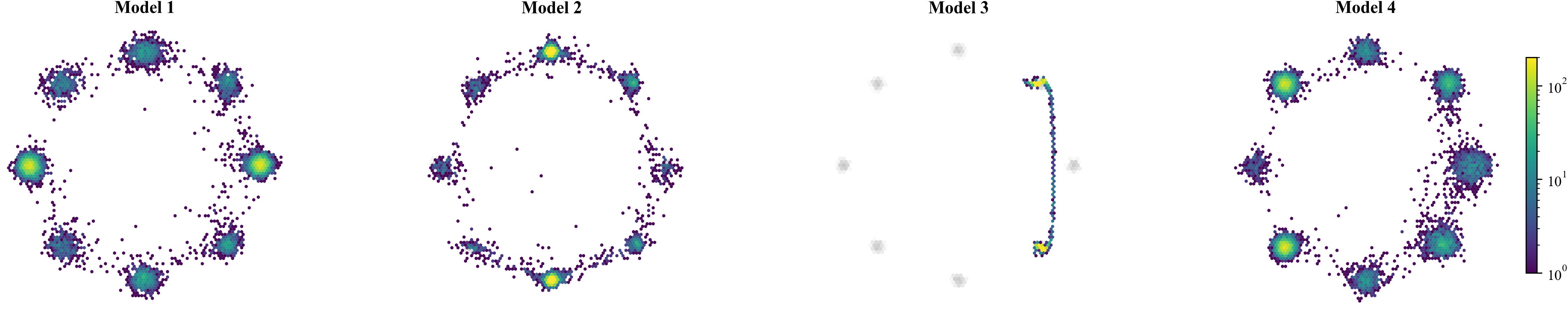}
    \end{subfigure}

    \begin{subfigure}[c]{0.1\linewidth}
        \centering
        \includegraphics[width=\linewidth]{Figure/W/W5.png}
    \end{subfigure}
    \hfill
    \begin{subfigure}[c]{0.87\linewidth}
        \centering
        \includegraphics[width=\linewidth]{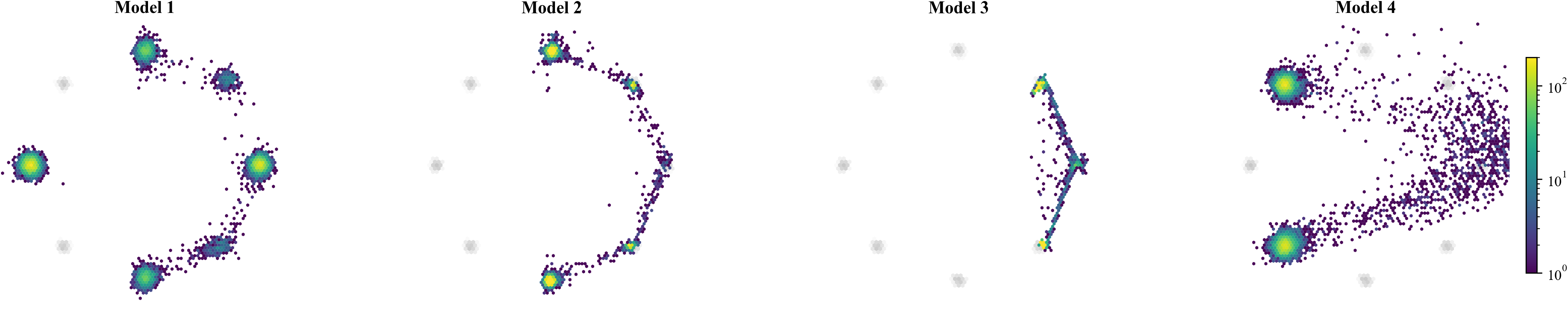}
    \end{subfigure}

    \begin{subfigure}[c]{0.1\linewidth}
        \centering
        \includegraphics[width=\linewidth]{Figure/W/W8.png}
    \end{subfigure}
    \hfill
    \begin{subfigure}[c]{0.87\linewidth}
        \centering
        \includegraphics[width=\linewidth]{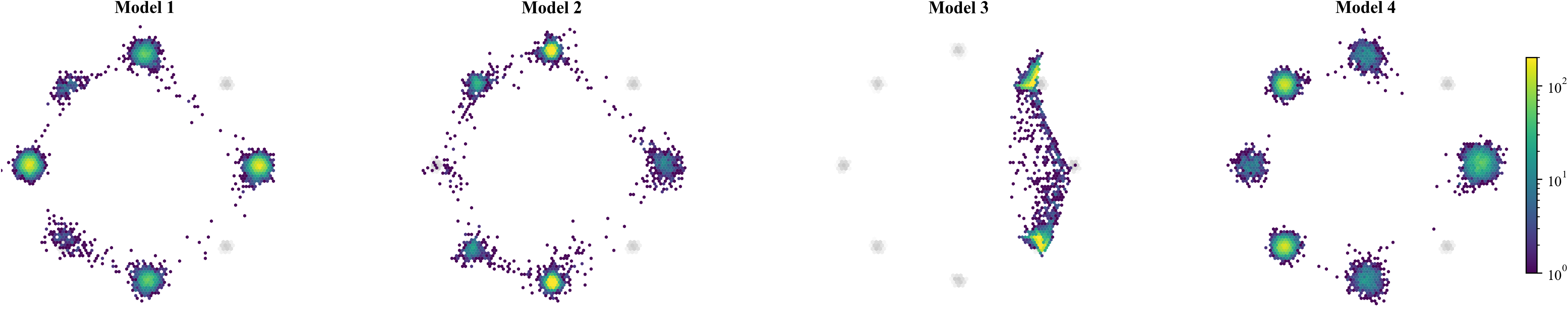}
    \end{subfigure}
    \caption{Generated samples under the heterogeneous data with different interaction graphs.
All models use $\alpha_i=\alpha=0.2$.}
    \label{Appfig:HeteToy_samples_0.2}
\end{figure}

Across all interaction graphs, increasing the synthetic mixing strength $\alpha$ generally leads to a reduction in diversity. When synthetic data dominates training, models increasingly align with one another despite heterogeneous real data access, resulting in homogenization. 

\textbf{System~1} provides a baseline corresponding to nearly isolated self-consuming retraining. In this case, large $\alpha$ can lead to elevated diversity. However, this increase does not reflect meaningful differentiation. Instead, it arises from independent model collapse, where unstable retraining causes different models to drift toward distinct degenerate distributions. This interpretation is supported by the large variance observed across runs and is consistent with the theory, which characterizes stability rather than the semantic quality of diversity.

Interaction topology also plays a decisive role in shaping heterogeneous results. This trend is strongest in densely connected systems such as \textbf{System~4}, where diversity collapses rapidly even for moderate $\alpha$. This behavior is consistent with the theoretical prediction. On the other side, systems with comparable interaction magnitudes can exhibit qualitatively different behaviors. For example, \textbf{System~2} and \textbf{System~4} share similar interaction norms, yet \textbf{System~4} experiences rapid diversity loss. This contrast reflects structural effects that go beyond conditions, which are sufficient but conservative.

The influence of unstable models is most clearly illustrated in \textbf{System~5} and \textbf{System~6}. \textit{Model~C}, which adopts a GAN-based architecture and exhibits weaker baseline stability, participates directly in cycles in \textbf{System~5}. As a result, synthetic errors generated by \textit{Model~C} are repeatedly amplified and propagated, leading to elevated diversity accompanied by large variance. This behavior reflects system-level instability rather than stable heterogeneity. In contrast, systems where the same model is weakly coupled or excluded from dominant cycles show significantly attenuated effects. These observations align with the theoretical prediction that instability introduced by a single node can dominate global behavior when embedded in dominant interaction cycles.
\newpage
\section{Additional Results on CIFAR-10 Datasets}
We conduct all the experiments on a server which has two Intel Xeon 6326 CPU and 6 Nvidia A6000 GPU. 

While FID summarizes overall distributional similarity, it does not distinguish whether degradation comes primarily from reduced sample fidelity or from loss of coverage. 
We therefore report precision and recall, together with qualitative samples, to better understand how networked self-consuming retraining affects both image quality and distributional support.

\label{sec:addrealex}
\begin{table}[thbp!]
\vspace{-5pt}
\begin{center}
\small
\caption{Model architectures and data access.}
\label{subtab:model_summary_real}
\begin{tabular}{c c c c c c }
\toprule
$\#$ &
Model &
Variants &  Stability  &
Diversity & 
Retrain Steps\\
\midrule
A & DDPM & VLB & class $\{0,\cdots,9\}$ & class $\{0,\cdots,9\}$ & $100$\\
B & CFM  & OT-CFM & class $\{0,\cdots,9\}$       & class $\{3,4\}$ & $600$ \\
C & CFM  & iCFM  & class $\{0,\cdots,9\}$     & class $\{5,6\}$ & $600$ \\
D & DDPM & hybrid & class $\{0,\cdots,9\}$ & class $\{7,8,9\}$ & $100$ \\
\bottomrule
\end{tabular}
\end{center}
\end{table}

\textbf{Details in models.}
For real-data experiments, we use four generative models from two paradigms, with two variants per paradigm. Using multiple variants within the same broad paradigm allows us to separate paradigm-level effects from objective-level effects.  For example, two diffusion models may share the same denoising backbone but respond differently to synthetic-data accumulation because their training objectives impose different likelihood and regression biases. 

For \textbf{DDPM variants},
we include two diffusion models based on the improved diffusion framework with a UNet backbone.
The first variant is a \emph{VLB} diffusion model, which is trained using a variational lower bound objective with a learned noise schedule and likelihood-based parameterization.
The second variant is a \emph{Hybrid} diffusion model, which combines the likelihood-style objective with a simplified regression loss used in practical diffusion training.
While both variants share the same diffusion forward process and network backbone, they differ in their training objectives and, consequently, in their optimization behavior and sample quality trade-offs. 

For \textbf{CFM variants},
we include two flow-matching models implemented with neural ODE parameterizations.
The first variant follows \emph{OT-CFM}, where the training pairs are constructed using an optimal-transport-inspired coupling, yielding flow trajectories that encourage coherent mass transport between source and target distributions.
The second variant follows \emph{iCFM}, which uses an implicit coupling and a simplified construction of flow-matching targets, leading to a different inductive bias on the learned transport dynamics.
Both variants optimize flow-matching objectives but differ in how the intermediate training targets are defined and how probability mass is coupled across time.

We initialize all models from publicly released pretrained checkpoints. To ensure a fair comparison, we follow the original training configurations for each model and modify only the number of training steps per round. Dataset specifications and step counts are provided in~\Cref{subtab:model_summary_real}.

\paragraph{Additional Results for Stability.}  In addition to FID, we report precision as shown in Fig.~\ref{AppFig:Homo_pre} and recall curves as shown in Fig.~\ref{AppFig:Homo_recall} to provide a more fine-grained characterization of generation quality. These two metrics are particularly useful for diagnosing different failure modes.  A drop in precision suggests that generated images become less realistic or less aligned with the data manifold, whereas a drop in recall indicates that the model covers fewer modes or classes of the target distribution. 

\begin{figure}[thbp!]
    \centering
    \begin{subfigure}[c]{0.08\linewidth}
        \centering
    \includegraphics[width=\linewidth]{Figure/W/W1.png}
    \end{subfigure}
    \hfill
    \begin{subfigure}[c]{0.89\linewidth}
        \centering
        \includegraphics[width=\linewidth]{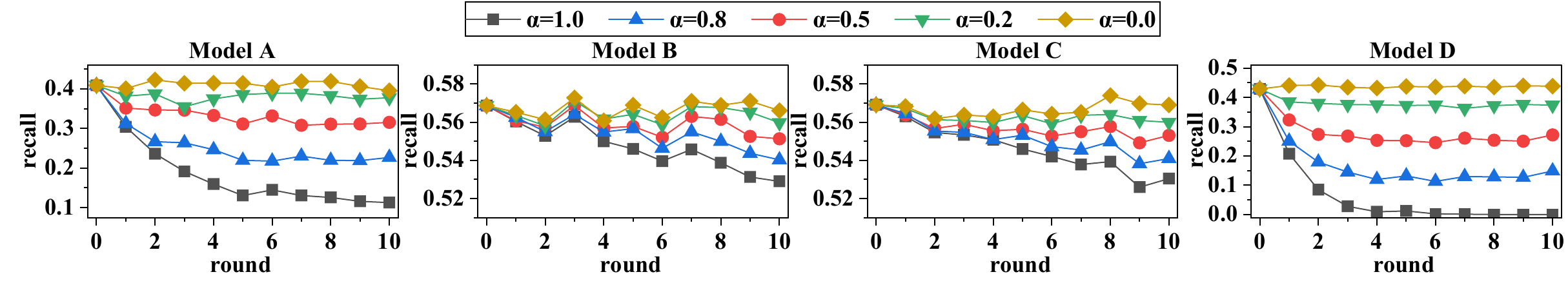}
    \end{subfigure}

    \begin{subfigure}[c]{0.08\linewidth}
        \centering
    \includegraphics[width=\linewidth]{Figure/W/W2.png}
    \end{subfigure}
    \hfill
    \begin{subfigure}[c]{0.89\linewidth}
        \centering
    \includegraphics[width=\linewidth]{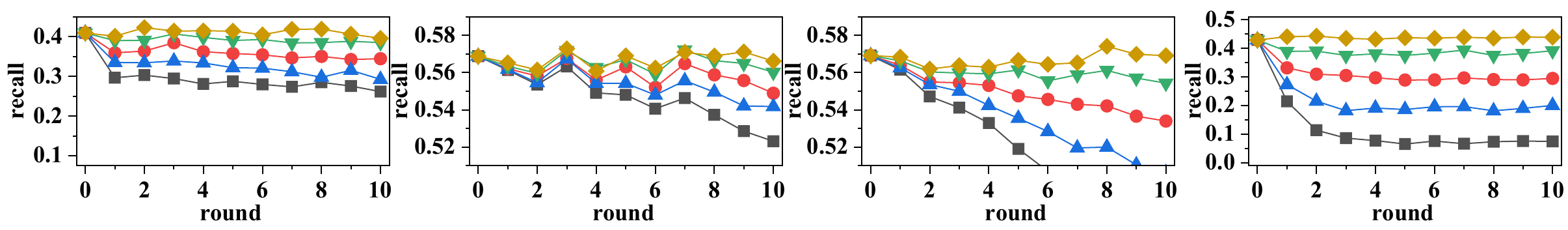}
    \end{subfigure}

    \begin{subfigure}[c]{0.08\linewidth}
        \centering
    \includegraphics[width=\linewidth]{Figure/W/W3.png}
    \end{subfigure}
    \hfill
    \begin{subfigure}[c]{0.89\linewidth}
        \centering
    \includegraphics[width=\linewidth]{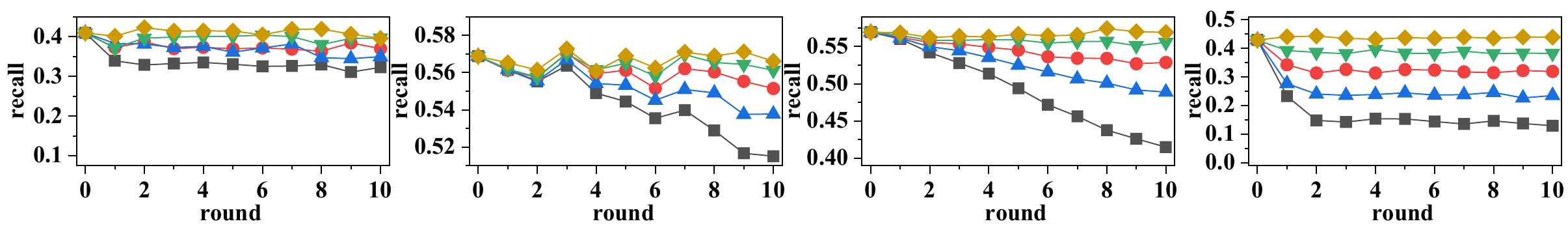}
    \end{subfigure}

    \begin{subfigure}[c]{0.08\linewidth}
        \centering
        \includegraphics[width=\linewidth]{Figure/W/HW4.png}
    \end{subfigure}
    \hfill
    \begin{subfigure}[c]{0.89\linewidth}
        \centering
        \includegraphics[width=\linewidth]{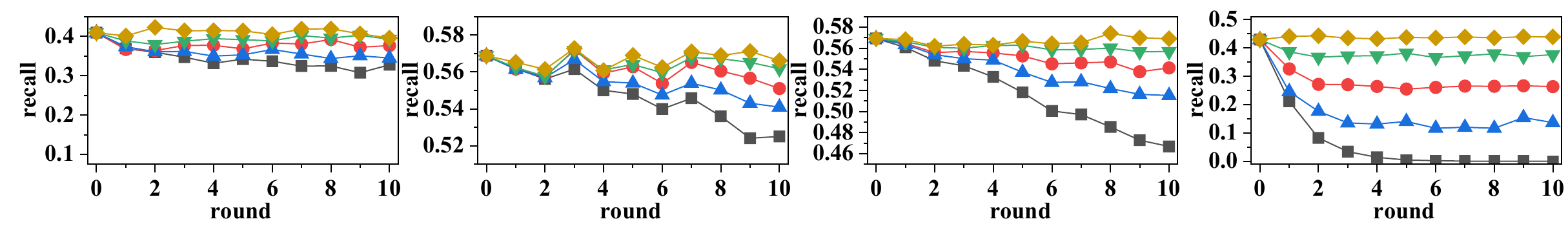}
    \end{subfigure}

    \caption{Stability experiments when all models trained on full dataset. Each row corresponds to a different interaction graph. For each row, the left figure illustrates the direction and strength of synthetic data flow, while the right figures report the recall between generated samples and real data for each model. Results are shown under different synthetic mixing strengths, with $\alpha_i=\alpha$ shared across models.}
    \label{AppFig:Homo_recall}
\end{figure}

\begin{figure}[htbp]
    \centering
    \begin{subfigure}[c]{0.08\linewidth}
        \centering
    \includegraphics[width=\linewidth]{Figure/W/W1.png}
    \end{subfigure}
    \hfill
    \begin{subfigure}[c]{0.89\linewidth}
        \centering
        \includegraphics[width=\linewidth]{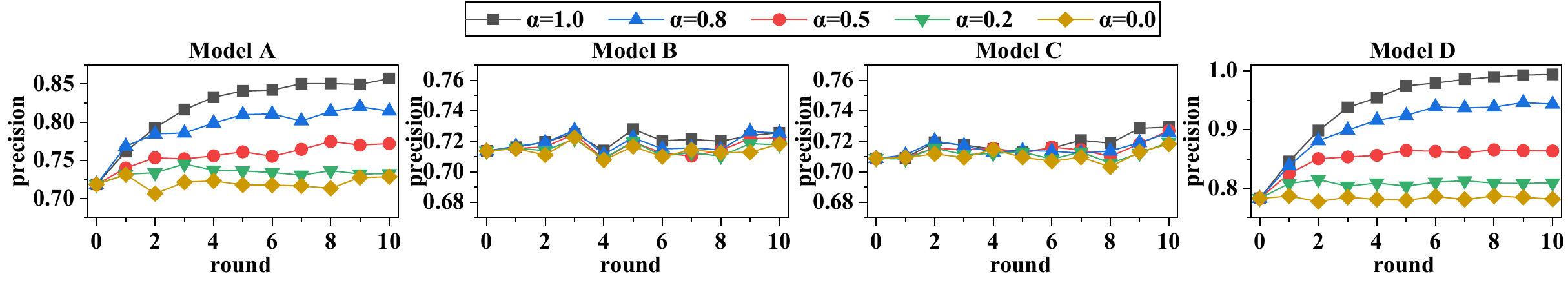}
    \end{subfigure}

    \begin{subfigure}[c]{0.08\linewidth}
        \centering
    \includegraphics[width=\linewidth]{Figure/W/W2.png}
    \end{subfigure}
    \hfill
    \begin{subfigure}[c]{0.89\linewidth}
        \centering
    \includegraphics[width=\linewidth]{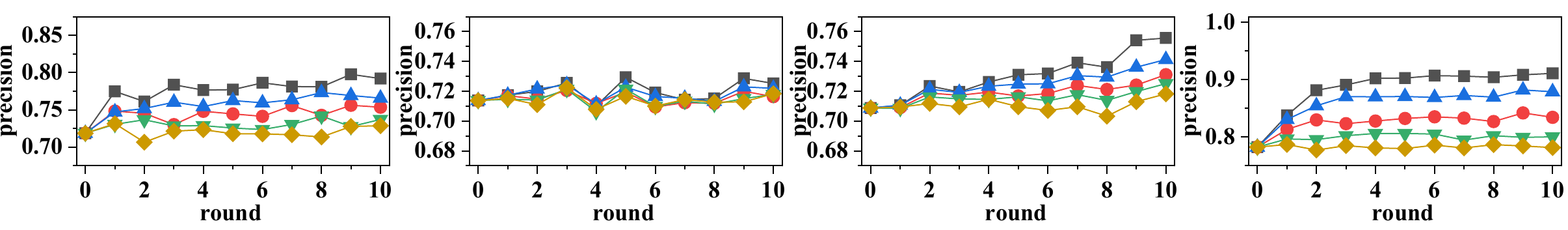}
    \end{subfigure}

    \begin{subfigure}[c]{0.08\linewidth}
        \centering
    \includegraphics[width=\linewidth]{Figure/W/W3.png}
    \end{subfigure}
    \hfill
    \begin{subfigure}[c]{0.89\linewidth}
        \centering
    \includegraphics[width=\linewidth]{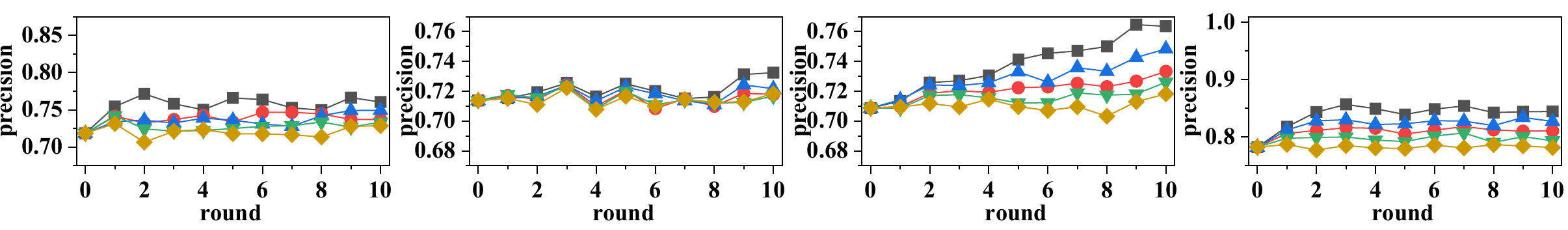}
    \end{subfigure}

    \begin{subfigure}[c]{0.08\linewidth}
        \centering
        \includegraphics[width=\linewidth]{Figure/W/HW4.png}
    \end{subfigure}
    \hfill
    \begin{subfigure}[c]{0.89\linewidth}
        \centering
        \includegraphics[width=\linewidth]{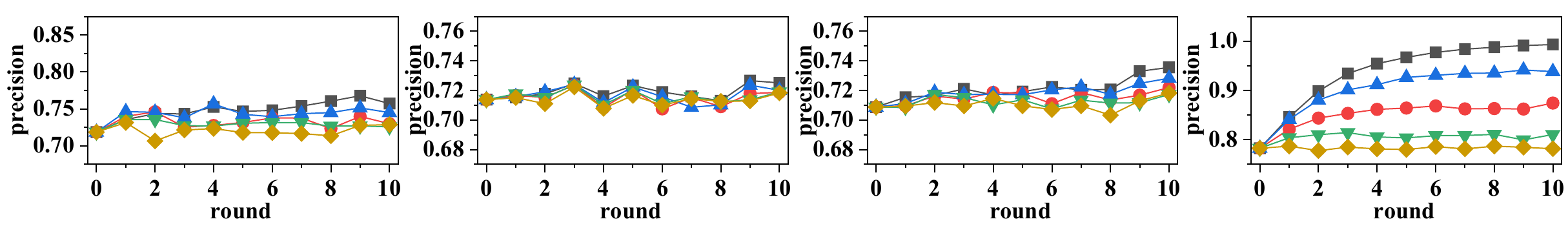}
    \end{subfigure}

    \caption{Stability experiments when all models trained on full dataset. Each row corresponds to a different interaction graph. For each row, the left figure illustrates the direction and strength of synthetic data flow, while the right figures report the precision between generated samples and real data for each model. Results are shown under different synthetic mixing strengths, with $\alpha_i=\alpha$ shared across models.}
    \label{AppFig:Homo_pre}
\end{figure}

\paragraph{Additional Results Diveristy.}
We additionally visualize generated samples at $r=10$ under $\alpha=1.0$ as shown in Fig.~\ref{Appfig:Hete_samples_1.0} and $\alpha=0.2$ as shown in Fig.~\ref{Appfig:Hete_samples_0.2}.
These qualitative results illustrate how different levels of synthetic mixing affect sample quality in practice.

\begin{figure}[htbp]
    \centering
     \begin{subfigure}[c]{0.1\linewidth}
        \centering
    \end{subfigure}
    \hfill
    \begin{subfigure}[c]{0.21\linewidth}
        \centering \textbf{Model~A}
    \end{subfigure}
    \begin{subfigure}[c]{0.21\linewidth}
        \centering \textbf{Model~B}
    \end{subfigure}
    \begin{subfigure}[c]{0.21\linewidth}
        \centering \textbf{Model~C}
    \end{subfigure}
    \begin{subfigure}[c]{0.21\linewidth}
        \centering \textbf{Model~D}
    \end{subfigure}

    \begin{subfigure}[c]{0.1\linewidth}
        \centering \textbf{Initiation}
    \end{subfigure}
    \hfill
    \begin{subfigure}[c]{0.21\linewidth}
        \centering
        \includegraphics[width=\linewidth]{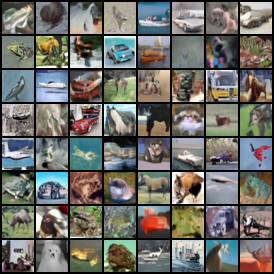}
    \end{subfigure}
    \begin{subfigure}[c]{0.21\linewidth}
        \centering
        \includegraphics[width=\linewidth]{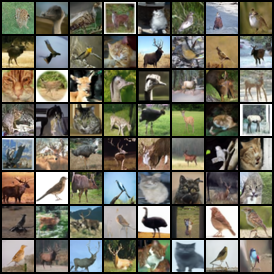}
    \end{subfigure}
    \begin{subfigure}[c]{0.21\linewidth}
        \centering
        \includegraphics[width=\linewidth]{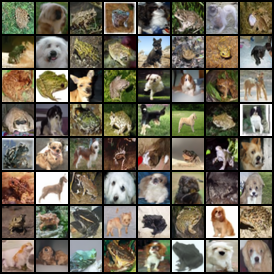}
    \end{subfigure}
    \begin{subfigure}[c]{0.21\linewidth}
        \centering
        \includegraphics[width=\linewidth]{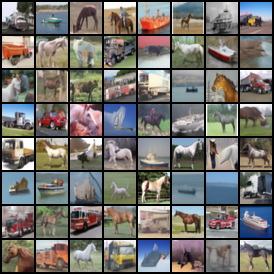}
    \end{subfigure}
    
    \begin{subfigure}[c]{0.1\linewidth}
        \centering
        \includegraphics[width=\linewidth]{Figure/W/W1.png}
    \end{subfigure}  
    \hfill
    \begin{subfigure}[c]{0.21\linewidth}
        \centering
        \includegraphics[width=\linewidth]{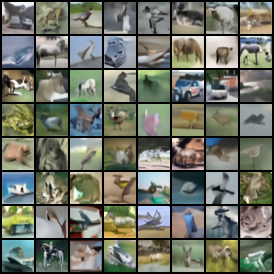}
    \end{subfigure}
    \begin{subfigure}[c]{0.21\linewidth}
        \centering
        \includegraphics[width=\linewidth]{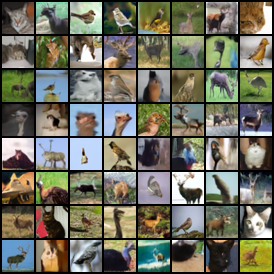}
    \end{subfigure}
    \begin{subfigure}[c]{0.21\linewidth}
        \centering
        \includegraphics[width=\linewidth]{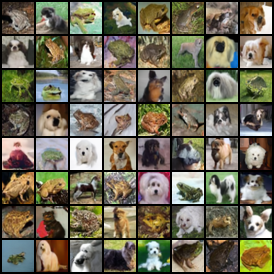}
    \end{subfigure}
    \begin{subfigure}[c]{0.21\linewidth}
        \centering
        \includegraphics[width=\linewidth]{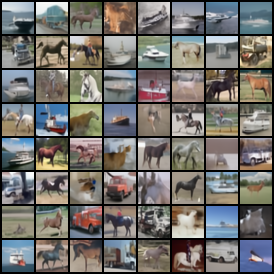}
    \end{subfigure}

    \begin{subfigure}[c]{0.1\linewidth}
        \centering
        \includegraphics[width=\linewidth]{Figure/W/W2.png}
    \end{subfigure}
    \hfill
        \begin{subfigure}[c]{0.21\linewidth}
        \centering
        \includegraphics[width=\linewidth]{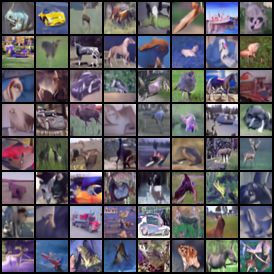}
    \end{subfigure}
    \begin{subfigure}[c]{0.21\linewidth}
        \centering
        \includegraphics[width=\linewidth]{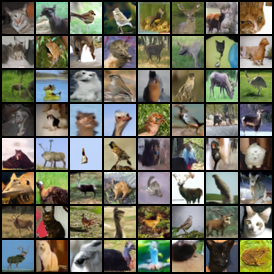}
    \end{subfigure}
    \begin{subfigure}[c]{0.21\linewidth}
        \centering
        \includegraphics[width=\linewidth]{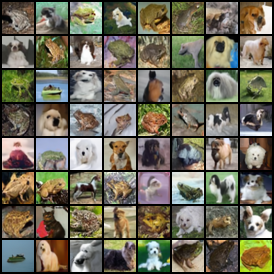}
    \end{subfigure}
    \begin{subfigure}[c]{0.21\linewidth}
        \centering
        \includegraphics[width=\linewidth]{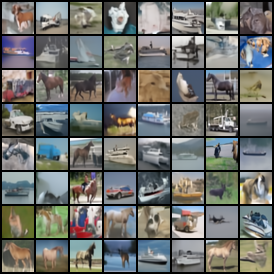}
    \end{subfigure}

    \begin{subfigure}[c]{0.1\linewidth}
        \centering
        \includegraphics[width=\linewidth]{Figure/W/W3.png}
    \end{subfigure}
    \hfill
        \begin{subfigure}[c]{0.21\linewidth}
        \centering
        \includegraphics[width=\linewidth]{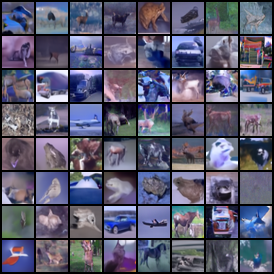}
    \end{subfigure}
    \begin{subfigure}[c]{0.21\linewidth}
        \centering
        \includegraphics[width=\linewidth]{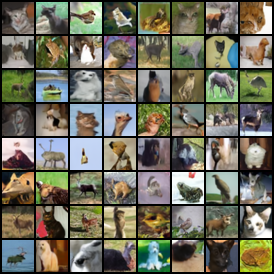}
    \end{subfigure}
    \begin{subfigure}[c]{0.21\linewidth}
        \centering
        \includegraphics[width=\linewidth]{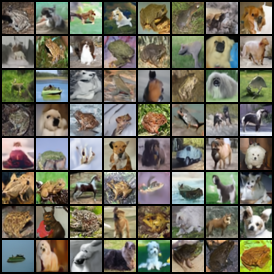}
    \end{subfigure}
    \begin{subfigure}[c]{0.21\linewidth}
        \centering
        \includegraphics[width=\linewidth]{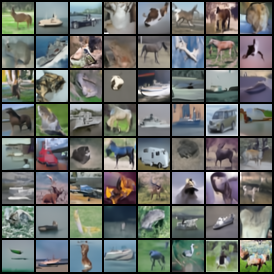}
    \end{subfigure}

    \begin{subfigure}[c]{0.1\linewidth}
        \centering
        \includegraphics[width=\linewidth]{Figure/W/HW4.png}
    \end{subfigure}
    \hfill
        \begin{subfigure}[c]{0.21\linewidth}
        \centering
        \includegraphics[width=\linewidth]{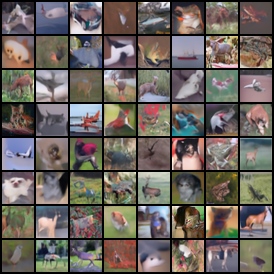}
    \end{subfigure}
    \begin{subfigure}[c]{0.21\linewidth}
        \centering
        \includegraphics[width=\linewidth]{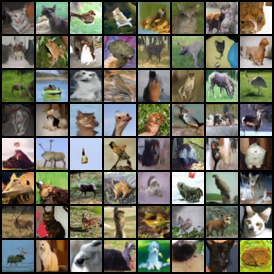}
    \end{subfigure}
    \begin{subfigure}[c]{0.21\linewidth}
        \centering
        \includegraphics[width=\linewidth]{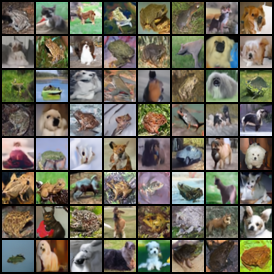}
    \end{subfigure}
    \begin{subfigure}[c]{0.21\linewidth}
        \centering
        \includegraphics[width=\linewidth]{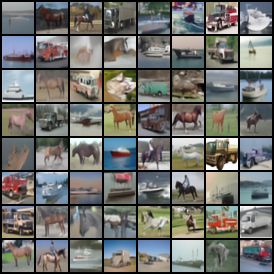}
    \end{subfigure}

    \caption{Generated samples under the heterogeneous real data setting with different interaction graphs.
All models use $\alpha_i=\alpha=1.0$. }
    \label{Appfig:Hete_samples_1.0}
\end{figure}

\begin{figure}[htbp]
    \centering
     \begin{subfigure}[c]{0.1\linewidth}
        \centering
    \end{subfigure}
    \hfill
    \begin{subfigure}[c]{0.21\linewidth}
        \centering \textbf{Model~A}
    \end{subfigure}
    \begin{subfigure}[c]{0.21\linewidth}
        \centering \textbf{Model~B}
    \end{subfigure}
    \begin{subfigure}[c]{0.21\linewidth}
        \centering \textbf{Model~C}
    \end{subfigure}
    \begin{subfigure}[c]{0.21\linewidth}
        \centering \textbf{Model~D}
    \end{subfigure}
    
        \begin{subfigure}[c]{0.1\linewidth}
        \centering \textbf{Initiation}
    \end{subfigure}
    \hfill
    \begin{subfigure}[c]{0.21\linewidth}
        \centering
        \includegraphics[width=\linewidth]{Figure/Homo/Sample/0_1.png}
    \end{subfigure}
    \begin{subfigure}[c]{0.21\linewidth}
        \centering
        \includegraphics[width=\linewidth]{Figure/Homo/Sample/0_2.png}
    \end{subfigure}
    \begin{subfigure}[c]{0.21\linewidth}
        \centering
        \includegraphics[width=\linewidth]{Figure/Homo/Sample/0_3.png}
    \end{subfigure}
    \begin{subfigure}[c]{0.21\linewidth}
        \centering
        \includegraphics[width=\linewidth]{Figure/Homo/Sample/0_4.png}
    \end{subfigure}
    
    \begin{subfigure}[c]{0.1\linewidth}
        \centering
        \includegraphics[width=\linewidth]{Figure/W/W1.png}
    \end{subfigure}  
    \hfill
    \begin{subfigure}[c]{0.21\linewidth}
        \centering
        \includegraphics[width=\linewidth]{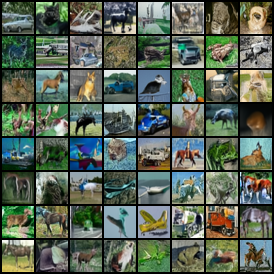}
    \end{subfigure}
    \begin{subfigure}[c]{0.21\linewidth}
        \centering
        \includegraphics[width=\linewidth]{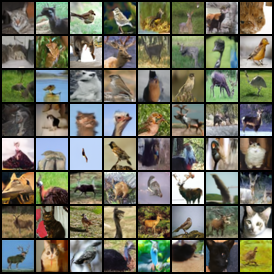}
    \end{subfigure}
    \begin{subfigure}[c]{0.21\linewidth}
        \centering
        \includegraphics[width=\linewidth]{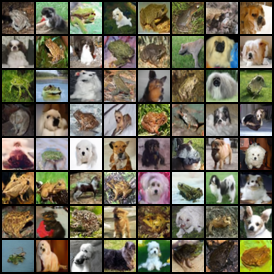}
    \end{subfigure}
    \begin{subfigure}[c]{0.21\linewidth}
        \centering
        \includegraphics[width=\linewidth]{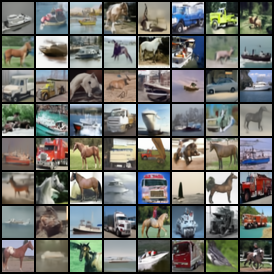}
    \end{subfigure}

    \begin{subfigure}[c]{0.1\linewidth}
        \centering
        \includegraphics[width=\linewidth]{Figure/W/W2.png}
    \end{subfigure}
    \hfill
        \begin{subfigure}[c]{0.21\linewidth}
        \centering
        \includegraphics[width=\linewidth]{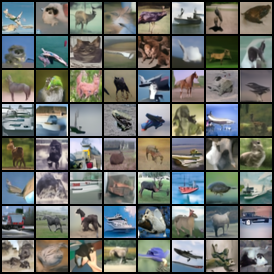}
    \end{subfigure}
    \begin{subfigure}[c]{0.21\linewidth}
        \centering
        \includegraphics[width=\linewidth]{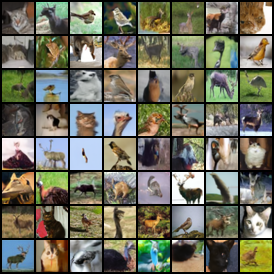}
    \end{subfigure}
    \begin{subfigure}[c]{0.21\linewidth}
        \centering
        \includegraphics[width=\linewidth]{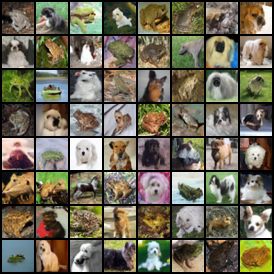}
    \end{subfigure}
    \begin{subfigure}[c]{0.21\linewidth}
        \centering
        \includegraphics[width=\linewidth]{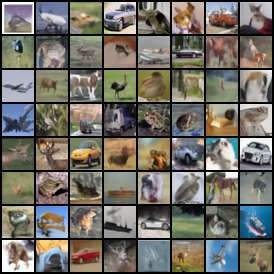}
    \end{subfigure}

    \begin{subfigure}[c]{0.1\linewidth}
        \centering
        \includegraphics[width=\linewidth]{Figure/W/W3.png}
    \end{subfigure}
    \hfill
        \begin{subfigure}[c]{0.21\linewidth}
        \centering
        \includegraphics[width=\linewidth]{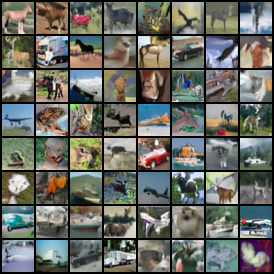}
    \end{subfigure}
    \begin{subfigure}[c]{0.21\linewidth}
        \centering
        \includegraphics[width=\linewidth]{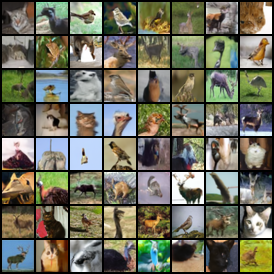}
    \end{subfigure}
    \begin{subfigure}[c]{0.21\linewidth}
        \centering
        \includegraphics[width=\linewidth]{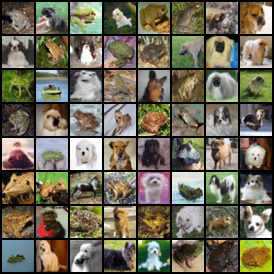}
    \end{subfigure}
    \begin{subfigure}[c]{0.21\linewidth}
        \centering
        \includegraphics[width=\linewidth]{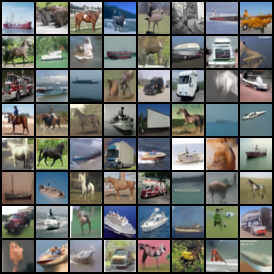}
    \end{subfigure}

    \begin{subfigure}[c]{0.1\linewidth}
        \centering
        \includegraphics[width=\linewidth]{Figure/W/HW4.png}
    \end{subfigure}
    \hfill
        \begin{subfigure}[c]{0.21\linewidth}
        \centering
        \includegraphics[width=\linewidth]{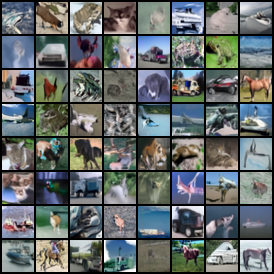}
    \end{subfigure}
    \begin{subfigure}[c]{0.21\linewidth}
        \centering
        \includegraphics[width=\linewidth]{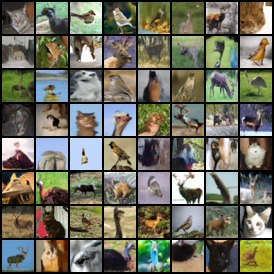}
    \end{subfigure}
    \begin{subfigure}[c]{0.21\linewidth}
        \centering
        \includegraphics[width=\linewidth]{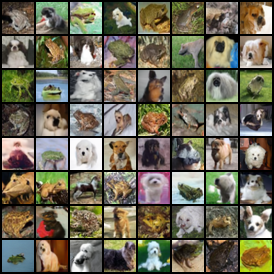}
    \end{subfigure}
    \begin{subfigure}[c]{0.21\linewidth}
        \centering
        \includegraphics[width=\linewidth]{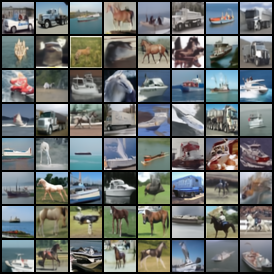}
    \end{subfigure}

    \caption{Generated samples under the heterogeneous real data setting with different interaction graphs.
All models use $\alpha_i=\alpha=0.2$.}
    \label{Appfig:Hete_samples_0.2}
\end{figure}



\end{document}